\documentclass{article}

\input{macro.sty}

\hypersetup{
  colorlinks,
  linkcolor={red!60!black},
  citecolor={blue!60!black},
  urlcolor={blue!80!black}
}

\title{High-Dimensional Asymptotics and Dataset Selection\\ for Private Transfer Learning}

\makeatletter
\renewcommand{\@fnsymbol}[1]{\@arabic{#1}}
\makeatother

\author{
Filip Kova\v{c}evi\'c\thanks{
Institute of Science and Technology Austria.\protect\\
\hspace*{3mm} $^2$CNRS Dauphine PSL.\protect\\
\hspace*{3mm} $^3$Computer Science and Engineering Department,
Chalmers University of Technology and University of Gothenburg.\protect\\
\hspace*{3mm} $^4$School of Computer Science and Applied Mathematics,
University of the Witwatersrand.\protect\\
\hspace*{4.3mm} Correspondence: \href{mailto:filip.kovacevic@ist.ac.at}\texttt{filip.kovacevic@ist.ac.at}.
}
\qquad
Edwige Cyffers$^2$
\qquad
Stefano Sarao Mannelli$^{3,4}$
\qquad
Marco Mondelli$^1$
}
\begin{document}

\maketitle

\begin{abstract}
To commit to buying external data or participate in collaborative learning, one must decide whether the additional data will improve prediction
enough to justify the cost. This comes with several challenges: \emph{(i)} the decision often relies only on aggregated statistics available publicly, rather than individual-level data; \emph{(ii)} covariate and model shifts can induce negative transfer, so the additional data deteriorates rather than improves performance; \emph{(iii)} if the data is sensitive, its privatization requires the injection of noise, which can also offset the benefit of a larger sample size. In this paper, we model the problem of dataset selection through high-dimensional regression with multiple heterogeneous sources and a weighted ridge estimator. Our approach uses only summary statistics and it gives privacy guarantees either on labels only or jointly on features and labels, in terms of $\rho$-zero-concentrated differential privacy. The main technical contribution is a deterministic equivalent of the test error, which captures the interactions between sample size, covariance structure, model shift, regularization and privacy noise. Our theory allows to optimize hyperparameters (weights and ridge regularizers) and, more broadly, to decide when private external datasets are useful without accessing the data itself but only relying on population-level quantities. This provides a theoretically tractable foundation for private transfer learning, which we support via experiments on both synthetic and real-world datasets.
\end{abstract}

\section{Introduction}
Should one buy external data or participate in collaborative learning? The answer depends on whether the extra data improves prediction 
enough to justify its cost. When handling sensitive data, this decision may rely on aggregate statistics available before access to individual records. For example, the UK Biobank Data Showcase provides public summaries to assess relevance before applying for access \citep{Sudlow2015} and the US Census Bureau releases statistics aggregated at county or state levels~\citep{USCensus, Abowd2022}. Collaboration between hospitals raises the same question for heterogeneous, sensitive datasets 
\citep{ogier2022flamby}: are the efforts and risks worth the accuracy gain?

Indeed, incorporating source data does not necessarily improve
target prediction.
Differences in feature distributions
(covariate shift,
\citep{sugiyama2007covariate,ma2023optimally})
and in the relationship between features and labels
(model shift,
\citep{yang2025precise,li2022transfer})
can limit the benefits of transfer and, depending on how source
data are incorporated, lead to negative transfer
\citep{pan2009survey,wang2019characterizing}.
To understand when transfer remains beneficial despite these shifts,
recent work has developed precise characterizations of the prediction
risk for high-dimensional linear regression
\citep{song2024generalization,yang2025precise}.


Differential privacy adds another drawback: the noise
injected to protect individuals can offset the benefit of
additional samples \citep{dwork2014algorithmic}. Thus, 
a rich line of theoretical work has quantified
privacy-accuracy trade-offs in empirical risk minimization
\citep{chaudhuri2011differentially,bassily2014private}
and stochastic convex optimization \citep{bassily2019private},
alongside minimax guarantees for statistical estimation
\citep{duchi2018minimax,cai2021cost}.
In collaborative settings, unequal privacy noise levels across
sources affect how their contributions should be combined
\citep{liu2021projected}, and recent work has characterized the accuracy
achievable under privacy constraints in federated transfer learning
with heterogeneous sources \citep{li2024federated}. We also note that, although estimating the gains from collaboration has been widely studied in federated learning \citep{mcmahan2017communication,sattler2020clustered}, the cost of participation motivates assessing the value of collaboration \emph{before} committing. 
A central question therefore remains: 

\begin{quote}
\emph{Can we predict whether and by how much heterogeneous private data will
improve target prediction, before processing the data itself?}
\end{quote}

We address this question for a high-dimensional linear model by analyzing weighted ridge regression with multiple heterogeneous and differentially private sources. This approach uses only summary statistics and it allows both model and covariance shift. Our main result is a deterministic equivalent for the risk which provides a principled basis to select sources, having access only to population-level quantities (as for, e.g., the US Census~\citep{USCensus, Abowd2022} or the UK Biobank \citep{Sudlow2015}). More precisely, we summarize our contributions below:
\begin{itemize}[itemsep=2pt,leftmargin=*]
    \item \textbf{Deterministic equivalent for the risk.} In the regime where sample sizes and feature dimension scale proportionally, we provide a precise characterization for the bias and variance of the weighted ridge estimator under label privatization (Theorem~\ref{thm:main-det-eq}), and extend the analysis to joint feature and label privatization for Gaussian data (Corollary~\ref{cor:C-client-xy-private}). The characterizations depend only on population quantities and they quantitatively capture covariate shift, model shift, regularization, and privacy noise.
    \item \textbf{Optimal tuning.} For a Gaussian prior, we identify source weights and ridge penalties that minimize the risk and show that the resulting ridge estimator is Bayes-optimal (Proposition~\ref{prop:bayes-opt-param}). 
    \item \textbf{Risk prediction and dataset selection in real-world scenarios.} We demonstrate the validity of our analysis beyond linear models. On three real-world datasets, we compare deterministic equivalent predictions with empirical changes in target prediction risk as auxiliary data is added (Figure~\ref{fig:exact_fit}), as well as the effectiveness of the resulting selection of datasets (Figure~\ref{fig:source_selection}). 
\end{itemize}

\section{Related work}

\paragraph{Transfer learning under covariate and model shifts.}
The usefulness of auxiliary data depends on differences between source and target distributions and on how information is shared across tasks. Statistical approaches provide estimation and prediction guarantees, together with methods for identifying and exploiting informative sources in high-dimensional regression \citep{li2022transfer,tian2023transfer}. The benefits of sharing information also depend on task similarity, data geometry, and the representations learned across tasks, as studied through joint training and representation transfer \citep{wu2020understanding,dhifallah2021phase,sarao2025bias,gerace2022probing,ingrosso2025statistical}. Related work examines how regularization and minimum-norm interpolation affect prediction under distribution shift \citep{patil2024optimal,mallinar2024minimumnorm}. Closer to our setting, precise high-dimensional risk analyses show that adding source data can worsen target prediction, with positive and negative transfer depending on distribution shifts, noise levels, and sample sizes \citep{yang2025precise,song2024generalization}. Source weighting and regularization can improve prediction by accounting for task relatedness and noise \citep{zhang2023transfer,ilbert2024analysing}. Similar questions arise when combining real and surrogate data or selecting data under weak supervision, where risk analyses inform data weighting and selection \citep{jain2024scaling,kolossov2024towards,rezaei2026high}. We highlight that none of these analyses address source-specific privacy constraints. We instead characterise how privatising labels, or both features and labels, changes target prediction risk under covariate and model shifts.

\paragraph{Data acquisition and differential privacy.}
Privacy-preserving methods have long addressed how to provide sufficient protection while retaining the benefits of collaboration, through the pricing of noisy query answers \citep{pricingprivatedata} and data acquisition more generally. \citet{Fallah2022OptimalAD,Cummings2022OptimalDA} for instance examine the interplay between privacy costs and statistical accuracy. Privacy can undermine the benefits of collaboration, motivating the need for decision rules on whether or not joining forces, which has been studied from a federated learning perspective \citep{pmlr-v238-tsoy24a, mcmahan2017communication, sattler2020clustered, 10992542}. Another related line of work combines public and private data, showing how a large private dataset can complement a small number of observations that do not require differential privacy protection \citep{publicprivate19,NEURIPS2022_765ec499,NEURIPS2024_243697ac,Block2024OracleEfficientDP}. These approaches focus on how to combine the available data, rather than predicting whether to collaborate, and often assume the same data distribution or have no theoretical guarantees for optimization \citep{NEURIPS2019_3c88c1db}. We instead predict the value of heterogeneous private datasets, accounting for covariate and model shifts as well as privacy noise. In collaborative settings, aggregate feature statistics are sometimes publicly available. When the features themselves are public and only the labels are sensitive, label differential privacy provides an appropriate model \citep{NEURIPS2021_e3a54649}. This setting has also been studied for regression: \citet{ghazi2023regression} derive optimal label-randomization mechanisms for a given prior distribution over labels and regression loss. We use this setting in Section~\ref{sec:labeldpresult} before considering the privatization of both features and labels in Section~\ref{subsec:xy-private}.

\vspace{-.4em}

\section{Preliminaries}
\label{sec:preliminaries}

\vspace{-.4em}

\paragraph{Linear model for transfer learning.}
We consider a linear regression model with $C$ clients, where $C\geq 2$ does not grow with the dimension.  Client
$c\in \cC \coloneqq \{1,\ldots,C\}$ observes a local dataset $(X_c,y_c)$, where the design matrix
$X_c\in\bbR^{n_c\times d}$ and the observation $y_c\in\bbR^{n_c}$ are given by
\begin{equation*}
    y_c=X_c\beta_c+\varepsilon_c,
\end{equation*}
where the entries of $\varepsilon_c$ are independent (between themselves and w.r.t.\ $\{(X_c, \beta_c)_{c=1}^C\}$), have zero mean, variance $\sigma_c^2$ and uniformly bounded $8$-th moments. The regression vectors $\beta_c\in\bbR^d$ and the feature distributions may vary across clients. In particular,
\begin{equation*}
    X_c=Z_c\Sigma_c^{1/2},
\end{equation*}
where $\Sigma_c\in\bbR^{d\times d}$ is the deterministic population covariance of client
$c$, and the entries of $Z_c\in\bbR^{n_c\times d}$ are random. 
%
Without loss of generality, client $1$ is the target client, and the prediction performance is evaluated on new observations drawn from the same distribution as client $1$. The remaining
clients provide auxiliary data that can be shared only in a privatized way, and they form the fixed set $\cA\coloneqq\{2,\ldots,C\}$. 
We simplify the model shift notation by introducing
\begin{equation*}
    \beta\coloneqq\beta_1, \qquad \delta_c\coloneqq\beta_c-\beta,\ c\in\cA, \qquad \theta\coloneqq \matrix{\beta&\delta_2&\hdots&\delta_C}^\top
    \in\bbR^{Cd}.
\end{equation*}

\paragraph{Privatizing data.} Differential privacy (DP) relies on the concept of adjacent datasets: two datasets $D$ and $D'$ are adjacent, written $D\sim D'$, 
if one is obtained from the other by replacing a single input-label pair. We quantify DP via $\rho$-zero-concentrated differential privacy ($\rho$-zCDP) which uses the
$\alpha$-Rényi divergence to measure the distinguishability between two probability distributions $P\ll Q$: 
\[
  D_\alpha(P\|Q) :=\frac{1}{\alpha-1} \log\int\left(\frac{\mathrm dP}{\mathrm dQ}\right)^\alpha \mathrm dQ, \qquad \alpha>1.
\]

\begin{defn}[Zero-concentrated differential privacy
  \citep{bunConcentratedDifferentialPrivacy2016}]
  \label{def:zcdp}
  A randomized algorithm $\cM$ satisfies $\rho$-zero-concentrated differential privacy ($\rho$-zCDP) if, for every pair of adjacent datasets $D, D'$ and every $\alpha>1$,
  \[
    D_\alpha\rbr{\cM(D)\|\cM(D')}\leq\rho\alpha.
  \]
\end{defn}
Guarantees for $\rho$-zCDP can be translated to 
the standard notion of $(\epsilon,\delta)$-differential privacy 
by taking 
$\epsilon=\rho+2\sqrt{\rho\log(1/\delta)}$ \citep{bunConcentratedDifferentialPrivacy2016}. We achieve $\rho$-zCDP via the Gaussian mechanism \citep{dwork2014algorithmic} and provide the following two different privatization guarantees.

\emph{Label privatization:}  We privatize only the labels $y_c$ by releasing 
\begin{equation}
\label{eq:C-client-label-private}
    \tilde y_c= \mathrm{Clip}_{K_{y,c}}(y_c)+\xi_c, \qquad \xi_c\sim\cN(0,\sigma_{y,\mathrm{DP},c}^2 I_{n_c}), \qquad \sydpc^2 = \frac{\min(\Delta_{y,c}^2, K_{y, c}^2)}{2\rho_{y,c}}.
\end{equation}
Here, $\mathrm{Clip}_{K_{y,c}}(x) = x\min\{1,K_{y,c}/(2\abs{x})\}$ is the entry-wise clipping function with a constant $K_{y,c}$ that could depend on $n_c$ and $d$, and $\Delta_{y,c} = \max_{y_c'\sim y_c''}\norm{y_c' - y_c''}_2$ denotes the $l_2$-sensitivity of the labels of client $c$. The release in \eqref{eq:C-client-label-private} satisfies $\rho_{y,c}$-zCDP \citep{bunConcentratedDifferentialPrivacy2016}.


\emph{Feature and label privatization:} We privatize both features $X_c$ and labels $y_c$
by releasing
\begin{equation}\label{eq:C-client-feature-label-private}
    \tilde X_c=\mathrm{Clip}_{K_{X,c}}(X_c)+\Xi_c, \qquad \tilde y_c=\mathrm{Clip}_{K_{y,c}}(y_c)+\xi_c,
\end{equation}
where $\mathrm{Clip}_{K_{X,c}}(v) = v\min\{1,\sqrt{d}K_{X,c}/(2\norm{v}_2)\}$ is the row-wise clipping function with a constant $K_{X,c}$ that could depend on $n_c$ and $d$, entries of $\Xi_c$ are i.i.d.\ $\sim \cN(0,\sxdpc^2)$, $\sxdpc^2 = \min(\Delta_{X,c}^2, d K_{X,c}^2)/2\rho_{X,c}$,
$\xi_c$ is in \eqref{eq:C-client-label-private}, and $\Delta_{X,c}=\max_{X_c'\sim X_c''}\norm{X_c' - X_c''}_2$ is the sensitivity of the features of client $c$.
The release in \eqref{eq:C-client-feature-label-private} satisfies $(\rho_{X,c}+\rho_{y,c})$-zCDP \citep{bunConcentratedDifferentialPrivacy2016}.
All noises ($\{\varepsilon_c\}_{c\in\cC}$, $\{\xi_c\}_{c\in\cA}$,
$\{\Xi_c\}_{c\in\cA}$) are independent among themselves and of $(\theta,X_1,\ldots,X_C)$.

\paragraph{Assumptions.} \label{eq:aa} 

We consider the asymptotic regime where $d,n_1,\dots, n_C\to \infty$ and set 
$n\coloneqq\sum_{c=1}^C n_c,\ \pi_c\coloneqq\frac{n_c}{n}$, and $\gamma_c\coloneqq\frac{d}{n_c}$. We assume an underparameterized regime \myassump{ass:C-client-asymptotics}, so that 
  $\pi_c\to\bar\pi_c\in(0,1)$ and 
  $\gamma_c\to\bar\gamma_c\in(0,1)$ for $c\in\{1,\ldots,C\}$.
We further assume almost surely $\sup_{c,n,d}\norm{\beta_c}_2<\infty$ and all entries of $Z_c$ to be independent (between themselves and w.r.t.\ $\{\beta_1, \ldots, \beta_C\}$), identically distributed within each client, zero-mean, unit-variance and having uniformly bounded $8$-th moment \myassump{ass:z-moment-bounds}. 
The population covariances are uniformly well-conditioned  \myassump{ass:C-client-cov-bounds}, i.e., %
    $\omega_\Sigma I_d\preceq\Sigma_c\preceq \alpha_\Sigma I_d$ uniformly in $c$ and $d$, for constants $0<\omega_\Sigma<\alpha_\Sigma<\infty$. 

We assume that noise levels, $\sigma_c^2$, $\sydpc^2$ and $\sxdpc^2$, are constants independent of $d, n$ \myassump{ass:noise_const} and that clipping does not occur almost surely for large enough $n$ \myassump{ass:priv_budg_large}. This last assumption is technical and we defer the analysis of clipping to future work, see Section \ref{sec:conc}. We note that this covers several practical settings: if the data are normalized so that the entries of $X_c, y_c$ are bounded, then we can take $K_{y, c}, K_{X, c}$ of constant order and \eqref{eq:C-client-label-private}-\eqref{eq:C-client-feature-label-private} hold with $\rho_{y, c}$ of constant order and $\rho_{X, c}$ of order $d$; if the data has sub-Gaussian tails, then $K_{y, c}, K_{X, c}$ are of order $\sqrt{\log d}$, which adds a logarithmic factor to $\rho_{y, c}, \rho_{X, c}$ as well (see Appendix \ref{app:aux-clipping} for details).  Whereas taking $\rho_{y, c}$ of
constant order is standard in privacy, letting $\rho_{X, c}$ grow with $d$ is due to the fact that features are $d$-dimensional:
$\rho_{X, c}$ of order $d$ corresponds to a noise of constant order per each coordinate. 
Moreover, we adopt a strong privacy model, for which large budgets are known to
be needed \citep{Bhowmick2018ProtectionAR,Cyffers2025SettingI}, and recent evidence indicates that the
budget can grow with the dimension while protecting from reconstruction attacks
\citep{pmlr-v258-liu25b,3666122.3669564, pmlr-v162-guo22c}.



\paragraph{Weighted ridge estimator.}
For label privatization, let $\Xeff_c=X_c$ for every
$c\in\cA$.  For feature and label privatization, $\Xeff_c$ is the conditionally denoised design defined in Section~\ref{subsec:xy-private}.  In both cases, set
$\Xeff_1=X_1$ and $\tilde y_1=y_1$. 
We consider the weighted joint ridge estimator
\begin{equation}
\label{eq:delta-estimator}
    (\betahat,\deltahat_2,\ldots,\deltahat_C) \!=\!\argmin_{\hat{b},\hat h_2,\ldots,\hat h_C} \!\cbr{\!\frac{\norm{y_1\!-\!X_1\hat b}_2^2}{n}\! +\!\sum_{c\in\cA} \!w_c\!\frac{\norm{\tilde y_c\!-\!X_c^{\mathrm{eff}}(\hat b\!+\!\hat h_c)}_2^2}{n} \!+ \!\lambda_\beta\norm{\hat b}_2^2\! +\! \sum_{c\in\cA}\!\lambda_{\delta_c}\norm{\hat h_c}_2^2\!},
\end{equation}
where $\cbr{w_c}_{c=2}^C \eqcolon \mathbf{w}$ are the weights, $\lambda_\beta\geq 0$ the ridge corresponding to the target $\beta$, and $\cbr{\lambda_{\delta_c}}_{c=2}^C\eqcolon \boldsymbol{\lambda}_\delta\succeq 0$ the ridges corresponding to model shift. 

For $c\in \{1,\ldots,C\}$, we denote
the empirical covariance and cross-correlation terms as 
\begin{equation}\label{eq:emp_cov}
    \hat\Sigma_c\coloneqq (X_c^{\mathrm{eff}})^\top X_c^{\mathrm{eff}}/n,\qquad \vhat_c\coloneqq(X_c^{\mathrm{eff}})^\top\tilde y_c/n.
\end{equation}
Next, we define the deterministic block-selection matrices
$L_c\in\bbR^{d\times Cd}$ as
$L_1\theta=\beta$ and $L_c\theta=\beta+\delta_c$, for $c\in\cA$. Equivalently, $L_1=\matrix{I_d&0&\ldots&0}$, while $L_c$ has an $I_d$ in the
$\beta$ block and an $I_d$ in the $\delta_c$ block.  Hence, the
model for client $c$ is $$y_c=X_cL_c\theta+\varepsilon_c.$$ Since it will be useful to view the problem as a regression in $Cd$ dimensions, we introduce the joint estimator $\thetahat \coloneqq \matrix{\betahat^\top &\deltahat_2^\top& \hdots &\deltahat_C^\top}^\top$, as well as  empirical matrices and vectors
\[
    \Shat_c\coloneqq L_c^\top\hat\Sigma_cL_c \in \bbR^{Cd\times Cd}, \qquad \Uhat_c\coloneqq L_c^\top \vhat_c\in \bbR^{Cd}.
\]
Let $P_\beta$ be the projection onto the first block and
$P_{\delta_c}$ the projection onto the $\delta_c$ block.  Then, we can express the joint ridge term via
$\Lambda \coloneqq \lambda_\beta P_\beta + \sum_{c\in\cA}\lambda_{\delta_c}P_{\delta_c} = \operatorname{diag} \rbr{ \lambda_\beta I_d, \lambda_{\delta_2}I_d, \ldots, \lambda_{\delta_C}I_d}$.
With $\Shat\coloneqq\Shat_1+\sum_{c\in\cA}w_c\Shat_c,\ \Uhat\coloneqq\Uhat_1+\sum_{c\in\cA}w_c\Uhat_c,\ \Qhat\coloneqq(\Shat+\Lambda)^{-1}$,
we obtain 
\begin{equation}\label{eq:theta_closed_form}
    \hat\theta=\Qhat\Uhat.
\end{equation}

The formulation in \eqref{eq:delta-estimator} contains the main estimators considered throughout.
For example, setting $w_c=0$ corresponds to a regression setting without the auxiliary client $c$. Similarly, setting
$\lambda_{\delta_c}=0$ makes the contribution of client $c$ independent of
$b$ after minimizing over $h_c$, so that client $c$ does not affect
$\hat\beta$. On the other hand, sending $\lambda_{\delta_c}\to\infty$ forces
$\hat\delta_c\to0$ and pools client $c$ with the target model.  In particular,
if every $\lambda_{\delta_c}\to\infty$, then the estimator in \eqref{eq:delta-estimator} reduces to the weighted pooled ridge estimator considered in  \cite{song2024generalization,yang2025precise,rezaei2026high}, which ignores model shift and admits the closed-form solution 
\[
    \hat\beta \longrightarrow\rbr{\hat\Sigma_1+\sum_{c\in\cA}w_c\hat\Sigma_c + \lambda_\beta I_d}^{-1} \rbr{v_1+\sum_{c\in\cA}w_c\vhat_c}.
\]
Finally, ridgeless estimators are obtained by taking the corresponding
ridge parameters to zero.

\paragraph{Prediction risk.}
We evaluate all estimators on the distribution of the target client. If $x_1$ is an independent target covariate, then the conditional out-of-sample excess risk is
\begin{equation}
\label{eq:C-client-risk}
\begin{split}
    R(\hat\beta;\beta) \coloneqq
    \bbE\sbr{(x_1^\top\hat\beta-x_1^\top\beta)^2\,\middle|\,\Xeff,\theta}
    =\bbE\sbr{\norm{\thetahat-\theta}_{S_1}^2 \,\middle|\, \Xeff,\theta}= B(\thetahat;\theta)+V(\thetahat;\theta),
\end{split}
\end{equation}
with $\Xeff\!=\!(\Xeff_1\ldots\Xeff_C)$ and $B(\thetahat;\theta)\!:= \!\norm{\bbE[\thetahat \mid \Xeff,\theta] - \theta}_{S_1}^2$, $V(\thetahat;\theta)\! := \!\Tr[\mathrm{Cov}(\thetahat\mid\Xeff,\theta) S_1]$ denoting bias and variance of the risk. The expectation is taken over the noise in the data and the private noise. 
Define the augmented population covariance $
S_c\coloneqq L_c^\top\Sigma_cL_c\in\bbR^{Cd\times Cd}$,
and the total noise level of the privatized client $\stc^2 \coloneqq \sigma_c^2 + \sydpc^2$. Throughout the rest of the analysis, $B$, $V$, and $R$
refer to the estimator with clipping omitted. Then, under label privatization,
the following exact expressions (proved in Appendix \ref{app:aux-bias-variance}) hold
\begin{equation}
\label{eq:biasvarclosed}
\begin{split}
    B(\hat\theta;\theta) =\theta^\top\Lambda\Qhat S_1\Qhat\Lambda\theta, \qquad V(\hat\theta;\theta)=\frac1n\Tr\sbr{S_1\Qhat\rbr{\sigma_1^2\Shat_1+\sum_{c\in\cA}w_c^2\stc^2\Shat_c}\Qhat}.
\end{split}
\end{equation}

\vspace{-1em}

\section{Theoretical results}
\label{sec:theory}

\vspace{-.25em}

We derive asymptotic characterizations of the bias and variance in \eqref{eq:biasvarclosed}, under both label privatization (Section \ref{sec:labeldpresult}) and joint feature and label privatization (Section \ref{subsec:xy-private}), and we characterize the optimal tuning parameters of the estimator in \eqref{eq:delta-estimator}. We start with an additional assumption \myassump{ass:indep_signal}: 
\begin{equation}
\label{eq:C-client-random-signal}
    \beta=\frac{r_\beta}{\sqrt d}\,u_{\beta}, \qquad \delta_c=\frac{r_{\delta_c}}{\sqrt d}\,u_{\delta_c},
    \quad c\in\cA,
\end{equation}
where $u_\beta,\ u_{\delta_2}, \ldots, u_{\delta_C}$ have independent, centered, unit-variance coordinates with a
uniformly bounded $8$-th moment and $r_\beta, r_{\delta_c}>0$ are constants independent of $n, d$.  
This represents an average-case setting in which no feature direction is singled out a priori, and it yields a characterization depending on the magnitudes of target signal and model shifts, rather than their alignment. Let us further define $\tau_\beta\coloneqq nr_\beta^2/d,\ \tau_{\delta_c}\coloneqq nr_{\delta_c}^2/d$. We will also assume that there exist constants $w_{-},w_{+}$ such that $0<w_{-}<w_c<w_{+}<+\infty$ for all $c\in \cA$. 


\vspace{-.25em}

\subsection{Label privatization}
\label{sec:labeldpresult}

\vspace{-.25em}

\paragraph{General deterministic equivalent.} We start by privatizing only labels via \eqref{eq:C-client-label-private}. We introduce the total noise terms $s_1^2\coloneqq\sigma_1^2$ and $s_c^2 \coloneqq \sigma_c^2+\sydpc^2$, $c\in \cA$. 
Without loss of generality, we consider $w_1 =1$ and define $\boldsymbol{\pi} = (\pi_c)_{c=1}^C$ and $\mathbf{\Sigma}\coloneqq 
\{\Sigma_c\}_{c=1}^C$. Let $\bar Q=\bar Q(\mathbf w,\lambda_\beta,
\boldsymbol\lambda_\delta,\boldsymbol{\pi},\mathbf{\Sigma})$ be the unique positive-definite solution of 
\begin{equation}
\label{eq:delta-general-block-de}
    \bar Q = \sbr{\Lambda + \sum_{c=1}^C\frac{\pi_cw_cS_c}{1+e_c}}^{-1},
\end{equation}
where $e_c = e_c (\mathbf w,\lambda_\beta,
\boldsymbol\lambda_\delta,\boldsymbol{\pi},\mathbf{\Sigma})$ are the $C$ fixed point coefficients satisfying
\begin{equation}\label{eq:delta-general-fixed-point}
    e_c=\frac{w_c}{n}\Tr(S_c\bar Q), \qquad
    c\in\{1,\ldots,C\}.
\end{equation}
Moreover, we define the matrix $J\in \bbR^{C\times C}$, the vectors $b,g\in\R^C$ and the matrix $\cK\in \bbR^{Cd\times Cd}$ as
\begin{equation}\label{eq:J_b_g_def}
    J_{ck} =\frac{\pi_kw_cw_k}{n(1+e_k)^2}\Tr(S_c\bar{Q} S_k\bar{Q}),\quad b_c = \frac{w_c}{n}\Tr(S_c\bar{Q} S_1\bar{Q}),\ c,k\in \cC,\quad g = (I_C - J)^{-1}b.
\end{equation}
\begin{equation}\label{eq:ck_def}
     \cK=\bar{Q}\sbr{S_1+\sum_{c=1}^C\frac{\pi_cw_cg_c}{(1+e_c)^2}S_c}\bar{Q}.
\end{equation}
%
At this point, we can state our result which establishes a deterministic equivalent of the bias and variance terms, depending on signal parameters $\tau_\beta,\tau_{\delta_c}$, alignment of $\beta_c$'s weighted by the covariance, sample complexities $\boldsymbol{\pi}$, covariance structures $\mathbf{\Sigma}$, intrinsic noise levels $\{\sigma_c^2\}_{c=1}^C$, weights $\{w_c\}_{c=2}^C$, ridge parameters $\lambda_\beta,\ \{\lambda_{\delta_c}\}_{c=2}^C$, 
and privacy noise intensities $\{\sydpc^2\}_{c=2}^C$. 

\begin{theorem}[Deterministic equivalent for label privatization]
\label{thm:main-det-eq}
Under the Assumptions (\ref{ass:C-client-asymptotics}) - (\ref{ass:priv_budg_large}), 
it holds, almost surely, that 
\begin{equation}\label{eq:main_conv}
    \lim_{n\to \infty}\abs{B(\thetahat,\theta)-\cB} = 0, \qquad  \lim_{n\to \infty}\abs{V(\thetahat,\theta)-\cV} = 0,
\end{equation}
where 
\vspace{-.3em}
\begin{equation}\label{eq:C-client-bias-variance-DE}
    \cB =\theta^\top \Lambda \cK \Lambda \theta,\qquad  \cV = \sum_{c=1}^C \frac{\pi_cw_cg_cs_c^2}{(1+e_c)^2}.
\end{equation}
If we additionally assume that \eqref{ass:indep_signal} holds, then \eqref{eq:main_conv} holds with $\cV$ as in \eqref{eq:C-client-bias-variance-DE} and 
\vspace{-.3em}
\begin{equation}\label{eq:bias_simpler}
    \cB = \frac{\tau_\beta\lambda_\beta^2}{n} \Tr(P_\beta \cK)+\sum_{c\in\cA} \frac{\tau_{\delta_c}\lambda_{\delta_c}^2}{n}\Tr(P_{\delta_c}\cK).
\end{equation}
\end{theorem}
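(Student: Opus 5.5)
The strategy is to treat $\Shat=\sum_{c}w_c\Shat_c$ as a sum of $C$ independent blocks of sample covariances in the lifted $Cd$-dimensional space. Each block has the form $\Shat_c=\frac1n L_c^\top\Sigma_c^{1/2}Z_c^\top Z_c\Sigma_c^{1/2}L_c$, so it is a Gram matrix of $n_c$ independent rows $L_c^\top\Sigma_c^{1/2}z_{c,i}$ with population covariance $S_c$. We first establish an anisotropic deterministic equivalent for the resolvent $\Qhat=(\Shat+\Lambda)^{-1}$: for all deterministic test matrices $A$ of bounded operator norm, $\frac1n\Tr(A(\Qhat-\bar Q))\to0$ almost surely.

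The argument is the standard leave-one-out and Sherman--Morrison scheme. For each row $x=L_c^\top\Sigma_c^{1/2}z$, one writes $\Qhat x=\Qhat_{-}x/(1+\frac{w_c}{n}x^\top\Qhat_{-}x)$. The quadratic form $\frac{w_c}{n}x^\top\Qhat_- x$ concentrates around $\frac{w_c}{n}\Tr(S_c\Qhat_-)\approx e_c$; this uses the bounded 8th moments and the Hanson--Wright/trace lemma. Summing over rows, the per-client count $n_c=\pi_c n$ produces the factor $\pi_c w_c S_c/(1+e_c)$, which yields \eqref{eq:delta-general-block-de}. Existence and uniqueness of the fixed point $(e_c)$ follow from a contraction argument in the usual way. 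Positivity of $\Lambda$ on the $\delta$ blocks, or invertibility of $\Shat$ in the underparameterized regime (\ref{ass:C-client-asymptotics}) together with (\ref{ass:C-client-cov-bounds}), keeps $\|\Qhat\|$ and $\|\bar Q\|$ bounded. When some $\lambda$ vanishes, a small extra argument is needed: $\sum_c L_c^\top\Sigma_cL_c\succeq cI$ because the $L_c$ jointly span $\bbR^{Cd}$, and smallest-singular-value bounds for $Z_c$ with $\bar\gamma_c<1$ then give $\lambda_{\min}(\Shat)$ bounded away from zero almost surely. Almost sure convergence follows from martingale/Efron--Stein variance bounds of order $n^{-2}$ (using the 8th moments) and Borel--Cantelli.

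The main obstacle is the second-order equivalent, since both $\cB$ and $\cV$ involve $\Qhat S_1\Qhat$ rather than $\Qhat$ alone. The variance in \eqref{eq:biasvarclosed} also contains $\Qhat\Shat_c\Qhat$. To handle this, consider the perturbed resolvent $\Qhat(t)=(\Shat+\Lambda-tS_1)^{-1}$ and differentiate at $t=0$. This gives $\Qhat S_1\Qhat=\partial_t\Qhat(t)|_{0}$, whose deterministic equivalent is $\partial_t\bar Q(t)$ with the fixed point $e_c(t)$. Differentiating \eqref{eq:delta-general-fixed-point} gives the linear system $e_c'=\frac{w_c}{n}\Tr(S_c\bar Q(S_1+\sum_k\frac{\pi_kw_ke_k'}{(1+e_k)^2}S_k)\bar Q)$, that is, $e'=b+Je'$, so $e'=g$ and $\partial_t\bar Q=\cK$. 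Analyticity and uniform boundedness in $t$ near $0$, together with Vitali's theorem, transfer the convergence from the resolvent to its derivative. This requires $\rho(J)<1$, which I would prove by comparison with the analogous empirical Jacobian and the bound $\frac{w_c\Tr(S_c\bar Q)}{n(1+e_c)}<1$. Applied with $A=\Lambda\theta\theta^\top\Lambda$, which has bounded trace norm$/n$ scaling since $\|\theta\|$ is bounded, this gives $\cB=\theta^\top\Lambda\cK\Lambda\theta$ in the sense of an isotropic equivalent. Proving that isotropic version (a quadratic form rather than a normalized trace) needs the same leave-one-out bounds applied to bilinear forms.

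For the variance, we rewrite $\Qhat(\sigma_1^2\Shat_1+\sum_c w_c^2 s_c^2\Shat_c)\Qhat$ as $\sum_c w_c s_c^2\,\Qhat(w_c\Shat_c)\Qhat$ (with $w_1=1$). We then use $\Qhat x x^\top\Qhat=\Qhat_-xx^\top\Qhat_-/(1+\cdot)^2$ row-wise, which gives $\frac1n\Tr(S_1\Qhat w_c\Shat_c\Qhat)\approx\frac{\pi_c w_c}{(1+e_c)^2}\cdot\frac{w_c}{n}\Tr(S_c\bar QS_1\bar Q)$ plus correction terms. Equivalently, this equals $-\partial_{s}$ of $\frac1n\Tr(S_1\Qhat)$ under the perturbation $w_c\mapsto w_c(1+s)$ of the weights, and running the same differentiation argument produces $\frac{\pi_cw_cg_c}{(1+e_c)^2}$. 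Summing with weights $s_c^2$ yields $\cV$.

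Finally, under (\ref{ass:indep_signal}), $\theta$ is independent of the design and has independent coordinates. Conditioning on $\Xeff$, the trace lemma gives $\theta^\top M\theta\approx\frac{r_\beta^2}{d}\Tr(P_\beta M)+\sum_c\frac{r_{\delta_c}^2}{d}\Tr(P_{\delta_c}M)$ for $M=\Lambda\Qhat S_1\Qhat\Lambda$. Since $\Lambda$ is blockwise scalar, this becomes $\frac{\tau\lambda^2}{n}\Tr(P\cdot\Qhat S_1\Qhat)$, and combining with the trace-level equivalent above gives \eqref{eq:bias_simpler}.
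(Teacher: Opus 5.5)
Your outline is correct, and its core coincides with the paper's Step 6: perturb the resolvent along $S_1$ and differentiate the fixed point to get $(I_C-J)\mathbf e'=b$, hence $\mathbf e'=g$ and $\partial_t\bar Q=\cK$. For the independent-signal bias, your $\frac1n\Tr(P_j\Qhat S_1\Qhat)$ is the paper's $-\partial_{\lambda_j}F$.

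The routes differ in the machinery around this core:
\begin{enumerate}
\item \emph{Resolvent equivalents.} The paper does not prove the first-order and isotropic equivalents. It imports them: Wagner et al.\ for $\frac1n\Tr(A\Qhat)$ and Louart's Theorem~1.6 for $a^\top\Qhat a$. To apply them it first adds an auxiliary $\eta I$, then removes it by a quantitative monotone comparison of the fixed points. Your self-contained leave-one-out proofs would also work, but most of the length would go there. In the ridgeless case you also need bounds on the leave-one-out resolvents, which the $\eta$-regularization sidesteps. Efron--Stein alone gives only variance $O(n^{-1})$, so for Borel--Cantelli use a fourth-moment (Burkholder) bound of order $n^{-2}$.
\item \emph{Passing to derivatives.} The paper uses a real Taylor/finite-difference argument. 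Its second-derivative bounds are uniform in $n$, coming from the implicit function theorem and $\|(I_C-J)^{-1}\|_2\le\sqrt C/(1-\gamma_*)^2$. Your Vitali argument instead needs $\bar Q(t)$ analytic and bounded on a complex disc of $n$-independent radius. That follows from the analytic implicit function theorem with the same uniform bounds, but it has to be stated.
\item \emph{Variance.} The paper writes $V$ through $F$, $\partial_{\lambda_j}F$ and $\partial_{w_c}F$, using $\Qhat\Shat\Qhat=\Qhat-\Qhat\Lambda\Qhat$ for the target block. It identifies the limit via $\mathbf u_1^\top(I_C-J)^{-1}=\mathbf u_1^\top+g^\top D$, with $D=\operatorname{diag}(\pi_c/(1+e_c)^2)$. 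Your scaling $w_c\mapsto w_c(1+s)$ for every client, including $c=1$, is more symmetric and equally valid. Note, though, that once $w_1$ varies, the equivalent of $F$ is $\frac1n\Tr(S_1\bar Q)=e_1/w_1$, not $e_1$; with that correction the $c=1$ term also comes out as $\pi_1g_1/(1+e_1)^2$.
\end{enumerate}

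Two details to fix.
\begin{itemize}
\item \emph{The displayed variance approximation.} $\frac{\pi_cw_c}{(1+e_c)^2}\cdot\frac{w_c}{n}\Tr(S_c\bar QS_1\bar Q)$ is not the leading order. Replacing $\frac1n\Tr(S_c\Qhat S_1\Qhat)$ by $\frac1n\Tr(S_c\bar QS_1\bar Q)$ drops an $O(1)$ term, exactly the gap between $b_c$ and $g_c$. The correct limit is $\frac1n\Tr(S_1\Qhat\,w_c\Shat_c\Qhat)\to\pi_cg_c/(1+e_c)^2$. The extra $w_c$ in $\cV$ comes from the prefactor $w_cs_c^2$, not from this term.
\item \emph{The bound on $J$.} You need $\rho(J)<1$ uniformly in $n$, and no comparison with an empirical Jacobian is required. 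The identity $\sum_k\pi_kw_kS_k/(1+e_k)=\bar Q^{-1}-\Lambda$ gives $J(\mathbf 1+\mathbf e)\le\mathbf e$ componentwise. Hence, in the norm $\max_c|x_c|/(1+e_c)$, $\|J\|\le\max_c e_c/(1+e_c)\le\max_c d/n_c$, which is eventually at most some $\gamma_*<1$.
\end{itemize}
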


As an immediate corollary, $R(\hat\beta;\beta)$ converges almost surely to $\cB+\cV$, and it does so almost surely even without averaging over observation and privacy noises (Appendix \ref{app:realized-risk}). Thus, although the estimator $\betahat$ depends on random, high-dimensional samples, its asymptotic risk is characterized by the deterministic quantities $\cK$ and $g$. In  Figure~\ref{fig:synthetic}, we show that the deterministic equivalent closely approximates the empirical risk at moderate dimension. The plots further highlight how the risk can be substantially influenced by the sources of heterogeneity, such as \textbf{(a)} covariance shift, \textbf{(b)} model shift, and \textbf{(c)} privacy budget. We next sketch the argument deferring details to Appendix  \ref{app:proof-main-det-eq}.

\emph{Idea of the proof.} The argument relies on reducing both bias and variance to the scalar resolvent functional $F=n^{-1}\Tr(S_1\widehat Q)$ and its derivatives. 
The deterministic equivalent for resolvents of random matrices with heterogeneous covariance structure, such as $\Qhat$, is well studied and an application of \citep[Theorem~1]{wagner2012large} gives that $\abs{F-e_1} \to 0$, for $e_1$ defined by \eqref{eq:delta-general-block-de}-\eqref{eq:delta-general-fixed-point}. However, to be able to apply such result, one first has to introduce an additional small positive regularization, which can be later removed using the uniform lower spectral bound on sample covariance matrices in the underparameterized regime. Then, uniform bounds on the second derivatives of $F$ and $e_1$ allow the convergence to be transferred to their first derivatives. We finally lift assumption \eqref{ass:indep_signal} and using \citep[Theorem 1.6]{louart2026resolvent}, we get the directional concentration needed to show \eqref{eq:C-client-bias-variance-DE}. \qed

\begin{remark}
Theorem \ref{thm:main-det-eq} is related to the high-dimensional multi-task linear regression analysis of \citet{ilbert2024analysing}, who also consider a shared signal with task-specific deviations under heterogeneous feature covariances. The settings, though, differ in several respects. In particular, the assumptions on the features are different: whereas \citet{ilbert2024analysing} assume Lipschitz concentration of the features, we work under the bounded moment condition (\ref{ass:z-moment-bounds}). Most importantly, rather than studying multi-task prediction, we have a single target client and account for auxiliary weights and privacy mechanisms, which are central to the dataset selection problem we consider.
\end{remark}

\begin{figure}[t]
        \centering
        \includegraphics[width=\textwidth]{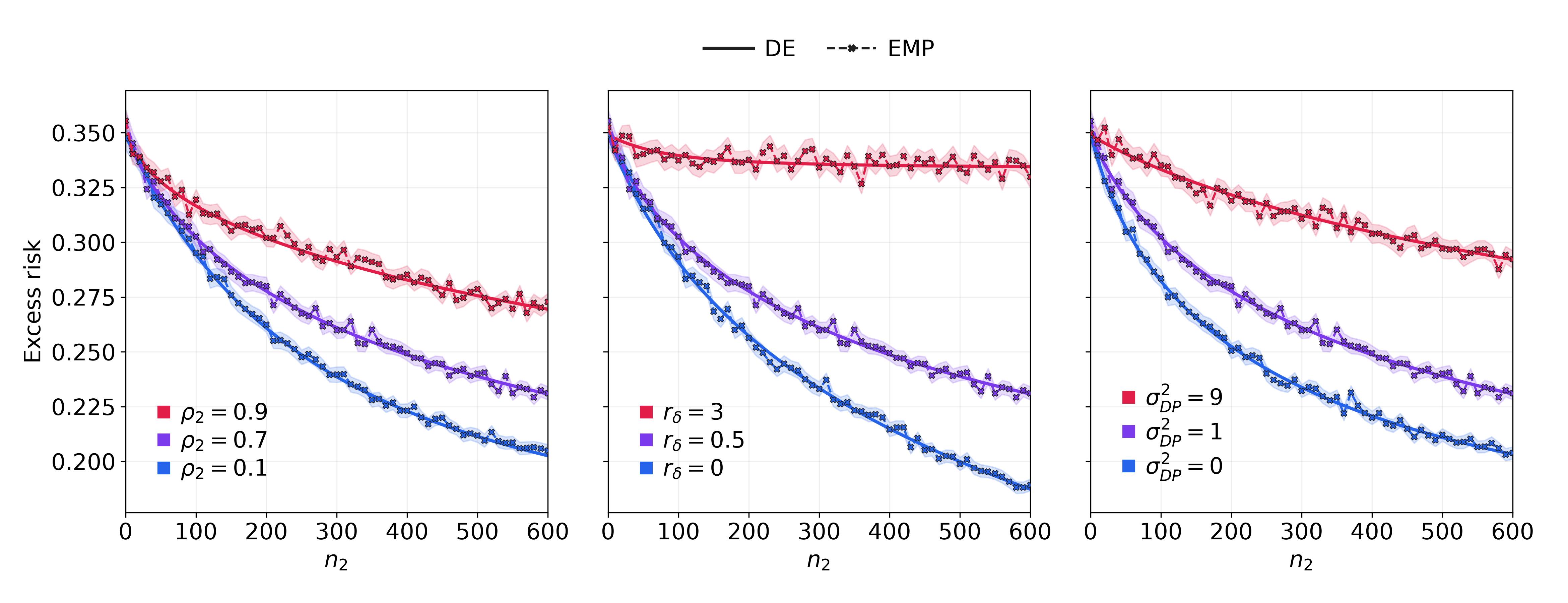}
        \makebox[\textwidth]{%
        \parbox{0.333\textwidth}{\centering\textbf{(a)} Varying covariance shift}%
        \parbox{0.333\textwidth}{\centering\textbf{(b)} Varying model shift}%
        \parbox{0.333\textwidth}{\centering\textbf{(c)} Varying privacy noise}%
        }
        \caption{
        Dependence of target excess risk on covariance shift, model shift and privacy noise in a synthetic linear regression experiment with two clients. We take $x_1 \sim \cN(0,\Sigma_1)$ and
        $x_2 \sim \cN(0,\Sigma_2)$, with $\Sigma_1$ and $\Sigma_2$ of the form 
        $[K(\rho)]_{ij}=\rho^{|i-j|}$.
        We set $\Sigma_1=K(\rho_1)$ with $\rho_1=0.1$ and
        $\Sigma_2=a_2K(\rho_2)$, so that
        $\Tr(\Sigma_2\Sigma_1^{-1})=d$.
        We take $d=80$, $n_1=200$ and plot $\norm{\betahat - \beta}_{\Sigma_1}^2$ as a function of $n_2$.
        Solid lines show the deterministic equivalent prediction and dashed lines with
        crosses show the empirical mean over $400$ i.i.d. repetitions, with
        shaded $90\%$ confidence intervals. 
        \textbf{(a)} Covariance shift: $\rho_2\in\{0.1,0.7,0.9\}$, with model shift
        $r_\delta=0.5$ and privacy noise variance $\sigma_{\mathrm{DP}}^2=1$.
        \textbf{(b)} Model shift: $r_\delta\in\{0,0.5,3\}$, 
        $\rho_2=0.7$ and $\sigma_{\mathrm{DP}}^2=1$.
        \textbf{(c)} Privacy noise:
        $\sigma_{\mathrm{DP}}^2\in\{0,1,9\}$, with $\rho_2=0.7$ and
        $r_\delta=0.5$.
        The middle curve is the same 
        in all three panels.
        }\label{fig:synthetic}
        \vspace{-1em}
\end{figure}


\vspace{-.4em}

\paragraph{Bayes-optimal tuning.}
If we impose further restrictions on the distribution of $\beta$ and $\delta$, then
we can show optimality of our estimator and characterize the parameters $(\mathbf w,\lambda_\beta,\boldsymbol\lambda_\delta)$ which achieve it. 

\begin{proposition}[Bayes-optimal privacy weights and shift regularization]
\label{prop:bayes-opt-param}
Consider the setting of Theorem~\ref{thm:main-det-eq} with the unclipped release $y_c+\xi_c$ for $c\in\mathcal A$ and Gaussian noise $\{\varepsilon_j\}_{j=1}^C$. Assume $\sigma_1^2>0$, $\sigma_{t,c}^2>0$
for $c\in\cA$ and 
\vspace{-.3em}
\begin{equation}\label{eq:ass_gaus_prop}
    \beta\sim\cN\left(0,\frac{r_\beta^2}{d}I_d\right), \qquad \delta_c\sim\cN\left(0,\frac{r_{\delta_c}^2}{d}I_d\right),\ c\in\cA,
\end{equation}
independently. Then, conditional on the designs, the posterior mean of the stacked parameter, i.e.\\ $\bbE\sbr{\theta \,\middle|\, X_1,\ldots,X_C,y_1,\tilde y_2,\ldots,\tilde y_C}$, is exactly the estimator in \eqref{eq:delta-estimator} with
\vspace{-.3em}
\begin{equation}\label{eq:Bayespar}
    w_c^\circ=\frac{\sigma_1^2}{\stc^2}, \qquad \lambda_\beta^\circ =\frac{\sigma_1^2}{\tau_\beta} =\frac{\sigma_1^2d}{nr_\beta^2},
    \qquad \lambda_{\delta_c}^\circ =\frac{\sigma_1^2}{\tau_{\delta_c}} =\frac{\sigma_1^2d}{nr_{\delta_c}^2}.
\end{equation}
This estimator minimizes $\mathbb E_\theta[R(\hat\beta;\beta)]$ 
among all measurable estimators.
\end{proposition}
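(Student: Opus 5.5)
The plan is to write down the joint Gaussian likelihood and prior in the stacked parameter $\theta$, compute the posterior, and read off its mean. Conditional on the designs $X_1,\ldots,X_C$, the model is linear-Gaussian. We have $y_1 = X_1 L_1\theta + \varepsilon_1$ with $\varepsilon_1\sim\cN(0,\sigma_1^2 I_{n_1})$. For $c\in\cA$, the unclipped release is $\tilde y_c = X_c L_c\theta + \varepsilon_c + \xi_c$, and since $\varepsilon_c$ and $\xi_c$ are independent Gaussians, $\varepsilon_c+\xi_c\sim\cN(0,\stc^2 I_{n_c})$. All noises are independent of each other and of $\theta$. The prior \eqref{eq:ass_gaus_prop} is $\theta\sim\cN(0,D)$ with $D=\operatorname{diag}(\tfrac{r_\beta^2}{d}I_d,\tfrac{r_{\delta_2}^2}{d}I_d,\ldots)$, independent of the designs. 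The designs are independent of $\theta$ and of the noises, so conditioning on them does not alter the prior.

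The negative log-posterior is, up to constants,
\[
\frac{\norm{y_1-X_1L_1\theta}_2^2}{2\sigma_1^2}+\sum_{c\in\cA}\frac{\norm{\tilde y_c-X_cL_c\theta}_2^2}{2\stc^2}+\frac{d}{2r_\beta^2}\norm{\beta}_2^2+\sum_{c\in\cA}\frac{d}{2r_{\delta_c}^2}\norm{\delta_c}_2^2 .
\]
Multiplying by the positive constant $2\sigma_1^2/n$ does not change the minimizer. This turns the objective exactly into \eqref{eq:delta-estimator} with $w_c=\sigma_1^2/\stc^2$, $\lambda_\beta=\sigma_1^2 d/(n r_\beta^2)=\sigma_1^2/\tau_\beta$ and $\lambda_{\delta_c}=\sigma_1^2/\tau_{\delta_c}$, using that $X_c L_c\theta = X_c(\beta+\delta_c)$.

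The posterior is Gaussian, because the negative log-posterior is a strictly convex quadratic: the ridge parameters are positive, so $\Shat+\Lambda\succ0$. Its mean therefore equals its mode, which is the unique minimizer $\Qhat\Uhat$ of \eqref{eq:theta_closed_form}. This proves the first claim.

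For optimality, I would use that any measurable estimator $\check\beta$ is a function of the observed data $\mathcal D=(X_1,\ldots,X_C,y_1,\tilde y_2,\ldots,\tilde y_C)$. The Bayes risk is
\[
\bbE_\theta[R(\check\beta;\beta)]=\bbE\big[(\check\beta-\beta)^\top\Sigma_1(\check\beta-\beta)\big],
\]
where the outer expectation is over $\theta$, the designs and the noises. This holds because $x_1$ is an independent target draw with covariance $\Sigma_1$; its mean is zero because $Z_1$ has centered entries. Conditioning on $\mathcal D$, the weighted quadratic loss with $\Sigma_1\succ0$ is minimized pointwise by $\bbE[\beta\mid\mathcal D]$. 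The reason is the orthogonal decomposition
\[
\bbE[\norm{\check\beta-\beta}_{\Sigma_1}^2\mid\mathcal D]=\norm{\check\beta-\bbE[\beta\mid\mathcal D]}_{\Sigma_1}^2+\bbE[\norm{\beta-\bbE[\beta\mid\mathcal D]}_{\Sigma_1}^2\mid\mathcal D],
\]
where the cross term vanishes. Finally, $\bbE[\beta\mid\mathcal D]=P_\beta\bbE[\theta\mid\mathcal D]=\betahat$.

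The main subtlety is not computational but in the bookkeeping. First, the risk in \eqref{eq:C-client-risk} is defined conditionally on $\theta$ and the designs and averaged only over noise, so one must check that integrating it against the prior gives the unconditional quadratic loss above; this follows from the tower property and Fubini, since all second moments are finite. Second, the optimality claim uses the unclipped release, so the Gaussian-likelihood computation is exact, and the design distribution needs no Gaussianity because it is independent of $\theta$ and cancels in the posterior.
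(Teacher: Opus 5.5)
Your proof is correct and follows essentially the same route as the paper: a Gaussian likelihood times a Gaussian prior, identification of the posterior mean with the weighted ridge solution, then the same orthogonal decomposition of the $\Sigma_1$-weighted loss conditional on $\mathcal D$. The only differences are minor. For the first claim, you rescale the negative log-posterior by $2\sigma_1^2/n$ and use mean $=$ mode, whereas the paper completes the square and matches $\Omega$ and $q$ to $\frac{n}{\sigma_1^2}(\Shat+\Lambda^\circ)$ and $\frac{n}{\sigma_1^2}\Uhat$. For the second, the paper's $\mathbb E_\theta[R]$ is conditional on the designs rather than averaged over them, but your pointwise-in-$\mathcal D$ argument gives that version directly by the tower property.
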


In words, under Gaussian priors on target signal and model shifts, the weighted joint ridge estimator with parameters in \eqref{eq:Bayespar} is Bayes-optimal. 
The argument is deferred to Appendix~\ref{app:proof-bayes-opt-param}.


\subsection{Feature and label privatization}
\label{subsec:xy-private}

We next consider the case in which both features and labels are made private by the Gaussian mechanism in \eqref{eq:C-client-feature-label-private}. To analyze this case, we assume Gaussian noise $\{\varepsilon_j\}_{j=1}^C$ and strengthen Assumption (\ref{ass:z-moment-bounds}) by assuming \myassump{ass:features_gaussian} that the rows of $X_c$ are i.i.d.\ Gaussian with covariance $\Sigma_c$, i.e., $x_{ci}\sim\cN(0,\Sigma_c)$ for $c\in \cA$ and $i\in \{1, \ldots, n_c\}$.
The reduction to an equivalent regression problem can then be performed separately
for each auxiliary client. In particular, for $c\in \cA$ and $i\in \{1, \ldots, n_c\}$, we write
\begin{equation*}
    \tilde x_{ci}=x_{ci}+\xi_{X,ci}, \qquad \xi_{X,ci}\sim \cN(0,\sigma_{X,\mathrm{DP},c}^2I_d).
\end{equation*}
Gaussian conditioning gives
 $   x_{ci}\mid\tilde x_{ci}
    \sim
    \cN(K_c\tilde x_{ci},T_c)$,
where $K_c \coloneqq \Sigma_c(\Sigma_c+\sigma_{X,\mathrm{DP},c}^2I_d)^{-1}$ and $T_c \coloneqq \Sigma_c -\Sigma_c (\Sigma_c+\sigma_{X,\mathrm{DP},c}^2I_d)^{-1} \Sigma_c$. We then define the conditionally denoised design as
\[
   \Xeff_c = \tilde X_cK_c.
\]
Writing $X_c=\Xeff_c+E_c$, the rows of $E_c$ are independent
$\cN(0,T_c)$ vectors, which are also independent of $\Xeff$. Since
$y_c=X_c(\beta+\delta_c)+\varepsilon_c$, the privatized response satisfies
\[
    \tilde y_c =\Xeff_c(\beta+\delta_c)+\xieff_c, \qquad \xieff_c \coloneqq E_c(\beta+\delta_c)+\varepsilon_c+\xi_c.
\]
Moreover, conditional on $(\beta,\delta_c)$, it follows that
    $\xieff_c \sim\cN(0,\nu_c^2 I_{n_c})$ is independent of $\Xeff_c$, where $\nu_c^2=  \sigma_c^2+\sigma_{y,\mathrm{DP},c}^2 + (\beta+\delta_c)^\top T_c(\beta+\delta_c)$. The denoised population covariance is
\[
    \Sigma_c^{\mathrm{eff}} \coloneqq \operatorname{Cov}(x_c^{\mathrm{eff}}) = \Sigma_c (\Sigma_c+\sigma_{X,\mathrm{DP},c}^2I_d)^{-1} \Sigma_c.
\]
Thus, feature privatization has two effects: $\Sigma_c\longmapsto\Sigma_c^{\mathrm{eff}}$ and $\sigma_c^2+\sigma_{y,\mathrm{DP},c}^2 \longmapsto \nu_c^2$,
which, respectively, add covariance shift and increase effective noise. 

We now proceed to use the estimator in \eqref{eq:delta-estimator} with
$X_c^{\mathrm{eff}}$ as defined above. Let $g^{\mathrm{eff}}$, $\cK^{\mathrm{eff}}$, and $e_c^{\mathrm{eff}}$ denote the quantities $g$, $\cK$, and $e_c$ defined in \eqref{eq:delta-general-block-de}-\eqref{eq:ck_def} with $\mathbf{\Sigma}$ replaced by
$\mathbf{\Sigma}^{\mathrm{eff}}\coloneqq\{\Sigma_c^{\mathrm{eff}}\}_{c=1}^C$. Moreover, note that under the Assumption \eqref{ass:indep_signal}, it holds
\begin{equation}
\label{eq:C-client-nu-concentration}
    \lim_{n\to \infty} \max_{c\in\cA} \abs{\nu_c^2-\bar\nu_c^2}= 0, \qquad
    \bar\nu_c^2
    \coloneqq
    \sigma_c^2+\sigma_{y,\mathrm{DP},c}^2
    +\frac{r_\beta^2+r_{\delta_c}^2}{d}\Tr(T_c),
\end{equation}
almost surely. Thus, an application of Theorem \ref{thm:main-det-eq} gives the corollary below.

\begin{corollary}[Deterministic equivalent for label and feature privatization]
\label{cor:C-client-xy-private}
Under the Assumptions (\ref{ass:C-client-asymptotics}) - (\ref{ass:features_gaussian}), 
it holds, almost surely, that
\[
    \lim_{n\to\infty}\abs{R(\hat\beta;\beta)-\cR}=0,
\]
where $\cR = \theta^\top \Lambda \cK^{\mathrm{eff}} \Lambda \theta + \sum_{c=1}^C (\pi_cw_cg_c^{\mathrm{eff}}\bar\nu_c^2)/(1+e_c^{\mathrm{eff}})^2$.
\end{corollary}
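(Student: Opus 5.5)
The plan is to reduce Corollary~\ref{cor:C-client-xy-private} to Theorem~\ref{thm:main-det-eq}, applied conditionally on $\theta$ to the equivalent regression problem
\[
\tilde y_c=\Xeff_c(\beta+\delta_c)+\xieff_c, \qquad c\in\cA .
\]
The first step is to verify that this equivalent problem satisfies the assumptions of the theorem.

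\emph{Design.} Under (\ref{ass:features_gaussian}), the rows of $\tilde X_c$ are i.i.d.\ $\cN(0,\Sigma_c+\sxdpc^2 I_d)$. Hence $\Xeff_c=\tilde X_cK_c$ has i.i.d.\ Gaussian rows with covariance $\Sigma^{\mathrm{eff}}_c$. One can therefore write $\Xeff_c=Z_c(\Sigma_c^{\mathrm{eff}})^{1/2}$ with $Z_c$ standard Gaussian, which satisfies (\ref{ass:z-moment-bounds}). By (\ref{ass:C-client-cov-bounds}), $\Sigma^{\mathrm{eff}}_c$ has eigenvalues $s^2/(s+\sxdpc^2)$ with $s\in[\omega_\Sigma,\alpha_\Sigma]$. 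Since $\sxdpc^2$ is constant by (\ref{ass:noise_const}), these eigenvalues are bounded above and away from zero uniformly, so the well-conditioning assumption holds for $\mathbf\Sigma^{\mathrm{eff}}$.

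\emph{Noise.} Conditionally on $\theta$, $\xieff_c\sim\cN(0,\nu_c^2 I_{n_c})$ independently of $\Xeff$. The variance $\nu_c^2$ is bounded because $\norm{\beta_c}_2$ and $\norm{T_c}_{\mathrm{op}}$ are bounded, and Gaussian entries have all moments finite.

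\emph{Conditional bias--variance formula.} I would recompute \eqref{eq:biasvarclosed} conditional on $(\Xeff,\theta)$. The bias is unchanged. The variance has $\stc^2$ replaced by $\nu_c^2$, since $\bbE[\tilde y_c\mid\Xeff,\theta]=\Xeff_cL_c\theta$ and the noise covariance is $\nu_c^2 I$.

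The one subtlety is that $\nu_c^2$ depends on $\theta$, whereas Theorem~\ref{thm:main-det-eq} treats noise levels as deterministic constants. I would handle it as follows.
\begin{itemize}
\item Conditioning on $\theta$ (drawn independently of $X$ and all noises), the theorem applies with constants $s_c^2=\nu_c^2$. Its proof only uses resolvent deterministic equivalents in $\Xeff$, and the variance is linear in the $s_c^2$, with coefficients $n^{-1}\Tr[S_1\Qhat w_c^2\Shat_c\Qhat]$.
\item Those coefficients converge almost surely to $\pi_cw_cg_c^{\mathrm{eff}}/(1+e_c^{\mathrm{eff}})^2$ by the theorem. They are also bounded, because $\Qhat\preceq\Lambda^{-1}$ and $\norm{\Shat_c}_{\mathrm{op}}$ is bounded almost surely.
\item It then suffices to replace $\nu_c^2$ by $\bar\nu_c^2$ using \eqref{eq:C-client-nu-concentration}. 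That concentration follows from (\ref{ass:indep_signal}) via a quadratic-form concentration for $u^\top T_c u/d$ under bounded $8$-th moments, combined with a Borel--Cantelli argument. This step also uses that the cross terms in $(\beta+\delta_c)^\top T_c(\beta+\delta_c)$ vanish, by independence and zero mean.
\end{itemize}

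\emph{Bias.} The bias $\theta^\top\Lambda\Qhat S_1\Qhat\Lambda\theta$ converges to $\theta^\top\Lambda\cK^{\mathrm{eff}}\Lambda\theta$ by the first part of Theorem~\ref{thm:main-det-eq}, applied with $\mathbf\Sigma^{\mathrm{eff}}$. Note that $S_1$ uses $\Sigma_1^{\mathrm{eff}}=\Sigma_1$, because client $1$ is not privatized. Summing the bias and variance limits gives the convergence of $R$ to $\cR$.

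I expect the main obstacle to be the clean justification of applying Theorem~\ref{thm:main-det-eq} with random, $\theta$-dependent noise levels and a signal that is not independent of the effective noise. Two facts make it work. First, $\xieff_c$ depends on $\theta$ only through its variance. Second, $E_c$ is independent of $\Xeff_c$, which is Gaussian orthogonality: $\Xeff_c$ and $E_c$ are jointly Gaussian and uncorrelated. The risk expectation conditions on $\theta$, so an almost-sure statement conditional on $\theta$, followed by Fubini, yields the unconditional almost-sure claim.
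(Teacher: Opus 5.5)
Your proposal is correct and takes the same route as the paper. The paper simply applies Theorem~\ref{thm:main-det-eq} to the equivalent problem with $\mathbf{\Sigma}^{\mathrm{eff}}$ and replaces $\nu_c^2$ by $\bar\nu_c^2$ via \eqref{eq:C-client-nu-concentration}; your conditioning on $\theta$ and linearity-in-the-noise-levels argument make explicit what the paper leaves implicit. One small slip: $\Qhat\preceq\Lambda^{-1}$ fails when some ridge parameter is zero, so bound the coefficients via $\Shat\succeq\kappa_{\mathrm{E}}I_{Cd}$ from Step~3 of the theorem's proof, or via their almost-sure convergence to bounded limits.
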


\begin{figure}[t]
    \centering
    \includegraphics[width=0.32\textwidth]{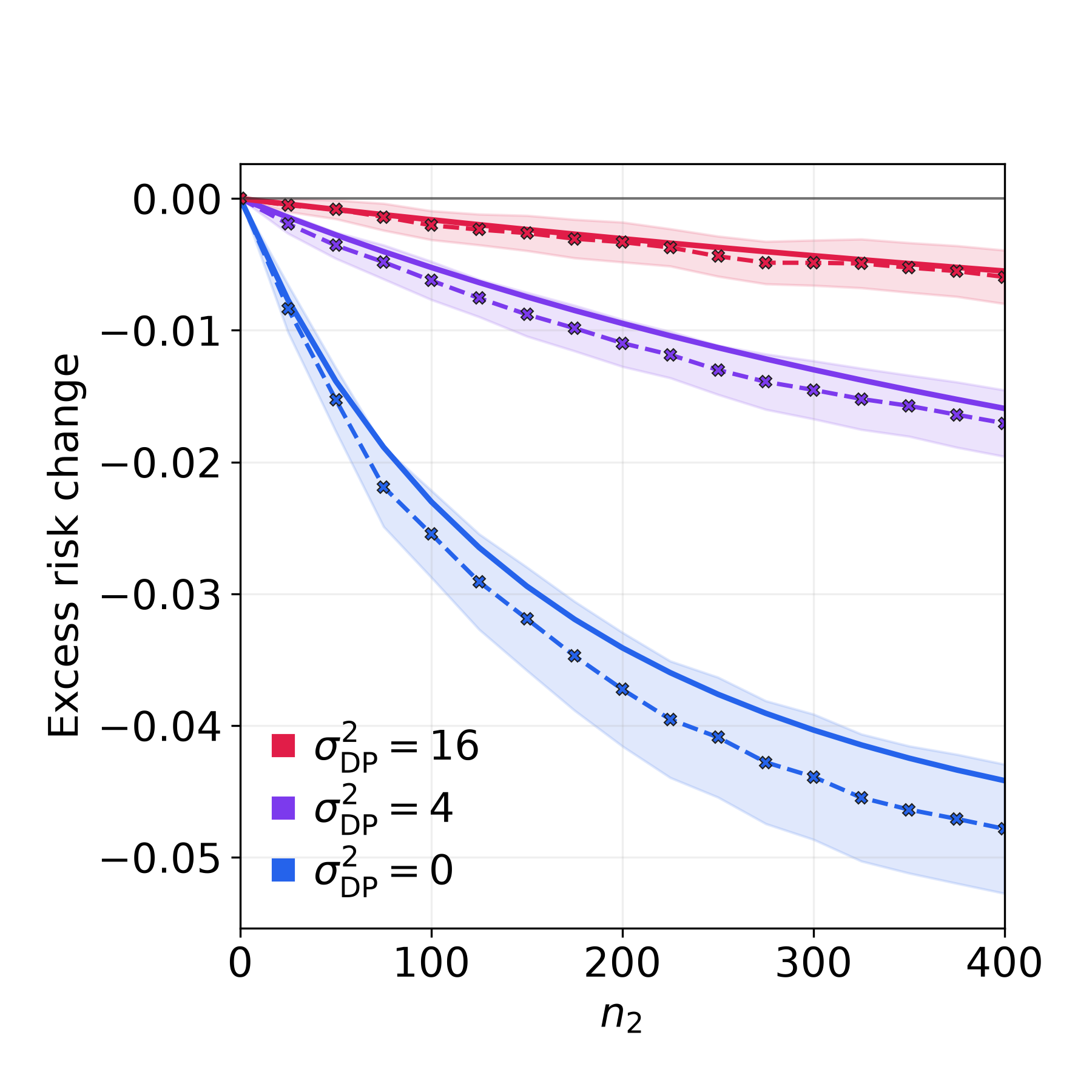}
    \hfill
    \includegraphics[width=0.32\textwidth]{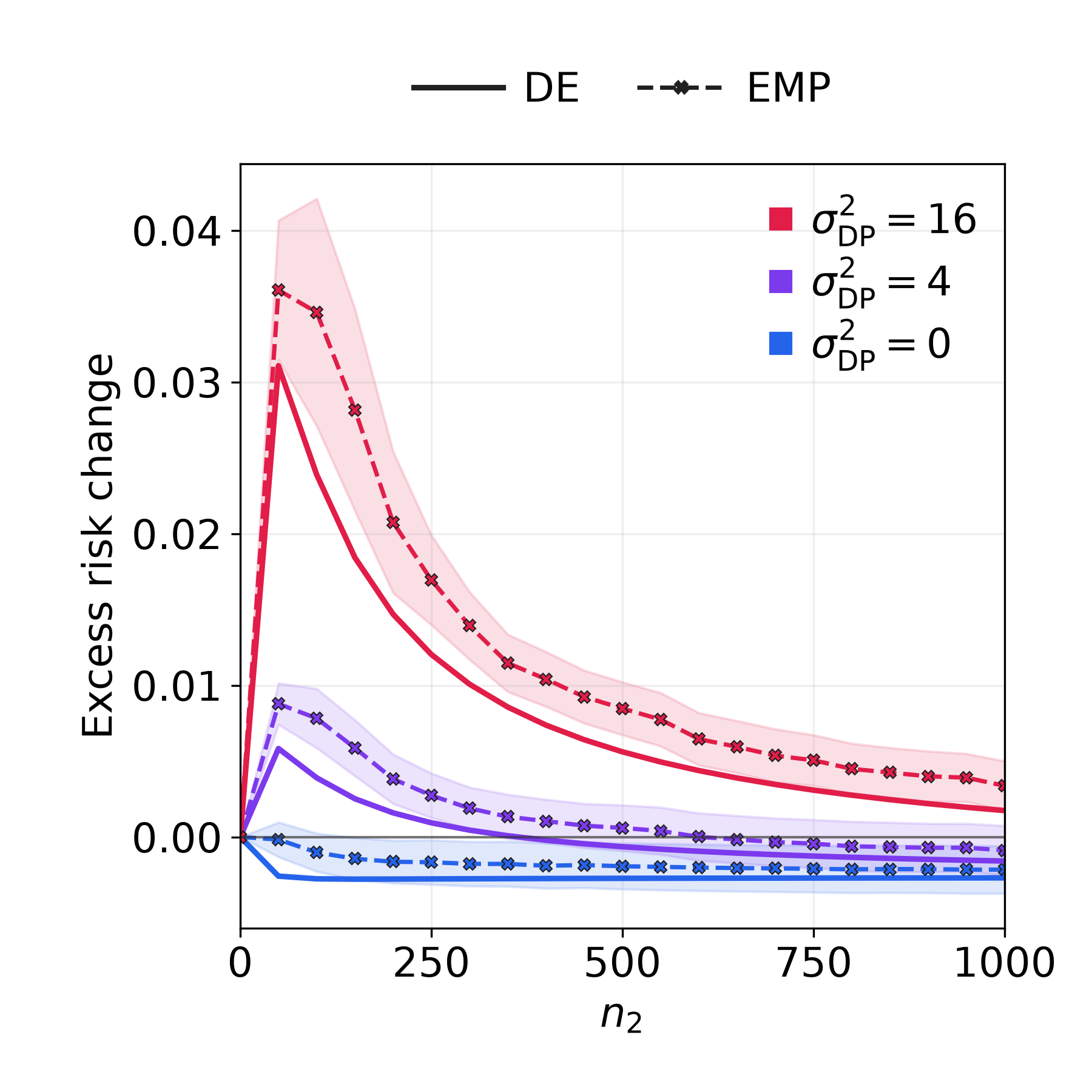}
    \hfill
    \includegraphics[width=0.32\textwidth]{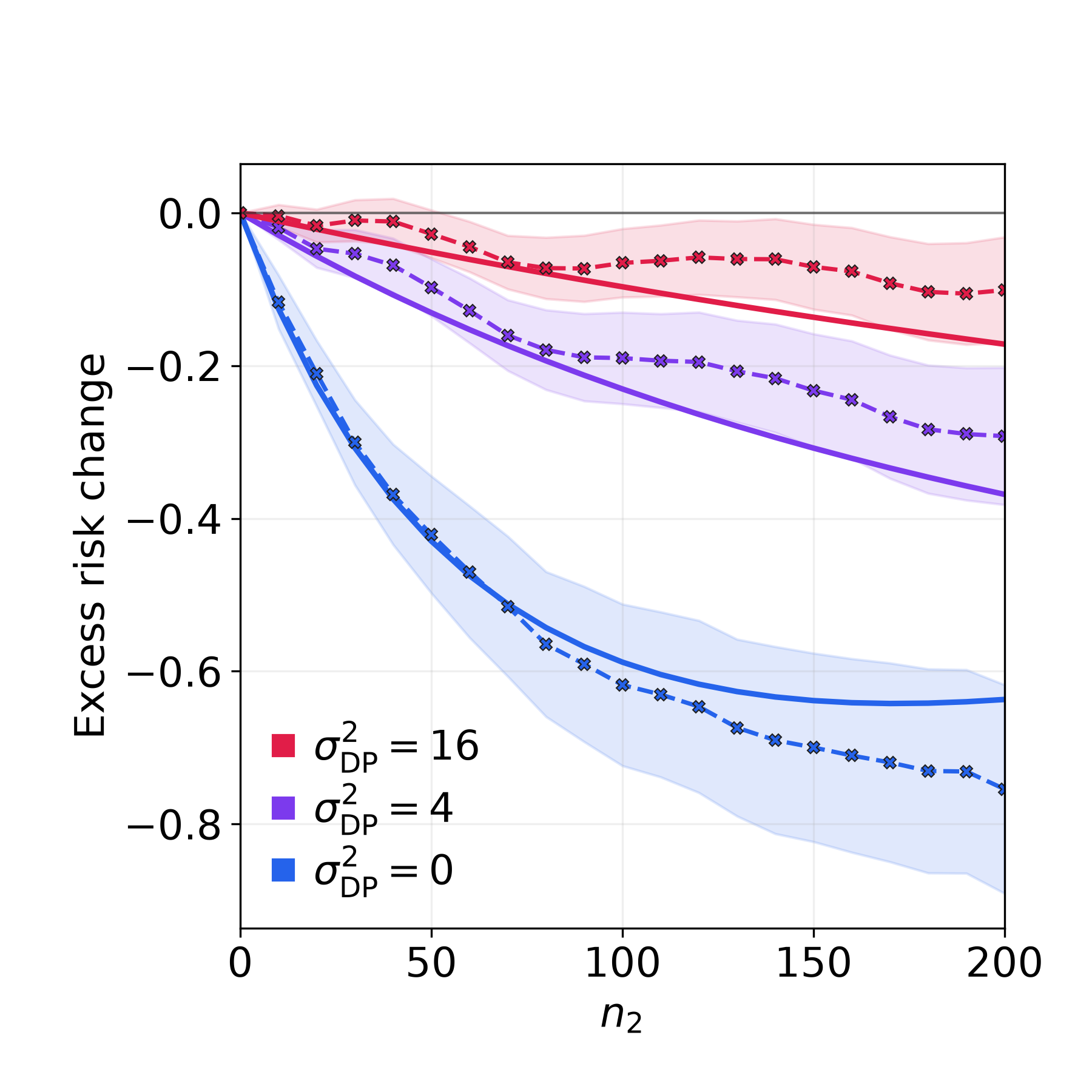}
    \makebox[\textwidth]{%
        \parbox{0.333\textwidth}{\centering\textbf{(a)} LSAC}%
        \parbox{0.333\textwidth}{\centering\textbf{(b)} Lianjia}%
        \parbox{0.333\textwidth}{\centering\textbf{(c)} Student Performance}%
        }

\caption{ Change in target risk relative to the target-only baseline as auxiliary data are added. \textbf{(a) LSAC:} Tier 3 is the target client and Tier 4 the auxiliary client. \textbf{(b) Lianjia:} District 8 is the target client and District 7 the auxiliary client. \textbf{(c) Student Performance:} Mousinho da Silveira is the target school and Gabriel Pereira the auxiliary school. }
\vspace{-1em}
    \label{fig:exact_fit}
\end{figure}

\vspace{-.75em}

\section{Experimental results} \label{sec:expe}

\vspace{-1mm}
In this section, we evaluate our theoretical predictions on three real-world regression datasets. We first assess how accurately the deterministic equivalents \eqref{eq:C-client-bias-variance-DE}-\eqref{eq:bias_simpler} predict changes in target risk as auxiliary sample size and privacy noise vary (Figure \ref{fig:exact_fit}). We then use these predictions to select beneficial auxiliary clients and compare against several heuristic baselines (Figure \ref{fig:source_selection}).
\vspace{-3mm}
\paragraph{Datasets.} We consider three publicly available regression datasets: \emph{(i)} in \emph{Student Performance}\footnote{\url{https://archive.ics.uci.edu/dataset/320/student+performance}} \citep{cortez2008using}, we predict the final Portuguese-course grade G3 and treat the two schools as clients; \emph{(ii)} in \emph{LSAC}\footnote {\url{https://www.kaggle.com/datasets/danofer/law-school-admissions-bar-passage}}\citep{wightman1998lsac}, we predict standardized first-year GPA and treat law-school tiers as clients; \emph{(iii)} in \emph{Lianjia}\footnote{\url{https://www.kaggle.com/datasets/ruiqurm/lianjia}}, we predict log housing transaction prices in Beijing and treat city districts as clients. Dataset preprocessing and additional statistics are given in Appendix \ref{app:experimental-details}.
\vspace{-3mm}
\paragraph{Experimental setup.} In each experiment, one client is designated as the target and the remaining clients as auxiliary with privacy requirements. We run $100$ evaluation repetitions. In each repetition, the target training size is fixed at $n_1=100$, approximately $20\%$ of the target observations are used for validation, and the remainder are used for testing. Auxiliary observations are sampled without replacement, with $n_2$ ranging up to $200$, $400$, and $1000$ for Student Performance, LSAC, and Lianjia, respectively. Following \eqref{eq:C-client-label-private}, independent Gaussian noise is added to auxiliary labels while covariates remain unchanged. Estimation of the distributional quantities entering the deterministic equivalent and the hyperparameter tuning procedures are described in Appendix~\ref{app:experimental-details}.
\vspace{-3mm}
\paragraph{Prediction of target risk.}
Figure \ref{fig:exact_fit} shows that the deterministic equivalent prediction accurately tracks empirical performance, as the auxiliary sample size $n_2$ and the privacy noise level $\sigma_{\mathrm{DP}}^2$ vary. We consider $\sigma_{\mathrm{DP}}^2\in\{0,4,16\}$ and plot the change in target risk relative to the target-only baseline (i.e., the difference between using $n_2$ client samples and not using client samples at all), as a function of $n_2$. Negative values indicate an improvement over the baseline using only target data.  Dashed lines with crosses are empirical averages, with shaded regions denoting $90\%$ confidence intervals for the empirical mean. Solid lines in plots show the simplified deterministic equivalent predictions \eqref{eq:bias_simpler}, illustrating that a more tractable expression gives accurate predictions in this setting. Similar results are obtained in a semi-synthetic setting in which a linear model is planted on the real-world design matrix, and we use the general deterministic equivalent predictions \eqref{eq:C-client-bias-variance-DE}, see Appendix \ref{app:semi_synth}.
   
\vspace{-3mm}
\paragraph{Dataset selection.}
\begin{figure}[t]
    \centering
    \includegraphics[width=0.32\textwidth]{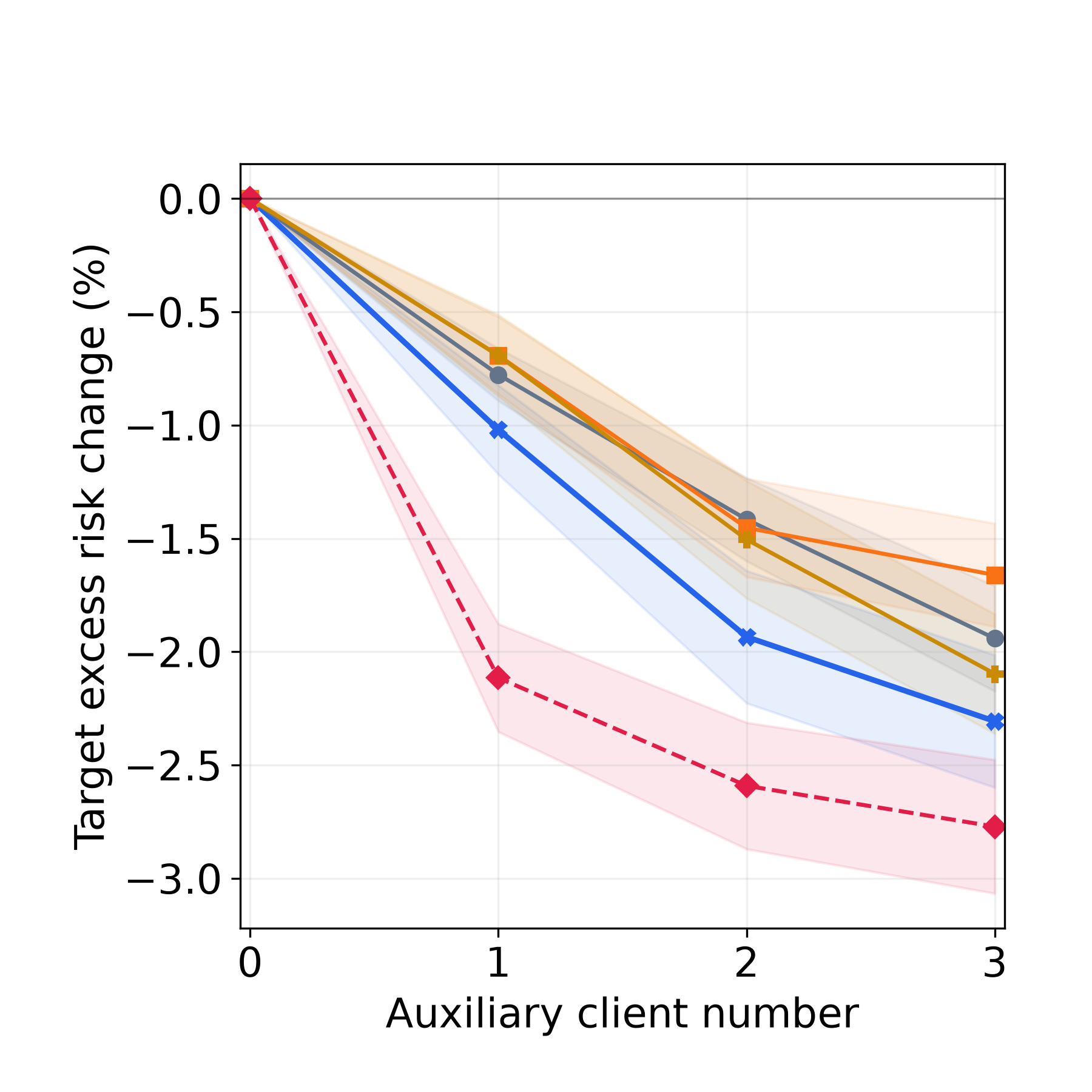}
    \hfill
    \includegraphics[width=0.32\textwidth]{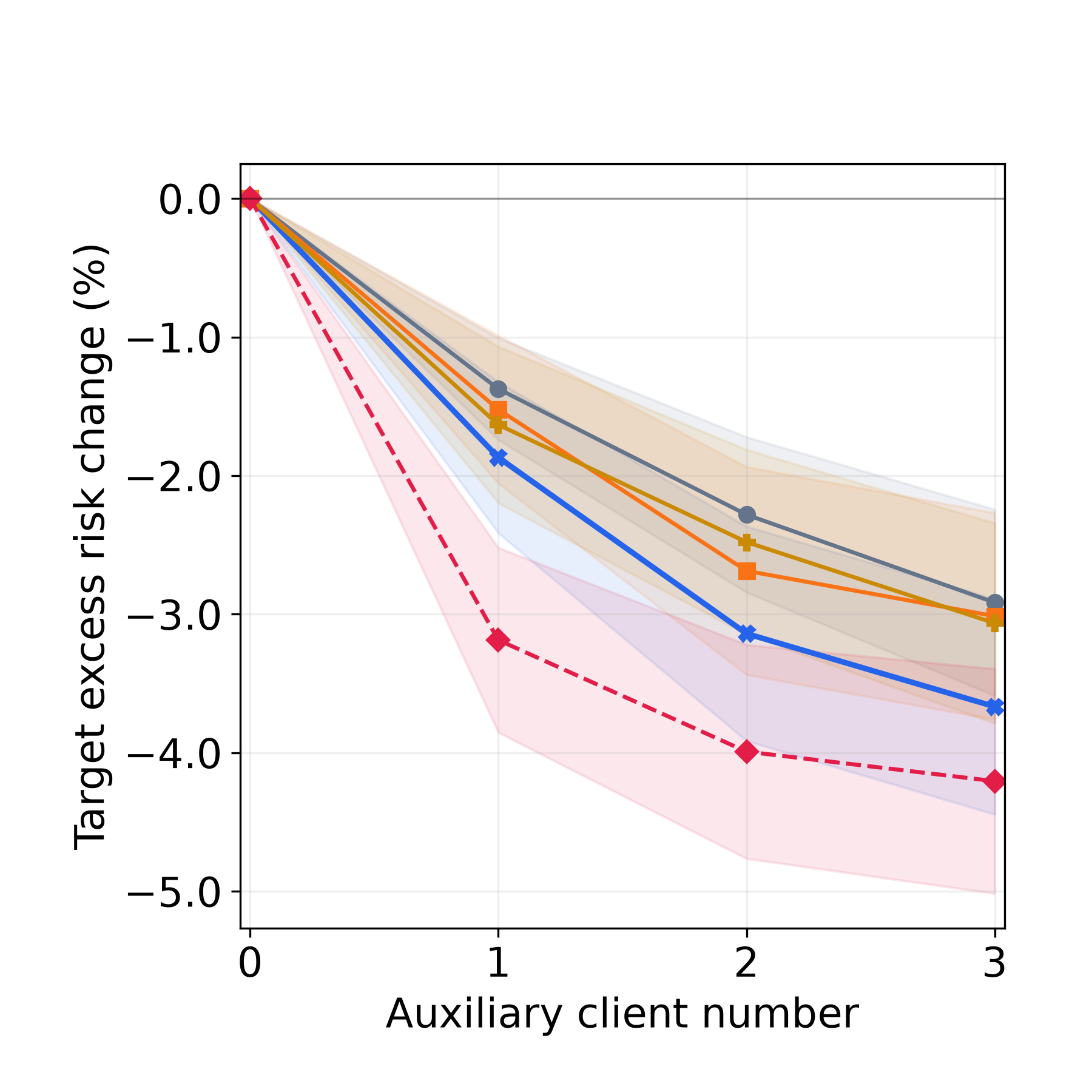}
    \hfill
    \includegraphics[width=0.32\textwidth]{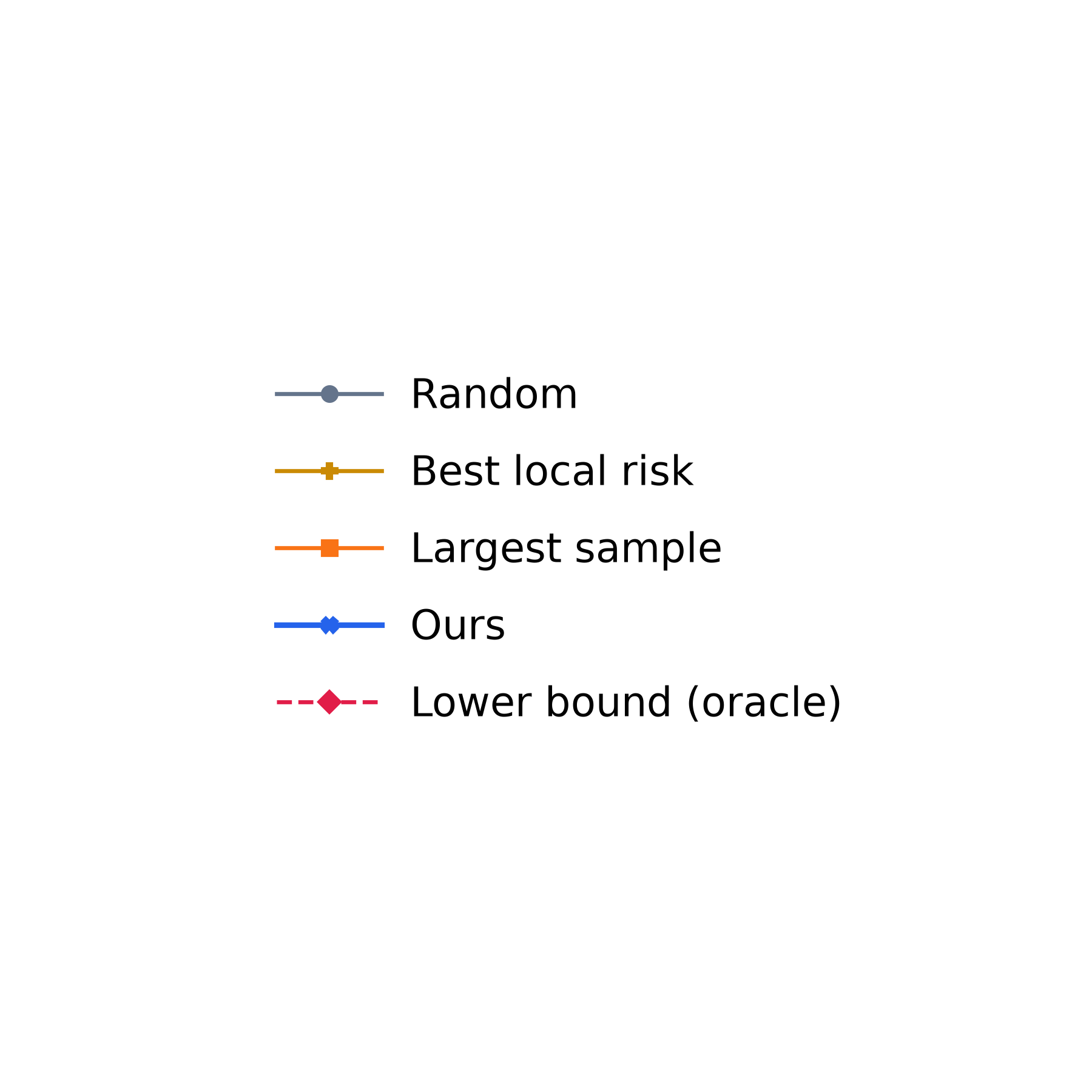}
    \makebox[\textwidth]{%
        \parbox{0.33\textwidth}{\centering\textbf{(a)} LSAC}%
        \parbox{0.33\textwidth}{\centering\textbf{(b)} Lianjia}%
        \parbox{0.33\textwidth}{\centering $\ $}%
        }
    \caption{ Percentage change in target test risk relative to the target-only baseline under sequential auxiliary client selection. \textbf{(a) LSAC:} Tier 3 is the target client and Tiers 1, 2, 4, 5, and 6 are candidate auxiliary clients. \textbf{(b) Lianjia:} District 8 is the target client and Districts 1, 2, 4, 6, 7, 10, 11 and 13 are candidate auxiliary clients. }
\vspace{-1em}
    \label{fig:source_selection}
\end{figure}

Figure \ref{fig:source_selection} demonstrates that the deterministic equivalent effectively guides auxiliary client selection in the multi-client LSAC and Lianjia datasets. Student Performance contains only two clients and is therefore excluded from this evaluation. Starting from the target-only estimator, each method sequentially selects auxiliary clients. All auxiliary clients have a fixed number of samples and receive an independent Gaussian privacy noise with variance $\sigma_{\mathrm{DP}}^2=4$. After selecting $k$ clients, we report the improvement over the baseline using only target data, expressed in percentage. As before, negative values indicate improvement over the target-only baseline.

The method \emph{Ours} greedily adds the auxiliary client that minimizes the deterministic equivalent prediction for the target risk from Theorem \ref{thm:main-det-eq} using hyperparameters from Proposition \ref{prop:bayes-opt-param}. Our approach employs only population-level quantities that do not require access to client data and could be estimated from a publicly available dataset. This is quite common, e.g., when dealing with data from  US Census~\citep{USCensus, Abowd2022} or UK Biobank \citep{Sudlow2015}. In our context, we assume the availability of a non-private dataset used to estimate the population quantities $\{\Sigma_c,\sigma_c\}_{c=1}^C$, together with either the relevant inner products between $\{\beta_c\}_{c=1}^C$ or $\tau_{\beta}$ and $\{\tau_{\delta_c}\}_{c=2}^C$, which are plugged into \eqref{eq:C-client-bias-variance-DE} or \eqref{eq:bias_simpler}, respectively. More precisely,  the non-private dataset consists of the remaining data not used for the experiment of Figure \ref{fig:source_selection}. Plots \textbf{(a)} and \textbf{(b)} use the deterministic equivalent \eqref{eq:C-client-bias-variance-DE}, although the simplified version \eqref{eq:bias_simpler} also improves LSAC client selection. 

We compare against three heuristic baselines: \emph{Random}, which selects clients uniformly at random; \emph{Largest sample}, which orders clients by decreasing size of their training set; and \emph{Best local risk}, which ranks clients by performance on their own local distributions. Finally, we report \emph{Lower bound (oracle)}, which at each step greedily adds the client that yields the lowest empirical target risk on the test set. Because it uses test data, this baseline cannot be achieved in practice and serves only as a reference. We report empirical averages (squares) and $90\%$ confidence intervals (shaded regions).
Figure \ref{fig:source_selection} shows that our approach outperforms the three baselines, and similar results for different privacy noise levels are reported in Appendix \ref{app:dataset_selection}. 

\section{Concluding remarks}\label{sec:conc}

In this work, we propose a principled way to decide which datasets will provide value for ridge regression, under differential privacy (DP) constraints. Our setting accommodates covariate and model shifts and provides useful predictions in experiments on real datasets, even though the underlying models are likely to be more complex than a simple linear model. Our results show that deterministic equivalents of the risk are accurate enough to guide dataset selection and identify the Bayes-optimal way to combine sources under our model assumptions.

An interesting future direction involves the analysis of nonlinear models. A first step would be to consider clipping of features and labels, which makes the models in \eqref{eq:C-client-label-private}-\eqref{eq:C-client-feature-label-private} nonlinear. In fact, the importance of clipping is well reported in practice~\citep{De2022UnlockingHD} and recent theoretical work has started to analyze the impact of gradient clipping in high-dimensional regression \citep{bombari2026highdimensionalprivatelinearregression,dworkhighd}. A natural next step would be then to consider multi-index models, for instance when early feature learning is followed by ridge regression \citep{ba2022highdimensionalasymptoticsfeaturelearning,pmlr-v235-cui24d}. In fact, combining linear probing with feature adaptation is an established approach to transfer learning \citep{pmlr-v235-collins24a,pmlr-v267-tahir25a,kumar2022finetuning}. Finally, more sophisticated privacy primitives, e.g., the private covariance estimators developed by \cite{amin2019differentially,kamath2019privately}, could be used for both data selection and privatization in order to improve the privacy-utility tradeoff.

\section*{Acknowledgments}

F.\ K.\ and M.\ M.\ are funded by the European Union (ERC, INF$^2$, project number 101161364). Views and opinions expressed are however those of the author(s) only and do not necessarily reflect those of the European Union or the European Research Council Executive Agency. Neither the European Union nor the granting authority can be held responsible for them. E. C. was supported by the National Research Agency under France 2030, reference “ANR-23-IACL-0008”. S.S.M. was supported by the Wallenberg AI, Autonomous Systems, and Software Program (WASP).
The authors thank Nikita Kalinin for helpful discussions and reference pointers. 





\bibliographystyle{plainnat}
\bibliography{references}

\appendix

\newpage

\section{Technical preliminaries} \label{app:auxiliary}

In this appendix we provide the privacy and clipping details from
Section~\ref{sec:preliminaries}, and derive the bias and variance formulas
in \eqref{eq:biasvarclosed}. 

\subsection{Sufficient conditions for inactive clipping}\label{app:aux-clipping}

First, let us recall that
\begin{equation}\label{eq:priv_var_def}
    \sydpc^2 = \frac{\min(\Delta_{y,c}^2, K_{y, c}^2)}{2\rho_{y,c}},\quad \sxdpc^2 = \frac{\min(\Delta_{X,c}^2,dK_{X,c}^2)}{2\rho_{X,c}},
\end{equation}
which under Assumption (\ref{ass:noise_const}) we can treat as fixed constants independent of $n$ and $d$.

Let $\cE_n$ denote the event that at least one auxiliary observation is clipped:
\[
 \cE_n = \bigcup_{c\in\mathcal A} \cbr{\max_{i\leq n_c}|y_{c,i}|>K_{y,c}/2 \ \text{or}\ \max_{i\leq n_c}\|X_{c,i}\|_2>\sqrt{d}K_{X,c}/2}.
\]
We give sufficient conditions under which $\sum_n\mathbb P(\mathcal E_n)<\infty$. The first Borel-Cantelli lemma then implies that, almost surely, for all sufficiently large $n$,
\begin{equation}
 \mathrm{Clip}_{K_{y,c}}(y_c)=y_c, \qquad \mathrm{Clip}_{K_{X,c}}(X_c)=X_c, \qquad c\in\mathcal A.
 \label{eq:aux-clipping-inactive}
\end{equation}
Note that for label privatization alone, the feature-clipping condition can simply be omitted.

\paragraph{Uniformly bounded samples.}
Suppose there are deterministic constants $B_{y,c},B_{X,c}>0$, independent of $n$ and $d$, such that
\[
 \abs{y_{c,i}}\leq B_{y,c}/2,\qquad \abs{X_{c,ij}}\leq B_{X,c}/2
\]
almost surely for every $i,j,n,d$. Then $\norm{x_{c,i}}_2\leq\sqrt{d}\,B_{X,c}/2$, so the choices
\[
 K_{y,c}=B_{y,c}, \qquad K_{X,c}=B_{X,c}
\]
make clipping inactive for every sample size. For fixed positive privacy noise variances,
\eqref{eq:priv_var_def} then guarantees $\rho_{y,c}$-zCDP under label-only privatization whenever
\begin{equation}\label{eq:cndrho}
 \rho_{y,c}\geq \frac{B_{y,c}^2}{2\sydpc^2}.
\end{equation}
Under joint feature and label privatization, composition gives a
$(\rho_{y,c}+\rho_{X,c})$-zCDP guarantee, provided that \eqref{eq:cndrho} holds together with
\[
 \rho_{X,c}\geq \frac{d\,B_{X,c}^2}{2\sxdpc^2}.
\]
Thus, privacy budgets of constant order for labels and of linear order in $d$ for features suffice to maintain fixed privacy noise variances while keeping clipping inactive, satisfying Assumptions (\ref{ass:noise_const}) and (\ref{ass:priv_budg_large}).

\paragraph{Uniformly sub-Gaussian samples.}
Suppose now that, for some constants $a_{y,c},a_{X,c}>0$ independent of $n,d,i,j$,
the actual responses and feature coordinates satisfy sub-Gaussian tail bounds
\begin{equation} \label{eq:aux-subgaussian-tails}
 \bbP(|y_{c,i}|>t) \leq 2\exp\!\left(-\frac{t^2}{2a_{y,c}^2}\right), \qquad \bbP(|X_{c,ij}|>t) \leq 2\exp\!\left(-\frac{t^2}{2a_{X,c}^2}\right), \quad t\geq0.
\end{equation}
By \citep[Proposition 2.6.1]{vershynin2018hdp2edit} we have that \eqref{eq:aux-subgaussian-tails} implies the bounded moments of Assumption \eqref{ass:z-moment-bounds}, and thus, can be viewed as a specialization of this assumption. Let us now fix $q>3$ and, for $n\geq2$, choose
\begin{equation} \label{eq:aux-subgaussian-thresholds}
 K_{y,c}=2a_{y,c}\sqrt{2q\log n}, \qquad K_{X,c}=2a_{X,c}\sqrt{2q\log n}.
\end{equation}
A union bound and the sub-Gaussian assumption \eqref{eq:aux-subgaussian-tails} then give
\[
 \bbP\rbr{\max_{i\leq n_c}|y_{c,i}|>K_{y,c}/2}
 \leq 2n_c n^{-q}.
\]
Moreover, a feature row with norm exceeding $\sqrt{d}\,K_{X,c}/2$ must have a
coordinate exceeding $K_{X,c}/2$ in magnitude. Hence another
union bound gives
\[
 \bbP\rbr{\max_{i\leq n_c}\norm{X_{c,i}}_2>\sqrt{d}\,K_{X,c}/2} \leq 2n_cd\,n^{-q},
\]
where $X_{c,i}\in \bbR^{d}$ denotes the $i$-th row of $X_c$. Since $C$ is fixed and $d$ is proportional to $n$,
\[
 \bbP(\cE_n)\leq 2\sum_{c\in\mathcal A}n_c(1+d)n^{-q} =O (n^{2-q}).
\]
The right-hand side is summable because $q>3$, proving \eqref{eq:aux-clipping-inactive}. Since $\log n$ and $\log d$ are asymptotically equivalent, the clipping constants have orders
\[
 K_{y,c}\asymp\sqrt{\log d}, \qquad K_{X,c}\asymp\sqrt{\log d}.
\]
Thus, from \eqref{eq:priv_var_def}, we have that constant variances for the privacy noise are compatible with privacy budgets of orders $\log d$ for labels and $d\log d$ for features.

\paragraph{General samples.}

Assumptions \eqref{ass:z-moment-bounds}-\eqref{ass:C-client-cov-bounds} combined with the assumption on bounded
noise moments imply that features and labels have bounded $8$-th moments. To see this, independence, centering, and the uniformly bounded $8$-th moments of the entries of $Z_c$, as assumed in \eqref{ass:z-moment-bounds}, imply that, for every deterministic
$a\in\mathbb R^d$,
\begin{equation} \label{eq:aux-linear-form-eighth-moment}
 \bbE\abs{\sum_{j=1}^d a_jZ_{c,ij}}^8 \leq K_8\rbr{\sum_{j=1}^d a_j^2}^4,
\end{equation}
where $K_8$ is a constant independent of $c,i,n,d$ and $a$. Indeed, expanding the $8$-th power eliminates every term containing a singleton factor and the remaining terms are bounded by a constant times $\norm{a}_2^8$. Taking $a=\Sigma_c^{1/2}e_j$, where $e_j$ is the $j$-th coordinate
vector, gives
\[
 \bbE\abs{X_{c,ij}}^8 \leq K_8(\Sigma_{c,jj})^4 \leq K_8\alpha_\Sigma^4.
\]
Moreover, Assumption \eqref{ass:z-moment-bounds} gives $\sup_{c,n,d}\norm{\beta_c}_2<\infty$ almost surely. Let us condition on this probability one event. Since $\beta_c$ is independent of the design, \eqref{eq:aux-linear-form-eighth-moment} yields
\[
 \bbE\sbr{\abs{x_{c,i}^\top\beta_c}^8\,\middle|\,\beta_c} \leq K_8\norm{\Sigma_c^{1/2}\beta_c}_2^8 \leq K_8\alpha_\Sigma^4\norm{\beta_c}_2^8.
\]
Using $y_{c,i}=X_{c,i}^\top\beta_c+\varepsilon_{c,i}$ and $|u+v|^8\leq2^7(|u|^8+|v|^8)$, we  obtain constants $M_y,M_X<\infty$ such that
\[
 \sup_{c,i,n,d}\mathbb E|y_{c,i}|^8\leq M_y, \qquad \sup_{c,i,j,n,d}\mathbb E|X_{c,ij}|^8\leq M_X.
\]
For any fixed $\eta>0$, choose
\begin{equation}\label{eq:aux-moment-thresholds}
 K_{y,c}=2n_c^{1/4+\eta}, \qquad K_{X,c}=2\,n_c^{3/8+\eta}.
\end{equation}
Markov's inequality and a union bound give
\begin{align*}
 \bbP\rbr{\max_{i\leq n_c}\abs{y_{c,i}}>K_{y,c}/2} &\leq M_y n_c^{-1-8\eta},\\
 \bbP\rbr{\max_{i\leq n_c}\norm{x_{c,i}}_2>\sqrt d\,K_{X,c}/2} &\leq\mathbb P\rbr{ \max_{i\leq n_c,\,j\leq d}\abs{X_{c,ij}}>n_c^{3/8+\eta}} \leq M_Xd\,n_c^{-2-8\eta}.
\end{align*}
Since $C$ is fixed and $n_c\asymp d\asymp n$,
\[
 \bbP(\cE_n) \leq\sum_{c\in\mathcal A}\rbr{M_y n_c^{-1-8\eta}+M_Xd\,n_c^{-2-8\eta}} = O(n^{-1-8\eta}).
\]
This bound is summable, so the first Borel-Cantelli lemma proves \eqref{eq:aux-clipping-inactive}. Hence, constant variances for the privacy noise are compatible with privacy budgets of order $O(n_c^{1/2+2\eta})$ for labels and $O(d\,n_c^{3/4+2\eta})$ for features.

\subsection{Proof of the bias-variance identities}
\label{app:aux-bias-variance}

For the unclipped estimator, let us set $w_1=1$ and $\tilde{\varepsilon}_1=\varepsilon_1$, $\tilde{\varepsilon}_c=\varepsilon_c+\xi_c$, for  $c\in\cA$, with $s_1^2=\sigma_1^2$, $s_c^2=\sigma_{t,c}^2$, for $c\in\mathcal A$.
Then
\[
 \tilde{y}_c=X_cL_c\theta+\tilde{\varepsilon}_c
\]
for every client, where $\tilde y_1=y_1$. By assumption, all entries of the vectors $\tilde\varepsilon_c$ ($c\in\cC$) are independent between themselves and w.r.t.\ the designs and the signal, they are centered, they have variances $s_c^2$ and uniformly bounded $8$-th moments.
Define
\[
 \hat{u}\coloneqq\frac1n\sum_{c=1}^C w_cL_c^\top X_c^\top \tilde{\varepsilon}_c.
\]
Since $\Uhat=\Shat\theta+\hat{u}$, the closed form \eqref{eq:theta_closed_form} gives
\begin{equation}\label{eq:thetahat_theta}
    \hat\theta-\theta =-\Qhat\Lambda\theta+\Qhat \hat{u}.
\end{equation}
Taking its conditional expectation and covariance yields
\begin{align}
 \mathbb E[\hat\theta\mid X,\theta]-\theta &=-\Qhat\Lambda\theta, \notag\\
 \operatorname{Cov}(\hat\theta\mid X,\theta) &=\frac1n\Qhat \rbr{\sigma_1^2\Shat_1 + \sum_{c\in\mathcal A}w_c^2\sigma_{t,c}^2\Shat_c} \Qhat, \label{eq:aux-estimator-covariance}
\end{align}
where $X=(X_1,\ldots,X_C)$. Then, we have
\[
 B(\thetahat;\theta) = \norm{\Qhat\Lambda\theta}_{S_1}^2 = \theta^\top\Lambda\Qhat S_1\Qhat\Lambda\theta.
\]
Taking the trace of \eqref{eq:aux-estimator-covariance} against $S_1$ similarly gives
\[
 V(\thetahat;\theta) = \frac{1}{n}\Tr\sbr{ S_1\Qhat\rbr{\sigma_1^2\Shat_1 +\sum_{c\in\cA}w_c^2\sigma_{t,c}^2\Shat_c}\Qhat},
\]
thus proving \eqref{eq:biasvarclosed}. \qed
\section{Main theoretical results}
\label{sec:allproofs}

\subsection{Proof of Theorem~\ref{thm:main-det-eq}}
\label{app:proof-main-det-eq}

The proof proceeds in six steps. First, we express the variance and, under Assumption \eqref{ass:indep_signal}, the bias in terms of the scalar resolvent functional
\[
F=\frac1n\Tr(S_1\Qhat)
\]
and its derivatives with respect to the ridge parameters and source weights. Second, a
heterogeneous-column resolvent theorem gives a deterministic equivalent for \(F\) after introducing a strictly positive auxiliary regularization. Third, the regularization is
removed. Fourth, the convergence of \(F\) is transferred to all first derivatives entering the bias and variance formulas. Fifth, we combine these results to obtain the variance limit and the bias limit under Assumption~\eqref{ass:indep_signal}. Finally, in the sixth step, we identify the resulting derivative expressions with the quantities $g$ and $\cK$ appearing in the theorem, and apply Proposition \ref{prop:mc-directional} to the bias quadratic form to remove Assumption \eqref{ass:indep_signal} and obtain the general bias characterization.



\paragraph{Step 1: Reduction to one resolvent
trace.}
As it will play an important role in the proof, we will introduce the notation $\cF=e_1$.
Let $\cJ = \cbr{\beta,\delta_2\dots, \delta_C}$ and, for $j \in \cJ$, define the projection $ P_j:\bbR^{Cd} \to \bbR^{Cd}$ onto the $d$ coordinates of the $j$-th block, i.e.,
\[
    P_j v = \begin{cases}
        \matrix{v_{1}& \dots& v_{d}& 0& \dots & 0 }, & \mbox{if}\ j=\beta,\\[2mm]
        \matrix{0& \dots& 0& v_{(k-1)d+1}& \dots& v_{kd}& 0& \dots & 0}, & \mbox{if}\ j = \delta_k,
    \end{cases}
\]
for $v = \matrix{v_{1}& v_{2} & \dots& v_{Cd} }\in \bbR^{Cd}$.
We can rewrite the joint ridge term using these projections as $\Lambda=\sum_{j\in\cJ}\lambda_jP_j$. For convenience, we recall \[
    \Shat=\Shat_1+\sum_{c\in\cA}w_c\Shat_c,
    \qquad
    \Qhat=(\Shat+\Lambda)^{-1},
\]
and we refer to Section \ref{sec:preliminaries} for the rest of the notation. The central empirical resolvent functional for bias is defined as
\begin{equation} \label{eq:proof-F}
    F(\mathbf w,\lambda_\beta,\boldsymbol{\lambda}_\delta)
    \coloneqq\frac1n\Tr(S_1\Qhat),
\end{equation}
and the main formula for variance will be obtained from it
and its first derivatives. Indeed, using the chain rule for $\vartheta \in \cbr{\lambda_\beta,\lambda_{\delta_2}, \dots, \lambda_{\delta_c}, w_2, \dots, w_C}$, we get
\[
    \partial_{\vartheta}\Qhat = -\Qhat\rbr{\partial_\vartheta \Qhat^{-1}}\Qhat. 
\]
Hence, for $j\in\cJ$ and $c\in\cA$, we have
\begin{align}
    \partial_{\lambda_j}F
    &=-\frac1n\Tr(S_1\Qhat P_j\Qhat),
    \label{eq:proof-F-lambda-C}\\
    \partial_{w_c}F
    &=-\frac1n\Tr(S_1\Qhat\Shat_c\Qhat).
    \label{eq:proof-F-w-C}
\end{align}
We first use these identities to rewrite the variance. Note that
\[
    \sigma_1^2\Shat_1 + \sum_{c\in\cA}w_c^2\stc^2\Shat_c =  \sigma_1^2\Shat + \sum_{c\in\cA} w_c(w_c\stc^2-\sigma_1^2)\Shat_c, \qquad \Qhat\Shat\Qhat
    =\Qhat-\Qhat\Lambda\Qhat.
\]
Substituting this into \eqref{eq:biasvarclosed}, followed by
\eqref{eq:proof-F-lambda-C} - \eqref{eq:proof-F-w-C} yields
\begin{equation}
\label{eq:proof-variance-F}
    V = \sigma_1^2\rbr{F+\sum_{j\in\cJ} \lambda_j\partial_{\lambda_j}F} - \sum_{c\in\cA}w_c(w_c\stc^2-\sigma_1^2)\partial_{w_c}F.
\end{equation}
We next identify a similar representation for the bias. First, introduce $M\coloneqq\Lambda\Qhat S_1\Qhat\Lambda$, and partition $M$ into $d\times d$ blocks $M_{jk}$ indexed by $j,k\in\cJ$.  Defining $\theta_\beta=\beta$ and
$\theta_{\delta_c}=\delta_c$ for $c\in\cA$, we can expand the
expression for the bias in  \eqref{eq:biasvarclosed} as
\begin{equation}
\label{eq:proof-bias-block-C}
    B=\sum_{j,k\in\cJ}\theta_j^\top M_{jk}\theta_k.
\end{equation}
The signal blocks are independent and centered, so, conditionally on all data feature matrices,
\begin{align*}
    \bbE_\theta[B\mid X_1,\ldots,X_C]
    &= \sum_{j\in \cJ} \frac{r_j^2}{d}\Tr(M_{jj})\\
    &= \sum_{j\in \cJ}\frac{r_j^2\lambda_j^2}{d}\Tr(S_1\Qhat P_j\Qhat)\\
    &= - \sum_{j\in \cJ}\tau_j\lambda_j^2\partial_{\lambda_j}F,
\end{align*}
where last two equalities follow from the definition of $M$, cyclicity of the trace and \eqref{eq:proof-F-lambda-C}. Therefore
\begin{align}
    \bbE_\theta[B\mid X_1,\ldots,X_C]
    &=- \sum_{j\in \cJ}\tau_j\lambda_j^2\partial_{\lambda_j}F\notag\\
    &=-\tau_\beta\lambda_\beta^2\partial_{\lambda_\beta}F-\sum_{c\in\cA}\tau_{\delta_c}\lambda_{\delta_c}^2\partial_{\lambda_{\delta_c}}F \label{eq:proof-mean-bias-F}.
\end{align} 

\paragraph{Step 2: Deterministic equivalent for $F$ for strictly positive
regularization.}

The matrix $\Shat + \Lambda$, showing up in definition of $F$,
may have zero ridge parameters. We therefore first add an auxiliary scalar regularization, as it is technically simpler to work with. For $\eta>0$, define
\[
    \Lambda_\eta\coloneqq\Lambda+\eta I_{Cd}, \qquad \Qhat_\eta\coloneqq(\Shat+\Lambda_\eta)^{-1}, \qquad F_\eta\coloneqq\frac1n\Tr(S_1\Qhat_\eta).
\]
We will apply the heterogeneous-column resolvent theorem \citep[Theorem~1]{wagner2012large}. To this end, set $w_1\coloneqq1$, and let
\begin{equation}
\label{eq:proof-Qbar-eta}
    \bar Q_\eta
    =\sbr{\Lambda_\eta + \sum_{c=1}^C \frac{\pi_cw_cS_c}{1+e_{c,\eta}}}^{-1},
\end{equation}
where
\begin{equation}
\label{eq:proof-e-eta}
    e_{c,\eta} = \frac{w_c}{n}\Tr(S_c\bar Q_\eta), \qquad c=1,\ldots,C.
\end{equation}
Next, notice that all of the assumptions of \citep[Theorem~1]{wagner2012large} are satisfied, as they follow from Assumptions \eqref{ass:C-client-asymptotics}, \eqref{ass:z-moment-bounds} and \eqref{ass:C-client-cov-bounds}. Thus, there is a unique solution to the fixed point system defined by \eqref{eq:proof-Qbar-eta}-\eqref{eq:proof-e-eta} such that, for every deterministic $A_d\in\mathbb R^{Cd\times Cd}$ with uniformly bounded operator norm,
\[
    \frac{1}{n}\Tr\sbr{A_d(\Qhat_\eta-\bar Q_\eta)} \asconv 0.
\]
Taking $A_d=S_1$ and $\cF_\eta\coloneqq\frac1n\Tr(S_1\bar Q_\eta)$ yields
\begin{equation}
\label{eq:proof-F-DE-eta}
    \abs{F_\eta-\cF_\eta}\asconv0.
\end{equation}

\paragraph{Step 3: Removing the regularization.}

We prove that \eqref{eq:proof-F-DE-eta} holds for $\eta = 0$, that is
\begin{equation}
    \label{eq:proof-f-cf-convergence}
    \abs{F-\cF}\asconv0.
\end{equation}
We will first show that the objects $F$ and $\cF$ are indeed well-defined. 
Note that 
\[
    \Sigmahat_c = \pi_c\Sigma_c^{1/2}
    \rbr{\frac1{n_c}Z_c^\top Z_c}\Sigma_c^{1/2}.
\]
By the Bai-Yin theorem \citep[Theorem 5.11]{bai2010spectral}, 
there exist
constants $0<a<A<\infty$ such that, almost surely, for all sufficiently large $n$,
\begin{equation}
\label{eq:proof-client-spectrum}
    aI_d\preceq\hat\Sigma_c\preceq AI_d,
    \qquad c=1,\ldots,C.
\end{equation}
We will now prove that this implies that $\Shat$ is positive definite. Let, $u = [u_1^\top,u_2^\top,\ldots,u_C^\top]^\top \in \bbR^{Cd}$ be an arbitrary non-zero vector, where $u_i \in \bbR^{d}$, $i \in \cC$. Then 
\[
    u^\top\Shat u = u_1^\top\Sigmahat_1u_1 +\sum_{c\in\cA}w_c(u_1+u_c)^\top\Sigmahat_c(u_1+u_c).
\]
On the event \eqref{eq:proof-client-spectrum}, it holds
\begin{equation*}
    u^\top\Shat u \geq a\norm{u_1}_2^2
    +aw_{-}\sum_{c\in\cA}\norm{u_1+u_c}_2^2>0.
\end{equation*}
Moreover, as $\norm{u_c}_2^2 \leq 2\norm{u_1}_2^2+2\norm{u_1+u_c}_2^2$, it holds
\[
    \norm{u}_2^2 \leq (2C-1)\rbr{\norm{u_1}_2^2+\sum_{c\in \cA}\norm{u_1+u_c}_2^2}.
\]
This implies that $\Shat$ is positive definite, so there exists a constant $\kappa_{\mathrm{E}}>0$, independent of $n$, such that
\begin{equation}
\label{eq:proof-S-gap}
    \Shat\succeq\kappa_{\mathrm{E}}I_{Cd}
\end{equation}
almost surely for all large enough $n$. As $\Qhat_\eta = (\Shat+\Lambda_\eta)^{-1}$, \eqref{eq:proof-S-gap} implies that almost surely
\begin{equation}
\label{eq:proof-Q-bound}
\sup_{\eta\geq 0}\norm{\Qhat_\eta}_{2} \leq \norm{\Shat^{-1}}_{2}\leq \kappa_{\mathrm{E}}^{-1}
\end{equation}
even when some or all ridge parameters are
zero. This means that the matrix $\Qhat$, and so $F$ as well, is indeed well defined. Next, we turn our attention to $\bar{Q}$ and the corresponding fixed point equations. Fix an arbitrary $\eta>0$. Then, 
\eqref{eq:proof-Qbar-eta} implies that, for $c\in \cC$,
\[
    \bar Q_\eta^{-1} \succeq \sum_{c=1}^C \frac{\pi_cw_cS_c}{1+e_{c,\eta}} \succeq \frac{\pi_cw_cS_c}{1+e_{c,\eta}}.
\]
Taking the trace on the range of $S_c$ gives
\[
    \frac{\pi_cw_c}{1+e_{c,\eta}}
    \Tr(S_c\bar Q_\eta)
    \leq\operatorname{rank}(S_c)=d.
\]
As $e_{c,\eta}$ satisfies \eqref{eq:proof-e-eta} and $n\pi_c=n_c$,
\begin{equation}
\label{eq:proof-e-ratio}
    \frac{e_{c,\eta}}{1+e_{c,\eta}}
    \leq\frac d{n_c}.
\end{equation}
Let us choose $\gamma_*\in(\max_c\bar\gamma_c,1)$.  For all sufficiently large $n$,
\begin{equation*}
    \frac{1}{1+e_{c,\eta}}\geq1-\gamma_*,
    \qquad c=1,\ldots,C,
\end{equation*}
and therefore
\begin{equation*}
    \bar Q_\eta^{-1}
    \succeq
    (1-\gamma_*)\sum_{c=1}^C\pi_cw_cS_c.
\end{equation*}
We can prove in the same way as for $\Shat$ that the matrix on the right is uniformly positive definite. Namely, there exists $\kappa_{\mathrm{DE}}>0$, independent of $n$, such that
\begin{equation}\label{eq:proof-qbar-inverse-bound}
    \bar Q_\eta^{-1} \succeq
    (1-\gamma_*)\sum_{c=1}^C\pi_cw_cS_c\succeq\kappa_{\mathrm{DE}}I_{Cd}
\end{equation}
uniformly in $\eta>0$ for all sufficiently large $n$.  This further implies that
\begin{equation}
\label{eq:proof-Q-bar-bound}
\sup_{\eta>0}\norm{\bar{Q}_\eta}_{2} \leq\kappa_{\mathrm{DE}}^{-1}.
\end{equation}
By \eqref{eq:proof-e-ratio}, for all sufficiently large $n$,
\[
0 \le e_{c,\eta} \le \frac{\gamma_*}{1-\gamma_*},
\qquad c\in\cC,\quad \eta>0.
\]
Fix such an $n$ and an arbitrary sequence $\eta_k\downarrow0$.
By compactness in $\mathbb R^C$, there exists a subsequence
$\eta_{k_i}$ along which the vector
$(e_{1,\eta_{k_i}},\ldots,e_{C,\eta_{k_i}})$ converges.
We define
\begin{equation}\label{eq:proof-eq-limit}
    e_c \coloneqq \lim_{i\to\infty} e_{c,\eta_{k_i}}.
\end{equation}
Note that by \eqref{eq:proof-Qbar-eta} - \eqref{eq:proof-e-eta} it holds
\begin{align*}
    \lim_{i\to\infty} e_{c,\eta_{k_i}} &=  \lim_{i\to\infty} \frac{w_c}{n}\Tr \sbr{S_c\rbr{\Lambda_{\eta_{k_i}} + \sum_{c=1}^C\frac{\pi_cw_cS_c}{1+e_{c,\eta_{k_i}}}}^{-1}} \\
    &= \frac{w_c}{n}\Tr\sbr{S_c\rbr{\lim_{i\to\infty}\Lambda_{\eta_{k_i}} + \sum_{c=1}^C\frac{\pi_cw_cS_c}{1+\lim_{i\to\infty}e_{c,\eta_{k_i}}}}^{-1}}\\
    &= \frac{w_c}{n}\Tr\sbr{S_c\rbr{\Lambda + \sum_{c=1}^C\frac{\pi_cw_cS_c}{1+e_{c}}}^{-1}},
\end{align*}
where the second equality comes from continuity and existence of the limits of a converging sequence. Thus, $e_c$ solves the fixed point equations
\[
    \bar Q =\sbr{\Lambda + \sum_{c=1}^C\frac{\pi_cw_cS_c}{1+e_c}}^{-1}, \qquad e_c=\frac{w_c}{n}\Tr(S_c\bar Q),\qquad \text{ for } c\in \cC.
\]
The limiting matrix being inverted is bounded below by
$\kappa_{\mathrm{DE}}I$ by \eqref{eq:proof-qbar-inverse-bound}, so $\bar Q$ is
well-defined and positive definite. Moreover, the fixed point
equations imply
\[
e_c=\frac{w_c}{n}\operatorname{Tr}(S_c\bar Q)>0,
\qquad c\in\cC,
\]
because $w_c>0$, $S_c$ is positive semidefinite and nonzero,
and $\bar Q$ is positive definite.

Next, let us now prove that the positive fixed point of this equation is unique. To this end, let us define the fixed point map $\Phi:\bbR^C \to \bbR^{C}$ coordinate-wise as $\Phi \coloneqq (\Phi_1,\ \dots,\ \Phi_C)$ where each coordinate function $\Phi_c$, $c\in \{1,\ \dots,\ C\}$, is defined as 
\begin{equation}\label{eq:proof-phic-def}
    \Phi_c(\mathbf{e}) = \frac{w_c}{n}\Tr\sbr{S_c\rbr{\Lambda+ \sum_{c=1}^C\frac{\pi_cw_cS_c}{1+e_c}}^{-1}},
\end{equation}
for $\mathbf{e} = (e_1,\ \dots,\ e_C)$. Note that $\Phi$ is componentwise increasing and $\Phi(\alpha \mathbf{e})<\alpha\Phi(\mathbf{e})$, for every $\alpha>1$. Assume by contradiction that there exist two solutions $\mathbf{e}^{(1)}> 0$ and $\mathbf{e}^{(2)}> 0$ such that
\[
    \Phi(\mathbf{e}^{(1)}) = \mathbf{e}^{(1)},\qquad  \Phi(\mathbf{e}^{(2)}) = \mathbf{e}^{(2)}.
\]
Suppose, without loss of generality, that $\alpha = \max_{c\in \cC} \frac{e_c^{(1)}}{e_c^{(2)}}>1$. Then, for every $c\in \cC$, $
e_c^{(1)}\leq \alpha e_c^{(2)}$ and, by monotonicity, $
\Phi(\mathbf{e}^{(1)})\leq  \Phi(\alpha\mathbf{e}^{(2)})$.
As $\mathbf{e}^{(1)}$ is a fixed point, $
\mathbf{e}^{(1)}\leq  \Phi(\alpha\mathbf{e}^{(2)})$.
Next, using that $\Phi(\alpha \mathbf{e}^{(2)})<\alpha\Phi(\mathbf{e}^{(2)})$ and that $\mathbf{e}^{(2)}$ is a fixed point we get that
$
    \mathbf{e}^{(1)}< \alpha\mathbf{e}^{(2)}$
holds componentwise. However, by definition of $\alpha$, there exists $c$ s.t.\ $e^{(1)}_c= \alpha e^{(2)}_c$. This gives a contradiction and proves uniqueness.

Having established existence and uniqueness of the fixed point at \(\eta=0\), fix a deterministic sequence \(\eta_m=1/m\), independent of \(n\), for the remainder of the convergence argument. Using the triangle inequality and taking $\limsup_{n\to \infty}$, we have
\begin{equation} \label{eq:limsup-f-bound}
    \limsup_{n\to \infty}\abs{F-\cF} \leq \limsup_{n\to \infty}\abs{F-F_{\eta_{m}}} + \limsup_{n\to \infty}\abs{F_{\eta_{m}} - \cF_{\eta_{m}}} + \limsup_{n\to \infty}\abs{\cF_{\eta_{m}} - \cF}.
\end{equation}
For the first term on the RHS, note that $\Qhat_\eta-\Qhat=-\eta\Qhat_\eta\Qhat$, so it follows 
\begin{equation*}
    \abs{F_\eta - F} = \abs{\frac{1}{n}\Tr\sbr{S_1(\Qhat_\eta - \Qhat)}} = \frac{\eta}{n}\abs{\Tr\sbr{S_1\Qhat_\eta\Qhat}}
\end{equation*}
Then, we can give a bound
\begin{align}
    \abs{F_{\eta_{m}} - F} &\leq \frac{\eta_{m} d}{n}\,\norm{S_1}_2\,\norm{\Qhat_{\eta_{m}}}_2\,\norm{\Qhat}_2 \notag\\
    &\leq \eta_{m} \gamma_* \alpha_{\Sigma} \kappa_{\mathrm{E}}^{-2},\label{eq:proof-feta-f-bound}
\end{align}
where the last inequality holds almost surely for large enough $n$. For the second term of RHS, we have that from \eqref{eq:proof-F-DE-eta} it holds that almost surely 
\begin{equation}\label{eq:proof-feta-cfeta-limsup}
    \limsup_{n\to\infty} \abs{F_{\eta_{m}} - \cF_{\eta_{m}}}= 0,
\end{equation}
for each fixed $\eta_{m}$. 

Lastly, we give a bound for the third term of the RHS. To this end, let us extend the definition of $\Phi$ from \eqref{eq:proof-phic-def} to account for an arbitrary $\eta$ parameter. Namely, we introduce $\Phi_\eta: \bbR^C\to \bbR^C$ as $\Phi_\eta = (\Phi_{\eta,1},\ \dots,\ \Phi_{\eta,C})$ where each coordinate function $\Phi_{\eta,c}$, $c\in \{1,\ \dots,\ C\}$, is defined as  
\[
    \Phi_{\eta,c}(\mathbf {e})\coloneqq \frac{w_c}{n}\Tr\sbr{S_c \rbr{\Lambda+\eta I + \sum_{k=1}^C \frac{\pi_kw_kS_k}{1+e_k}}^{-1}},
\]
for $\mathbf{e} = (e_1,\ \dots,\ e_C)$. One can readily verify that $\Phi_\eta$ is componentwise increasing as well as decreasing in $\eta$, that is, $\Phi_\eta(\mathbf{e}) \leq \Phi_0(\mathbf{e})=\Phi(\mathbf{e})$. As before for $\Phi$, we can prove the uniqueness of the fixed point for $\Phi_\eta$. 

Now, let us denote by $\mathbf {e}^{(0)}=(e_1^{(0)},\ldots,e_C^{(0)})$ and $\mathbf {e_\eta}=(e_{1,\eta},\ldots,e_{C,\eta})$ the fixed point of $\Phi$ and $\Phi_\eta$, respectively, such that
\[
    \Phi_{0}(\mathbf {e}^{(0)}) = \Phi (\mathbf {e}^{(0)}) = \mathbf {e}^{(0)}, \qquad \Phi_{\eta}(\mathbf {e_\eta}) = \mathbf {e_\eta}.
\]
Since $\Phi_\eta$ is componentwise increasing and decreasing in $\eta$, it holds
\[
\Phi_\eta(\mathbf {e}^{(0)})\leq \Phi (\mathbf {e}^{(0)}) = \mathbf {e}^{(0)}.
\]
Thus, we define a sequence $\{\mathbf {e}^{(i)}\}_{i=0}^\infty$ such that
\[
    \mathbf {e}^{(i+1)} \coloneqq \Phi_\eta(\mathbf{e}^{(i)}),
\]
which is decreasing, due to the componentwise monotonicity of $\Phi_\eta$. As it is bounded below by $0$, the sequence $\mathbf {e}^{(i)}$ converges componentwise to some
$\mathbf {e}^*\geq0$. By continuity of $\Phi_\eta$,
$\mathbf e^*=\Phi_\eta(\mathbf e^*)$, so uniqueness of the fixed
point gives $\mathbf e^*=\mathbf{e_\eta}$. Therefore, from the monotonicity of the sequence, it holds 
\[
    0\leq e_{c,\eta} = \lim_{i\to\infty} e_c^{(i)}\leq e_c^{(0)},
    \qquad c\in \cC.
\]
Let us now define the difference we would like to bound as
\[
    r_\eta \coloneqq \max_{1\leq c\leq C} \frac{e_c^{(0)}-e_{c,\eta}}{1+e_c^{(0)}}.
\]
Since $e_{c,\eta}\geq0$, one has $0\leq r_\eta<1$.  By definition it holds $1+e_{c,\eta} \geq (1-r_\eta)(1+e_c^{(0)})$, and therefore
\[
    \frac{1}{1+e_{c,\eta}} \leq \frac{1}{1-r_\eta}\frac{1}{1+e_c^{(0)}}.
\]
Using the fixed point representations of $\bar Q_\eta$ and $\bar Q$,
we obtain
\begin{align*}
    \bar Q_\eta^{-1} &= \Lambda+\eta I + \sum_{c=1}^C \frac{\pi_cw_cS_c}{1+e_{c,\eta}}\\
    &\preceq \Lambda+\eta I + \frac{1}{1-r_\eta} \sum_{c=1}^C \frac{\pi_cw_cS_c}{1+e_c^{(0)}}\\
    &\preceq \frac{1}{1-r_\eta}\bar Q^{-1}+\eta I.
\end{align*}
By \eqref{eq:proof-Q-bar-bound}, and the fact that $\bar Q$ is well defined as the limit of $\bar{Q}_{\eta_{k_i}}$ where $\eta_{k_i}\downarrow 0$, it holds $\bar Q^{-1}\succeq\kappa_{\mathrm{DE}}I$, for all sufficiently large $n$.  Therefore we get
\[
    \eta I \preceq \frac{\eta}{\kappa_{\mathrm{DE}}}\bar Q^{-1},
\]
so that
\[
    \bar Q_\eta^{-1} \preceq \rbr{\frac{1}{1-r_\eta} + \frac{\eta}{\kappa_{\mathrm{DE}}}}\bar Q^{-1}.
\]
If we denote by $q_\eta$ the reciprocal of the bracket coefficient, after inversion we get
\begin{equation}\label{eq:qbar_bound}
    \bar Q_\eta \succeq q_\eta\bar Q, \qquad q_\eta \coloneqq \frac{1-r_\eta} {1+\frac{\eta}{\kappa_{\mathrm{DE}}}(1-r_\eta)}.
\end{equation}
Then it directly follows that
\[
    e_{c,\eta}\geq q_\eta e_c^{(0)},
\]
and so
\[
    \frac{e_c^{(0)}-e_{c,\eta}}{1+e_c^{(0)}} \leq (1-q_\eta)\frac{e_c^{(0)}}{1+e_c^{(0)}}.
\]
From \eqref{eq:proof-e-ratio}, and the fact that $e_c^{(0)}$ is well defined as the limit of $e_{c,\eta_{k_i}}$ where $\eta_{k_i}\downarrow 0$, for all sufficiently large $n$,
\[
    \frac{e_c^{(0)}}{1+e_c^{(0)}} \leq \frac{d}{n_c} \leq\gamma_*<1.
\]
Taking the maximum over $c$ gives
\[
    r_\eta\leq\gamma_*(1-q_\eta).
\]
Writing $a_\eta\coloneqq\eta/\kappa_{\mathrm{DE}}$, we have
\[
    1-q_\eta = \frac{r_\eta+a_\eta(1-r_\eta)} {1+a_\eta(1-r_\eta)} \leq r_\eta+a_\eta.
\]
Therefore
\[
    r_\eta \leq \gamma_*(r_\eta+a_\eta),
\]
which implies
\begin{equation}\label{eq:proof-r-eta-bound}
    r_\eta \leq \frac{\gamma_*}{1-\gamma_*} \frac{\eta}{\kappa_{\mathrm{DE}}}.
\end{equation}
By definition it holds that $\cF=e_1^{(0)}$ and $\cF_\eta=e_{1,\eta}$. Moreover, \eqref{eq:proof-e-ratio} implies
\[
    1+e_1^{(0)}\leq\frac{1}{1-\gamma_*}.
\]
Combining this with \eqref{eq:proof-r-eta-bound} yields
\begin{equation}    \label{eq:proof-F-eta-deterministic}
    0\leq\abs{\cF-\cF_\eta} \leq \eta\ \frac{\gamma_*} {\kappa_{\mathrm{DE}}(1-\gamma_*)^2}.
\end{equation}

Finally, plugging into \eqref{eq:limsup-f-bound} the bounds from \eqref{eq:proof-feta-f-bound}, \eqref{eq:proof-feta-cfeta-limsup} and \eqref{eq:proof-F-eta-deterministic}  we get that for each $\eta_{m}$ it holds almost surely
\[
     \limsup_{n\to \infty}\abs{F-\cF} \leq \eta_{m} \gamma_* \alpha_{\Sigma} \kappa_{\mathrm{E}}^{-2}+ \eta_{m}\ \frac{\gamma_*} {\kappa_{\mathrm{DE}}(1-\gamma_*)^2}.
\]
As a countable intersection of probability $1$ sets is of probability $1$, the previous inequality holds with probability $1$ over all $\eta_{m}$ simultaneously. Taking $\lim_{m\to\infty}$ finally gives
\[
     \limsup_{n\to \infty}\abs{F-\cF} = \lim_{m\to\infty}\limsup_{n\to \infty}\abs{F-\cF} \leq \lim_{m\to\infty}\eta_{m}\rbr{ \gamma_* \alpha_{\Sigma} \kappa_{\mathrm{E}}^{-2}+\frac{\gamma_*} {\kappa_{\mathrm{DE}}(1-\gamma_*)^2}} = 0,
\]
almost surely, proving the desired claim.

\paragraph{Step 4: Deterministic equivalents for first derivatives of $F$.}

We now transfer the convergence $F-\cF\asconv 0$ to the first
derivatives appearing in \eqref{eq:proof-variance-F} and
\eqref{eq:proof-mean-bias-F}.  Let
\[
    \cP \coloneqq \{\lambda_\beta\} \cup \{\lambda_{\delta_c}:c\in\cA\} \cup \{w_c:c\in\cA\}.
\]
We will prove that, for every $\vartheta\in\cP$,
\begin{equation}
    \label{eq:proof-derivative-convergence}
    \partial_\vartheta F-\partial_\vartheta\cF
    \asconv 0.
\end{equation}
When $\vartheta$ is a ridge parameter equal to zero, the derivative is
understood as the right derivative.

Take $\vartheta\in\cP$ and keep all other parameters fixed.
We write $t_0$ for the value of $\vartheta$ at the parameter tuple under
consideration, and regard $F$ and $\cF$, and any other objects depending on $\vartheta$, as functions of the scalar
variable $t$ obtained by replacing $\vartheta$ by $t$. Let us choose an arbitrary
$h_0>0$ and consider the one-sided interval
\[
    \cI_\vartheta=[t_0,t_0+h_0].
\]
The interval can be chosen sufficiently small that all weights remain
bounded above and bounded away from zero.

We first establish a uniform second-derivative bound for the empirical
quantity $F$.  On the almost-sure event
\eqref{eq:proof-S-gap}, there exists a constant $\kappa_{\mathrm{E}}>0$, independent of $n$,
such that, almost surely for all sufficiently large $n$,
\[
    \Shat+\Lambda(t)\succeq\Shat\succeq\kappa_{\mathrm{E}} I_{Cd}.
\]
Note that the same lower bound holds uniformly for $t\in \cI_\vartheta$: increasing
a ridge parameter adds a positive semidefinite matrix, while increasing
a weight adds a positive multiple of $\Shat_c$.  Hence,
\[
    \sup_{t\in \cI_\vartheta} \norm{\Qhat(t)}_{2} \leq \kappa_{\mathrm{E}}^{-1}
\]
almost surely for all sufficiently large $n$.

Differentiating the resolvent identity gives, for a ridge parameter
$\lambda_j$ and a weight $w_c$, respectively,
\begin{align*}
    \partial_{\lambda_j}^2F
    &=
    \frac{2}{n}
    \Tr\!\left(
        S_1\Qhat P_j\Qhat P_j\Qhat
    \right),\qquad
    \partial_{w_c}^2F
    =
    \frac{2}{n}
    \Tr\!\left(
        S_1\Qhat\Shat_c\Qhat\Shat_c\Qhat
    \right).
\end{align*}
As $S_1$, $P_j$, $\Shat_c$, and $\Qhat(t)$
are uniformly bounded in operator norm, there exists a constant
$K_1<\infty$ such that
\begin{equation}\label{eq:proof-random-second-derivatives}
    \sup_{t\in \cI_\vartheta}\abs{\partial_t^2F(t)}\leq K_1
\end{equation}
almost surely for all sufficiently large $n$.

We next establish an analogous bound for $\cF$.  Recall that $\cF=\cF(t)=e_1(t)$, where $\mathbf e(t)=(e_1(t),\ldots,e_C(t))$ is the unique fixed
point of
\[
    \mathbf e(t)=\Phi(\mathbf e(t),t),
\]
for $\Phi$ defined in \eqref{eq:proof-phic-def} extended to include dependence on parameter $t$. We will prove that the deterministic fixed point $\mathbf{e}(t)$ is locally smooth in the parameter $t$.
To make this precise, let us make the dependence on $t$ explicit by denoting for an arbitrary $\mathbf e=(e_1,\ldots,e_C)$ 
\begin{equation}\label{eq:Qet_phiet_def}
    Q(\mathbf e,t) \coloneqq \rbr{\Lambda(t) + \sum_{k=1}^C \frac{\pi_k w_k(t)S_k}{1+e_k}}^{-1},
\end{equation}
so  $\Phi_c(\mathbf e,t) = \frac{w_c(t)}{n} \Tr\rbr{S_cQ(\mathbf e,t)}$.
Note that, along the fixed point solution $\mathbf e(t)$, it holds
\[
    \bar Q(t)=Q(\mathbf e(t),t), \qquad 
\]
We will also occasionally drop dependence on $t$ and $\mathbf{e}$ to simplify exposition. Recall that only the parameter corresponding to $\vartheta$ varies with $t$,
all remaining ridge parameters and weights are kept fixed. 
The argument leading to \eqref{eq:proof-qbar-inverse-bound}, applied uniformly over
$t\in \cI_\vartheta$, gives a constant $\kappa_{\mathrm{DE}}>0$,
independent of $n$, such that, for all sufficiently large $n$,
\begin{equation}
    \label{eq:proof-local-Qbar-bound}
\sup_{t\in \cI_\vartheta}
\|Q(\mathbf{e}(t),t)\|_{2}
=
\sup_{t\in \cI_\vartheta}
\|\bar Q(t)\|_{2}
\le \kappa_{\mathrm{DE}}^{-1}.
\end{equation}
All bounds on partial derivatives below are evaluated at
$(\mathbf{e},t)=(\mathbf{e}(t),t)$, with $\mathbf{e}$ and $t$ treated as independent
variables when taking these derivatives.
Moreover, from \eqref{eq:proof-e-ratio},
\[
    \frac{e_c(t)}{1+e_c(t)}
    \leq
    \frac{d}{n_c}
    \leq\gamma_*<1,
\]
uniformly for $t\in\cI_\vartheta$ and all sufficiently large $n$.
Hence
\begin{equation}
    \label{eq:proof-local-e-bound}
    0\leq e_c(t)
    \leq
    \frac{\gamma_*}{1-\gamma_*},
    \qquad
    c=1,\ldots,C.
\end{equation}
In particular, the factors $(1+e_c(t))^{-1}$ and their first two
derivatives with respect to $e_c$ are uniformly bounded.

We next verify that the fixed point map $\Phi$ has uniformly bounded
partial derivatives up to second order.  The basic identity used
throughout is
\begin{equation}
    \label{eq:proof-Qbar-partial}
    \partial_x Q
    =
    -Q(\partial_x Q^{-1})Q,
\end{equation}
for any scalar variable $x$.  For example,
\[
    \partial_{e_k} Q^{-1} = -\frac{\pi_k w_kS_k}{(1+e_k)^2},
\]
and therefore
\[
    \partial_{e_k}Q = Q \frac{\pi_k w_kS_k}{(1+e_k)^2} Q.
\]
Similarly,
\[
    \partial_{e_k e_k} Q^{-1} = \frac{2\pi_k w_kS_k}{(1+e_k)^3}, \qquad \partial_{e_k e_\ell}Q^{-1}=0, \quad k\neq \ell.
\]
If $t$ is a ridge parameter $\lambda_j$, then, with $\mathbf e$ kept
fixed,
\[
    \partial_t Q^{-1}=P_j, \qquad \partial_{tt} Q^{-1}=0.
\]
If $t$ is a weight $w_r$, then
\[
    \partial_tQ^{-1} = \frac{\pi_rS_r}{1+e_r}, \qquad \partial_{tt}Q^{-1}=0,
\]
again with $\mathbf e$ held fixed.  Mixed derivatives such as
$\partial_{te_k}Q^{-1}$ are either zero or a scalar multiple of $S_k$
with a uniformly bounded coefficient. Repeated use of \eqref{eq:proof-Qbar-partial} therefore shows that every first or second partial derivative of $Q(\mathbf e,t)$ with
respect to $(\mathbf e,t)$ is a finite sum of products of the forms
$Q A_1 Q A_2 Q$ and $QA_1Q$, where each $A_i$ is one of
$P_j$, $S_k$, or a uniformly bounded scalar multiple thereof.
The matrices $S_k$ have uniformly bounded operator norms by
\eqref{ass:C-client-cov-bounds}, while
\eqref{eq:proof-local-Qbar-bound} uniformly bounds $Q(\mathbf{e},t)$.
Consequently, all these first and second derivatives of $Q(\mathbf{e},t)$ have
uniformly bounded operator norm. The same is true for the derivatives of $\Phi$.  Indeed,
\[
    \Phi_c(\mathbf e,t) = \frac{w_c(t)}{n} \Tr(S_cQ(\mathbf e,t)).
\]
Since $\rank(S_c)\leq d$, for any matrix $A$ with
uniformly bounded operator norm,
\[
    \frac1n\abs{\Tr(S_cA)} \leq \frac{d}{n} \norm{S_c}_{2} \norm{A}_{2} \le K_2,
\]
for some constant $K_2<\infty$ independent of $n$.
Thus, for $t\in \cI_\vartheta$,
\begin{equation}
    \label{eq:proof-Phi-derivative-bounds}
    \norm{\partial_t\Phi(\mathbf{e},t)}_2 + \norm{\partial_{tt}\Phi(\mathbf{e},t)}_{2} + \norm{\partial_{t\mathbf e}\Phi(\mathbf{e},t)}_{2} + \norm{\partial_{\mathbf e\mathbf e}\Phi(\mathbf{e},t)}_{2} \leq K_3,
\end{equation}
for a constant $K_3<\infty$ independent of $n$,
where $\partial_{\mathbf{e}}$ denotes differentiation with respect to the vector $\mathbf{e} = (e_1,\ \dots,\ e_C)$ applied entrywise to the object that follows.

Define the Jacobian evaluated at the fixed point by
\[
J(t):=\partial_{\mathbf e}\Phi(\mathbf e(t),t).
\]
Its entries are
\begin{equation}\label{eq:jacobian_entries}
J_{ck}(t) = \frac{\pi_k w_c(t)w_k(t)} {n(1+e_k(t))^2} \operatorname{Tr} \rbr{S_c\bar Q(t)S_k\bar Q(t)}\ge 0.
\end{equation}
We suppress the dependence on $t$ when no confusion can arise. Let $\mathbf{v} = (v_1,\ \dots,\ v_C)\in \bbR^C$ be such that
\[
    v_c\coloneqq1+e_c, \qquad c=1,\ldots,C.
\]
Then, we have
\begin{equation}\label{eq:jacobian_bound}
    (Jv)_c = \frac{w_c}{n}\Tr\sbr{S_c\bar Q \rbr{\sum_{k=1}^C \frac{\pi_kw_kS_k}{1+e_k}}\bar Q}= \frac{w_c}{n}\Tr\sbr{S_c\bar Q \rbr{\bar Q^{-1}-\Lambda} \bar Q}= e_c - \frac{w_c}{n} \Tr(S_c\bar Q\Lambda\bar Q)\leq e_c.
\end{equation}
Therefore, in the weighted maximum norm $\norm{x}_v \coloneqq \max_{1\leq c\leq C}\frac{\abs{x_c}}{1+e_c}$, we have that, for all sufficiently large $n$,
\begin{equation*}
    \norm{J(t)}_v = \max_c\frac{(J(t)v)_c}{1+e_c} \leq \max_c\frac{e_c(t)}{1+e_c(t)} \leq \max_c\frac{d}{n_c} \leq\gamma_*<1,
\end{equation*}
where the bound on $e_c(t)/(1+e_c(t))$ is due to \eqref{eq:proof-e-ratio}.  Consequently, $I_C-J(t)$ is invertible, as
\begin{equation}
    \label{eq:proof-I-minus-J-bound}
    \norm{(I_C-J(t))^{-1}}_v \leq\frac{1}{1-\gamma_*},
\end{equation}
for $t\in \cI_\vartheta$. By \eqref{eq:proof-local-e-bound}, $1\le v_c(t)\le(1-\gamma_*)^{-1}$.
Consequently, uniformly in $n$ and $t$,
\[
\|x\|_{v(t)}
\le \|x\|_2
\le \frac{\sqrt C}{1-\gamma_*}\|x\|_{v(t)}.
\]
Combining this with \eqref{eq:proof-I-minus-J-bound} yields
\begin{equation}\label{eq:opnormb}
\sup_{t\in \cI_\vartheta}
\|(I_C-J(t))^{-1}\|_{2}
\le \frac{\sqrt C}{(1-\gamma_*)^2}.
\end{equation}
We can now control the derivatives of the fixed point itself. Let us write
\[
    G(\mathbf e,t) \coloneqq \mathbf e-\Phi(\mathbf e,t).
\]
Then, the fixed point relation is
\[
    G(\mathbf e(t),t)=0,
\]
and its Jacobian with respect to $\mathbf e$ is
\[
    \partial_{\mathbf e}G = I_C-J,
\]
Fix $n$ sufficiently large and $t_*\in \cI_\vartheta$.
At $(\mathbf e(t_*),t_*)$, all denominators $1+e_c(t_*)$ are
positive and the matrix defining $Q(\mathbf e(t_*),t_*)$ is
positive definite. Hence $Q$, $\Phi$, and $G$ are smooth
on an open neighborhood of this point in
$\mathbb R^C\times\mathbb R$, including when $t_*=0$. Note that $\partial_{\mathbf e}G$ is invertible, so the implicit function theorem
gives a local twice continuously differentiable branch
of solutions. Since $\mathbf e(t_*)$ is componentwise positive,
this branch remains positive after shrinking the
neighborhood. Uniqueness of the positive fixed point
identifies it with $\mathbf e(t)$ for nearby $t\in \cI_\vartheta$. Thus, $\mathbf e(t)$ is twice continuously differentiable on
$\cI_\vartheta$, with derivatives at the endpoints
interpreted one-sidedly. Differentiating $\mathbf e(t)=\Phi(\mathbf e(t),t)$ gives
\begin{equation}
    \label{eq:proof-e-prime-equation}
    (I_C-J)\mathbf e'(t) = \partial_t\Phi.
\end{equation}
Therefore,
\[
    \norm{\mathbf e'(t)}_2 \leq \norm{(I_C-J)^{-1}}_{2} \norm{\partial_t\Phi}_2.
\]
The inverse is uniformly bounded by
\eqref{eq:opnormb}, while
$\partial_t\Phi$ is uniformly bounded by
\eqref{eq:proof-Phi-derivative-bounds}.  Hence there exists
$K_4<\infty$, independent of $n$, such that
\begin{equation}
    \label{eq:proof-e-prime-bound}
    \sup_{t\in\cI_\vartheta} \norm{\mathbf e'(t)} \leq K_4.
\end{equation}
Differentiating \eqref{eq:proof-e-prime-equation} once more gives 
\begin{equation}
    \label{eq:proof-e-second-equation}
    (I_C-J)\mathbf e''(t) = \partial_{tt}\Phi + 2\,\partial_{t\mathbf e}\Phi\,\mathbf e'(t) + \partial_{\mathbf e\mathbf e}\Phi [\mathbf e'(t),\mathbf e'(t)].
\end{equation}
Note that here $\partial_{\mathbf e\mathbf e}\Phi[x,y]$ denotes the vector obtained
by applying the Hessian of each component of $\Phi$ to the pair
$(x,y)$.  By \eqref{eq:proof-Phi-derivative-bounds} and
\eqref{eq:proof-e-prime-bound}, every term on the right-hand side of
\eqref{eq:proof-e-second-equation} is uniformly bounded.  Applying
again the uniform bound on $(I_C-J)^{-1}$ coming from \eqref{eq:opnormb} yields a constant
$K_5<\infty$, independent of $n$, such that
\begin{equation}\label{eq:proof-e-secundum-bound}
    \sup_{t\in\cI_\vartheta} \norm{\mathbf e''(t)} \leq K_5.
\end{equation}
Since $\cF(t)=e_1(t)$, we conclude that, for some constant $K_6<\infty$
independent of $n$,
\begin{equation}
    \label{eq:proof-DE-second-derivatives}
    \sup_{t\in\cI_\vartheta}
    \abs{\partial_t^2\cF(t)}
    \leq K_6.
\end{equation}

We can now transfer the convergence of $F$ and $\cF$ to their first
derivatives.  Let $0<h<h_0$.  By Taylor's theorem, for some
$\xi_F,\xi_{\cF}\in(t_0,t_0+h)$, it holds
\begin{align*}
    F(t_0+h) &= F(t_0) + h\,\partial_tF(t_0) + \frac{h^2}{2}\partial_t^2F(\xi_F),\\
    \cF(t_0+h) &= \cF(t_0) + h\,\partial_t\cF(t_0) + \frac{h^2}{2}\partial_t^2\cF(\xi_{\cF}).
\end{align*}
Dividing by $h$ and using
\eqref{eq:proof-random-second-derivatives} and
\eqref{eq:proof-DE-second-derivatives} gives
\begin{align*}
    \abs{\partial_tF(t_0) - \frac{F(t_0+h)-F(t_0)}{h}} &\leq \frac{K_1}{2}h,\\
    \abs{\partial_t\cF(t_0) - \frac{\cF(t_0+h)-\cF(t_0)}{h}} &\leq \frac{K_6}{2}h.
\end{align*}
By adding and subtracting the two finite-difference quotients, we obtain
\[
    \abs{\partial_tF(t_0) - \partial_t\cF(t_0)} \leq \frac{K_1+K_6}{2}h + \frac{\abs{F(t_0+h)-\cF(t_0+h)} + \abs{F(t_0)-\cF(t_0)}}{h}.
\]
For every fixed $h>0$, both parameter values $t_0$ and $t_0+h$ are
fixed.  Hence the already established deterministic equivalent
convergence gives
\[
    F(t_0)-\cF(t_0)\asconv0, \qquad F(t_0+h)-\cF(t_0+h)\asconv0.
\]
Therefore, for fixed $h$,
\[
    \limsup_{n\to\infty} \abs{\partial_tF(t_0) - \partial_t\cF(t_0)} \leq \frac{K_1+K_6}{2}h
\]
almost surely.

Finally, let us take a deterministic sequence $h_m\downarrow0$.  Since this
sequence is countable, we may intersect the probability-one events
corresponding to all parameter values $t_0+h_m$.  On this common event,
the preceding inequality holds for every $m$.  Sending $m\to\infty$
then  proves \eqref{eq:proof-derivative-convergence}.


\paragraph{Step 5: Completing variance limit and bias limit under Assumption~\eqref{ass:indep_signal}.} For the variance, combining
\eqref{eq:proof-variance-F}, \eqref{eq:proof-f-cf-convergence} and
\eqref{eq:proof-derivative-convergence} immediately gives
\begin{equation}
    \label{eq:proof-variance-convergence-final}
    \abs{V-\bar{\cV}}\asconv 0,
\end{equation}
where $\bar{\cV} = \sigma_1^2\rbr{\cF+\sum_{j\in\cJ}\lambda_j\partial_{\lambda_j}\cF} - \sum_{c\in\cA}w_c(w_cs_c^2-\sigma_1^2)\partial_{w_c}\cF$.  

For the bias, recall from \eqref{eq:proof-bias-block-C} that
\[
    B = \sum_{j,k\in\cJ} \theta_j^\top M_{jk}\theta_k, \qquad M=\Lambda\Qhat S_1\Qhat\Lambda,
\]
where $M_{jk}$ denotes the $(j,k)$ block of $M$.  By
\eqref{eq:proof-S-gap}, $\Qhat$ is uniformly bounded almost surely for
all sufficiently large $n$. Since $\Lambda$ and $S_1$ are uniformly
bounded, there exists a deterministic constant $K_M<\infty$ such that,
almost surely for all sufficiently large $n$,
\begin{equation} \label{eq:proof-M-block-bound}
    \max_{j,k\in\cJ}\norm{M_{jk}}_{2}
    \leq K_M.
\end{equation}
For each $j\in\cJ$, by the assumption in \eqref{eq:C-client-random-signal}, we can write
\[
    \theta_j=\frac{r_j}{\sqrt d}x_j,
\]
where the coordinates of $x_j$ are independent, centered, have unit
variance, and have uniformly bounded eighth moments.  Moreover, the
vectors $x_j$, $j\in\cJ$, are independent of one another and of the
design matrices.

Let us condition on the designs.  For a diagonal block $j=k$, define
\[
    D_j \coloneqq \theta_j^\top M_{jj}\theta_j - \frac{r_j^2}{d}\Tr(M_{jj}).
\]
An application of 
\citep[Lemma~B.26]{bai2010spectral} gives
\[
    \bbE_\theta\sbr{\abs{x_j^\top M_{jj}x_j-\Tr(M_{jj})}^4 \,\middle|\,X_1,\ldots,X_C}\leq K_1\left[\Tr(M_{jj}M_{jj}^\top)\right]^2,
\]
for a constant $K_1$.
Using \eqref{eq:proof-M-block-bound}, we get
\[
    \Tr(M_{jj}M_{jj}^\top) \leq d\norm{M_{jj}}_{2}^2 \leq dK_M^2.
\]
Hence, since the $r_j$ are fixed,
\begin{equation}
    \label{eq:proof-diagonal-bias-fourth}
    \bbE_\theta \sbr{\abs{D_j}^4 \,\middle|\, X_1,\ldots,X_C} \leq \frac{K_2}{d^2}
\end{equation}
almost surely for all sufficiently large $n$ for a constant $K_2$. For two distinct blocks $j\neq k$, we have
\[
    \bbE_\theta \sbr{\abs{x_j^\top M_{jk}x_k}^4 \,\middle|\, X_1,\ldots,X_C} \leq K_3\sbr{\Tr(M_{jk}M_{jk}^\top)}^2.
\]
Therefore,
\begin{equation}
    \label{eq:proof-offdiagonal-bias-fourth}
    \bbE_\theta \sbr{\abs{\theta_j^\top M_{jk}\theta_k}^4 \,\middle|\, X_1,\ldots,X_C} \leq \frac{K_4}{d^2}, \qquad j\neq k.
\end{equation}
Now, by the independence and centering of the signal blocks,
\[
    \bbE_\theta[B\mid X_1,\ldots,X_C] = \sum_{j\in\cJ} \frac{r_j^2}{d}\Tr(M_{jj}),
\]
and consequently
\[
    B-\bbE_\theta[B\mid X_1,\ldots,X_C] = \sum_{j\in\cJ}D_j + \sum_{\substack{j,k\in\cJ\\j\neq k}} \theta_j^\top M_{jk}\theta_k.
\]
Since $C$ is fixed, the number of terms in the two sums is uniformly
bounded.  Thus, using \eqref{eq:proof-diagonal-bias-fourth} and
\eqref{eq:proof-offdiagonal-bias-fourth},
\[
    \bbE_\theta \sbr{\abs{B-\bbE_\theta[B\mid X_1,\ldots,X_C]}^4\,\middle|\,X_1,\ldots,X_C}\leq\frac{K_5}{d^2}
\]
almost surely for all sufficiently large $n$.

Let us fix $\varepsilon>0$.  Markov's inequality then gives
\[
    \bbP_\theta \rbr{\abs{B-\bbE_\theta[B\mid X_1,\ldots,X_C]}>\varepsilon \,\middle|\, X_1,\ldots,X_C} \leq \frac{K_6}{\varepsilon^4d^2}.
\]
As $\sum_{n=1}^\infty \frac{1}{d^2}<\infty$, for almost every realization of the design sequence,
the conditional Borel-Cantelli lemma gives
\begin{equation}
    \label{eq:proof-bias-concentration-final}
    B-\bbE_\theta[B\mid X_1,\ldots,X_C]
    \asconv 0.
\end{equation}
Since the signal vectors are independent of the designs, this conditional almost-sure statement is also an almost-sure statement with respect to the joint probability space generating both the designs and the signal vectors.

Finally, from \eqref{eq:proof-mean-bias-F},
\[
    \bbE_\theta[B\mid X_1,\ldots,X_C] = -\sum_{j\in\cJ} \tau_j\lambda_j^2\partial_{\lambda_j}F.
\]
By \eqref{eq:proof-derivative-convergence}, we obtain that
\begin{equation} \label{eq:proof-mean-bias-convergence-final}
    \bbE_\theta[B\mid X_1,\ldots,X_C]-\bar{\cB} \asconv0,
\end{equation}
where $\bar{\cB}=-\sum_{j\in\cJ}\tau_j\lambda_j^2 \partial_{\lambda_j}\cF$.  Combining \eqref{eq:proof-bias-concentration-final} and \eqref{eq:proof-mean-bias-convergence-final} yields
\[
    \abs{B-\bar{\cB}} \leq \abs{B-\bbE_\theta[B\mid X_1,\ldots,X_C]} + \abs{\bbE_\theta[B\mid X_1,\ldots,X_C]-\bar{\cB}} \asconv0,
\]
which, together with \eqref{eq:proof-variance-convergence-final}, gives the deterministic equivalent formulas for bias and variance under Assumption \eqref{ass:indep_signal}. In the last step that follows, we simplify deterministic equivalent formulas and extend the bias characterization to the general signal setting without Assumption \eqref{ass:indep_signal}.


\paragraph{Step 6: Extension to general signals and identification of the deterministic equivalents.}
Let $\bar Q_{S_1}(t)$ and
$\mathbf e(t)=(e_{c,S_1}(t))_{c=1}^C$ be the deterministic quantities in
Proposition~\ref{prop:mc-directional} with $A=S_1$. At $t=0$, these are exactly 
$\bar Q$ and $\mathbf e=(e_c)_{c=1}^C$ from \eqref{eq:delta-general-block-de}-\eqref{eq:delta-general-fixed-point}. Differentiating the fixed point equations \eqref{eq:qa-det-equiv-eq} gives
\[
    e'_{c,S_1}(0) =-\frac{w_c}{n}\Tr(S_c\bar Q S_1\bar Q) + \sum_{k=1}^C\frac{\pi_k w_cw_k}{n(1+e_k)^2} \Tr(S_c\bar Q S_k\bar Q)e_{k,S_1}'(0).
\]
Thus, $(I_C-J)\mathbf e'(0)=-b$, with $J$ and $b$ as in
\eqref{eq:J_b_g_def}. By the invertibility established in \eqref{eq:opnormb} in Step 4,
\[
    \mathbf e'(0)=-(I_C-J)^{-1}b=-g.
\]
Differentiating the matrix inverse $\bar{Q}_{S_1}(t)$ therefore yields
\begin{equation}\label{eq:proof-directional-K}
    - \frac{d}{dt}\bar Q_{S_1}(t)\Big|_{t=0}  =\bar Q\left[ S_1+\sum_{c=1}^C\frac{\pi_cw_cg_c}{(1+e_c)^2}S_c\right]\bar Q =\cK, \qquad g_c=\frac{w_c}{n}\Tr(S_c\cK).
\end{equation}

Let us first fix a deterministic, uniformly bounded signal sequence $\theta$, and set $a=\Lambda\theta$. The uniform bounds on $\Lambda$ and $\theta$ give $\sup_n\norm{a}_2<\infty$. Moreover, $S_1\succeq0$ is deterministic and
uniformly bounded. Thus, we can apply Proposition \ref{prop:mc-directional} with $a=\Lambda\theta$ and $A=S_1$. Namely, for
\[
    H_{S_1}(t)=a^\top(\Shat+\Lambda+tS_1)^{-1}a, \qquad \cH_{S_1}(t)=a^\top\bar Q_{S_1}(t)a,
\]
it follows from \eqref{eq:mc-H-derivative-convergence} that
\begin{equation}\label{eq:proof-general-bias-convergence}
    \frac{d}{dt}H_{S_1}(t)\Big|_{t=0}-\frac{d}{dt}\cH_{S_1}(t)\Big|_{t=0}\asconv0.
\end{equation}
On the other hand, the exact bias identity \eqref{eq:biasvarclosed} and
\eqref{eq:proof-directional-K} give
\begin{equation}\label{eq:bias_det_bias}
    B=- \frac{d}{dt}H_{S_1}(t)\Big|_{t=0}, \qquad -\frac{d}{dt}\cH_{S_1}(t)\Big|_{t=0}=\theta^\top\Lambda\cK\Lambda\theta,
\end{equation}
and thus, we have proved the convergence of bias in \eqref{eq:C-client-bias-variance-DE}. Recall that $\theta$ is almost surely uniformly bounded by Assumption \eqref{ass:z-moment-bounds}. Hence, we can apply Proposition \ref{prop:mc-directional} for almost every realization, which yields that  \eqref{eq:proof-general-bias-convergence} holds under the law of $\theta$. 

It remains to identify the scalar deterministic equivalents obtained in
the preceding five steps with the simplified expressions in the theorem. Define
\[
    D=\operatorname{diag}\rbr{\frac{\pi_c}{(1+e_c)^2}}_{c=1}^C, \qquad T_{ck}=\frac{w_cw_k}{n}\Tr(S_c\bar Q S_k\bar Q),
\]
and let $\mathbf u_c$ be the $c$-th coordinate vector of $\mathbb R^C$. Then $T=T^\top$, $J=TD$, and $b=T\mathbf u_1$, since $w_1=1$. The identity
\[
    (I_C-TD)^{-1}T=T(I_C-DT)^{-1}
\]
shows that $(I_C-J)^{-1}T$ is symmetric. Its first column is $g$, so
\begin{equation}\label{eq:proof-fixed-point-reciprocity}
    \mathbf u_1^\top(I_C-J)^{-1} =\mathbf u_1^\top+g^\top D.
\end{equation}
For $j\in\cJ$, set $q_c^{(j)}=w_c\Tr(S_c\bar Q P_j\bar Q)/n$. Differentiating the fixed point equations with respect to a ridge parameter and an auxiliary weight, respectively, gives
\[
    (I_C-J)\partial_{\lambda_j}\mathbf e=-q^{(j)}, \qquad (I_C-J)\partial_{w_c}\mathbf e  =\frac{1}{w_c}\sbr{e_c\mathbf u_c-(1+e_c)J\mathbf u_c}, \quad c\in\cA.
\]
Taking the first coordinate and using \eqref{eq:proof-fixed-point-reciprocity}, cyclicity of the trace, and \eqref{eq:ck_def}, we obtain
\begin{align}
    -\partial_{\lambda_j}\cF &=(\mathbf u_1^\top+g^\top D)q^{(j)} = \frac{1}{n}\Tr(P_j\cK), \qquad j\in\cJ, \label{eq:proof-ridge-derivative-K}\\
    -\partial_{w_c}\cF & =\frac{[(I_C-J)^{-1}]_{1c}}{w_c} = \frac{\pi_cg_c}{w_c(1+e_c)^2}, \qquad c\in\cA. \label{eq:proof-weight-derivative-g}
\end{align}
Consequently, under Assumption~\eqref{ass:indep_signal}, the scalar bias
equivalent established in Step~5 is
\begin{equation}\label{eq:C-client-bias-DE}
    \bar{\cB} =-\sum_{j\in\cJ}\tau_j\lambda_j^2 \partial_{\lambda_j}\cF =\frac{1}{n}\sum_{j\in\cJ}\tau_j\lambda_j^2\Tr(P_j\cK)=\cB,
\end{equation}
which is exactly the bias formula \eqref{eq:bias_simpler} in the theorem.

Finally, evaluating $n^{-1}\Tr(\bar Q^{-1}\cK)$ using, respectively,
\eqref{eq:ck_def} and \eqref{eq:delta-general-block-de}, together with
\eqref{eq:proof-directional-K}, gives
\[
    \frac{1}{n}\Tr(\bar Q^{-1}\cK) = \cF+\sum_{c=1}^C\frac{\pi_c e_cg_c}{(1+e_c)^2} = \frac{1}{n}\Tr(\Lambda\cK) + \sum_{c=1}^C\frac{\pi_cg_c}{1+e_c}.
\]
Hence, by \eqref{eq:proof-ridge-derivative-K},
\[
    \cF+\sum_{j\in\cJ}\lambda_j\partial_{\lambda_j}\cF = \cF-\frac1n\Tr(\Lambda\cK) = \sum_{c=1}^C\frac{\pi_cg_c}{(1+e_c)^2}.
\]
Substituting this identity and \eqref{eq:proof-weight-derivative-g} into
the variance equivalent from Step~5 gives
\[
    \bar{\cV} = \sigma_1^2\rbr{\cF+\sum_{j\in\cJ}\lambda_j\partial_{\lambda_j}\cF} - \sum_{c\in\cA}w_c(w_cs_c^2-\sigma_1^2)\partial_{w_c}\cF = \sum_{c=1}^C\frac{\pi_cw_cs_c^2g_c}{(1+e_c)^2}=\cV,
\]
matching the formula in \eqref{eq:C-client-bias-variance-DE}. Since the variance argument does not use assumption \eqref{ass:indep_signal}, this proves the claimed variance limit of \eqref{eq:main_conv} in full generality. Together with
\eqref{eq:proof-general-bias-convergence}-\eqref{eq:bias_det_bias} and, when applicable,
\eqref{eq:C-client-bias-DE}, this completes the proof of
Theorem~\ref{thm:main-det-eq}.
\qed


\subsection{Proof of the full bias term}

We state an auxiliary result that allows us to concentrate the bias term for general fixed hidden signals. Recall from \eqref{eq:biasvarclosed} that $B(\thetahat,\theta) = \theta^\top \Lambda \Qhat S_1 \Qhat \Lambda \theta$. We will first rewrite the bias term in terms of matrices. Namely, let us  set $a_\theta=\Lambda\theta$, and introduce
\begin{equation} \label{eq:mc-H}
 \Qhat_{S_1}(t)=(\Shat+\Lambda+tS_1)^{-1}, \qquad H_{S_1}(t)=a_\theta^\top\Qhat_{{S_1}}(t)a_\theta.
\end{equation}
The vector $a_\theta$ is deterministic, uniformly bounded, and held fixed when $t$ varies. Differentiating the inverse gives
\[
    \frac{d}{dt}H_{S_1}(t) = -a_\theta^\top\Qhat_{S_1}(t)S_1\Qhat_{S_1}(t)a_\theta,
\]
so the exact bias identity is
\[
 B=- \frac{d}{dt}H_{S_1}(t)\Big|_{t=0}.
\]
Thus, we state and prove the following proposition which gives deterministic equivalents for $\frac{d}{dt}H_{S_1}(t)$.

\begin{proposition}\label{prop:mc-directional}
Under the design and parameter assumptions of Theorem \ref{thm:main-det-eq}, let
$a\in\R^{Cd}$ and $A\succeq0$ belong to deterministic sequences with
$\sup_n\norm{a}_2<\infty$ and
$\sup_n\norm{A}_{2}<\infty$. Let us choose an arbitrary $h_0>0$ and for $t\in[0,h_0]$, define empirical matrix $\Qhat_A(t)=(\Shat+\Lambda+tA)^{-1}$ and its corresponding deterministic equivalent
\begin{equation}\label{eq:qa-det-equiv-eq}
     \bar{Q}_A(t) =\left[\Lambda+tA+ \sum_{c=1}^C\frac{\pi_cw_cS_c}{1+e_{c,A}(t)}\right]^{-1}, \qquad e_{c,A}(t)=\frac{w_c}{n}\Tr(S_c\bar{Q}_A(t)).
\end{equation}
Set $H_A(t)=a^\top\Qhat_A(t)a$ and $\cH_A(t)=a^\top\bar{Q}_A(t)a$.
Then for every fixed $t\in[0,h_0]$,
\begin{equation} \label{eq:mc-H-convergence}
    H_A(t)-\cH_A(t)\asconv0, 
\end{equation}
and
\begin{equation} \label{eq:mc-H-derivative-convergence}
 \frac{d}{dt}H_A(t)\Big|_{t=0}-\frac{d}{dt}\cH_{A}(t)\Big|_{t=0}\asconv0.
\end{equation}
\end{proposition}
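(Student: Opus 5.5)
We follow the same template as Steps 2--4 in the proof of Theorem~\ref{thm:main-det-eq}, with the trace functional $\frac1n\Tr(S_1\Qhat)$ replaced by the quadratic form $a^\top\Qhat_A(t)a$.

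\textbf{Well-posedness.} We first check that everything is well defined uniformly in $t\in[0,h_0]$. Adding $tA\succeq0$ does not affect the lower bounds. Hence \eqref{eq:proof-S-gap} gives $\norm{\Qhat_A(t)}_2\le\kappa_{\mathrm E}^{-1}$ almost surely for large $n$. The argument leading to \eqref{eq:proof-qbar-inverse-bound} gives $\norm{\bar Q_A(t)}_2\le\kappa_{\mathrm{DE}}^{-1}$ and $e_{c,A}(t)\le\gamma_*/(1-\gamma_*)$. Existence and uniqueness of the positive fixed point of \eqref{eq:qa-det-equiv-eq} follow verbatim from Step~3: the map $\Phi$ with $\Lambda$ replaced by $\Lambda+tA$ is still increasing and strictly subhomogeneous.

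\textbf{Pointwise convergence \eqref{eq:mc-H-convergence}.} We first add the auxiliary ridge $\eta I$. Then $\Lambda+tA+\eta I$ is a deterministic positive definite perturbation, and we invoke the anisotropic (directional) local law of \citep[Theorem~1.6]{louart2026resolvent}. It covers resolvents of sums of independent heterogeneous blocks $w_cL_c^\top X_c^\top X_cL_c/n$ with a deterministic regularizer, and gives $a^\top(\Qhat_{A,\eta}(t)-\bar Q_{A,\eta}(t))a\asconv0$ for bounded deterministic $a$.

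The main obstacle is verifying the hypotheses of that theorem under only the bounded $8$-th moment condition \eqref{ass:z-moment-bounds}, rather than Lipschitz concentration. If the theorem demands more, we would instead run a direct leave-one-row-out argument. Writing $\Shat=\sum_{c}\sum_i \frac{w_c}{n}L_c^\top x_{ci}x_{ci}^\top L_c$, we would use the Sherman--Morrison identity and quadratic-form concentration \citep[Lemma~B.26]{bai2010spectral} to show $\bbE|a^\top(\Qhat-\bbE\Qhat)a|^4=O(n^{-2})$ via martingale differences (Burkholder). We would then show $\norm{\bbE\Qhat-\bar Q}\to0$ in the bilinear sense through the usual $\bar Q^{-1}-(\bbE\Qhat)^{-1}$ comparison. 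Summability of $n^{-2}$ together with Borel--Cantelli gives almost sure convergence.

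The regularization $\eta$ is then removed exactly as in Step~3. On the empirical side, $|H_{A,\eta}-H_A|\le\eta\norm a^2\kappa_{\mathrm E}^{-2}$. On the deterministic side, \eqref{eq:qbar_bound} and \eqref{eq:proof-r-eta-bound} give $q_\eta\bar Q\preceq\bar Q_\eta\preceq\bar Q$ with $1-q_\eta=O(\eta)$, so $|a^\top(\bar Q_\eta-\bar Q)a|=O(\eta)$. Taking $\eta_m=1/m$ along a countable intersection of probability-one events finishes \eqref{eq:mc-H-convergence}.

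\textbf{Derivative convergence \eqref{eq:mc-H-derivative-convergence}.} We reproduce Step~4 with $t$ as the varying parameter. On the empirical side, $H_A''(t)=2a^\top\Qhat_A A\Qhat_A A\Qhat_A a$ is bounded by $2\norm a^2\norm A^2\kappa_{\mathrm E}^{-3}$. On the deterministic side, by the implicit function theorem applied to $\mathbf e=\Phi(\mathbf e,t)$, with $(I_C-J)^{-1}$ bounded as in \eqref{eq:opnormb}, $\mathbf e_A'$ and $\mathbf e_A''$ are uniformly bounded. Here $\partial_t\Phi$ involves $\frac{w_c}{n}\Tr(S_c\bar Q A\bar Q)$, which is $O(1)$ since $d/n$ is bounded. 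Differentiating $\bar Q_A(t)$ twice via \eqref{eq:proof-Qbar-partial} then bounds $\cH_A''$ by a constant.

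The one-sided Taylor comparison of finite differences with step $h$ yields $\limsup_n|H_A'(0)-\cH_A'(0)|\le Kh$, using \eqref{eq:mc-H-convergence} at $t=0$ and $t=h$. Letting $h_m\downarrow0$ over a countable set completes the proof.
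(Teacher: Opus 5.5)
Your proposal is correct and follows the paper's route: add $\eta I$, apply the directional law \citep[Theorem~1.6]{louart2026resolvent}, remove $\eta$ with the Step~3 comparison bounds, and pass to the derivative at $t=0$ via uniform second-derivative bounds and the one-sided finite-difference Taylor argument with $h_m\downarrow 0$. The obstacle you flag is resolved in the paper by writing $\Qhat_\eta(t)=D_\eta^{-1/2}R(-1)D_\eta^{-1/2}$ as the resolvent of a whitened sample covariance and bounding the moment parameters $M_{\mathrm{F}}^{(4)},M_{\mathrm{F}}^{(2)}$ via \citep[Lemma~B.26]{bai2010spectral}, so your leave-one-out fallback is unnecessary; your sandwich $q_\eta\bar Q\preceq\bar Q_\eta\preceq\bar Q$ is a valid, slightly shorter way to do the deterministic $\eta$-removal than the paper's bound on $\abs{e_{c,\eta}-e_c}$ followed by an inverse-difference estimate.
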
 

\begin{proof}
We extend the existing scalar argument to one deterministic quadratic
form. In the proof of this proposition, where obvious, we omit the subscript $A$.

To begin with, we note that the matrix in \eqref{eq:qa-det-equiv-eq} is indeed well-defined with unique positive solutions $e_{c,A}>0$ as proved Step 2 and Step 3 of the argument. Namely, this follows since every positivity, existence, uniqueness, and comparison argument in
Step 3 remains valid with $\Lambda$ replaced by $\Lambda+tA$,
because $tA\succeq0$.

The main idea is to generalize the approach of using the resolvent theorem treating average direction \citep[Theorem 1]{wagner2012large} to using a more fine-grained result \citep[Theorem 1.6]{louart2026resolvent} allowing per-direction control.

To begin with, we fix $t\geq0$ and $\eta>0$ and set
\[
 D_\eta = \Lambda+tA+\eta I, \qquad \Qhat_\eta(t)=(\Shat+D_\eta)^{-1}.
\]
As before, we introduce $\eta$ as it will allow us to make the argument technically simpler, with the aim to later obtain the results for $\eta = 0$. We can rewrite the $\Qhat_\eta(t)$ in terms of a resolvent function,
\begin{equation}\label{eq:qhat_r_def}
    \Qhat_\eta(t)= D_\eta^{-1/2}R(-1)D_\eta^{-1/2},
\end{equation}
where $R(z)=(D_\eta^{-1/2}\Shat D_\eta^{-1/2}-zI)^{-1}$. We can view $D_\eta^{-1/2}\Shat D_\eta^{-1/2}$ as the empirical covariance matrix of the whitened $n$ samples $\breve{x}_{c,i}\in \bbR^{Cd}$ defined as 
\[
    \breve{x}_{c,i} = \sqrt{w_c}\,D_\eta^{-1/2}L_c^\top\Sigma_c^{1/2}Z_{c,i}^\top,\quad c\in \cC, \,\,i\in [n_c],
\]
where $Z_{c,i}$ denotes the transpose of row $i$ of $Z_c$. Moreover, we have that the population covariance matrix of each sample belonging to client $c$ is
\[
    \breve{\Sigma}_c:=\E[\breve{x}_{c,i}\breve{x}_{c,i}^\top] = w_cD_\eta^{-1/2}S_cD_\eta ^{-1/2}.
\]
Let us introduce the quantity $M_{\mathrm{F}}^{(q)}$ as
\[
    M_{\mathrm{F}}^{(q)} \coloneqq \sup_{\substack{c\in\cC\\ i\in [n_c]\\ \norm{B}_{\mathrm{F}}\leq 1}} \rbr{\bbE \abs{\breve{x}_{c,i}^\top B\breve{x}_{c,i} - \Tr( \breve{\Sigma}_c B)}^q}^{1/q},
\]
where $\norm{\cdot}_{\mathrm{F}}$ is the Frobenius norm, such that $\norm{B}_{\mathrm{F}} = \sqrt{\Tr(BB^*)}$, where $B^*$ is the complex conjugate of $B$. By Assumption \eqref{ass:C-client-cov-bounds}, it holds
\[
    \norm{\breve{\Sigma}_c}_{\mathrm{F}} \leq K_{c,\eta} \sqrt{d} ,
\]
where the constant $K_{c,\eta}$ depends on the fixed $\eta$. By Assumption \eqref{ass:C-client-asymptotics}, we have $\sqrt{d}\norm{\breve{\Sigma}_c}_{\mathrm{F}}= O(n)$, and thus the assumption of Theorem 1.6 of \citep{louart2026resolvent} is satisfied. We then apply this theorem with $q=4$, to get the deterministic equivalent of the resolvent function $R(z)$. Namely, for $v=D_\eta^{-1/2}a$, we have that, at every fixed complex $z \in \bbC \setminus [0,+\infty[$
\begin{equation} \label{eq:mc-directional-L4}
 \rbr{\bbE\abs{v^\top(R(z)-\bar R(z))v}^4}^{\frac{1}{4}} \!\leq \!K_{7}\norm{vv^\top}_{\mathrm{F}}\sbr{\frac{M_{\mathrm{F}}^{(4)}}{\sqrt{n}}+(1+\max_c\norm{\breve{\Sigma}_c}_2)\frac{\max_c\norm{\breve{\Sigma}_c}_\mathrm{F}+(M_\mathrm{F}^{(2)})^2\sqrt{Cd}}{n}},
\end{equation}
where $K_{7}$ is a constant independent of $n$ and $d$, and $\bar R$ is the deterministic resolvent defined by equations
\begin{equation}\label{eq:Rbar-def}
    \bar R(z) = -\frac{1}{z}\rbr{\sum_{c=1}^Cd_c(z)\pi_c\breve{\Sigma}_c +I}^{-1},\qquad d_c(z) = -\frac{1}{z\rbr{1+\frac{1}{n}\Tr\rbr{\breve{\Sigma}_c\bar{R}(z)}}}.
\end{equation}
Note that the objects in the above equations \eqref{eq:Rbar-def} are well-defined and have unique guaranteed solutions for every $z \in \bbC \setminus [0,+\infty[$, by \citep[Theorem 1.1]{louart2026resolvent}.

Let us bound the values on the LHS of \eqref{eq:mc-directional-L4}. By previous calculations, we get
\[
    \max_c\norm{\breve{\Sigma}_c}_{\mathrm{F}} \leq K_{\eta} \sqrt{d},
\]
for some constant $K_{\eta}$ depending on $\eta$. Moreover, by Assumption \eqref{ass:C-client-cov-bounds}, it holds
\[
    \max_c\norm{\breve{\Sigma}_c}_2 \leq K_\eta\alpha_{\Sigma}.
\]
We now give bounds for $M_{\mathrm{F}}^{(4)}$  and $M_\mathrm{F}^{(2)}$. For a deterministic matrix $B$, the fourth-moment inequality for quadratic forms of independent coordinates with bounded eighth moments \citep[Lemma~B.26]{bai2010spectral} gives
\begin{equation} \label{eq:mc-QF4}
\rbr{\bbE \abs{\breve{x}_{c,i}^\top B\breve{x}_{c,i} - \Tr( \breve{\Sigma}_c B)}^4}^{1/4} \leq K_\eta\norm{B}_{F},
\end{equation}
which follows by writing $\breve{x}_{c,i} = \breve{U}_c Z_{c,i}^\top$ and the assumption \eqref{ass:z-moment-bounds} on the bounded eighth moments of coordinates of $Z_c$. As the supremum is taken over $\norm{B}_{F}\leq 1$, this in turn directly implies that $M_{\mathrm{F}}^{(4)}\leq K_{\eta}$. Similarly, by applying \citep[Lemma~B.26]{bai2010spectral} for second moments, we can get $M_{\mathrm{F}}^{(2)}\leq K_{\eta}$ using the fact that bounded eighth moments of coordinates of $Z_c$ imply bounded fourth moments.

Plugging these bounds back into \eqref{eq:mc-directional-L4} we get
\[
     \rbr{\bbE\abs{v^\top(R(z)-\bar R(z))v}^4}^{\frac{1}{4}} \leq \frac{K_\eta}{\sqrt{n}}.
\]

By applying Markov's inequality we have
\[
    \bbP\rbr{\abs{v^\top(R(z)-\bar R(z))v}\geq t}\leq \frac{K_\eta}{t^4n^2}.
\]
Thus, by picking $t=n^{-1/20}$ and applying Borel-Cantelli, we get almost sure convergence to $0$, as the right hand side is summable in $n$. Namely, we get for $z \in \bbC \setminus [0,+\infty[$
\begin{equation}\label{eq:r_rbar_conv}
    \abs{v^\top(R(z)-\bar R(z))v} \asconv 0.
\end{equation}

From the definition of $v$ and the identity in \eqref{eq:qhat_r_def}, we have $v^\top R(-1)v = a^\top \hat{Q}_\eta(t)a$. Moreover, we get 
\[ 
    D_\eta^{-1/2} \bar{R}(-1)D_\eta^{-1/2} = \bar{Q}_\eta(t), \qquad d_c(-1) = 1/(1+e_{c,\eta}),
\] 
where $\bar Q_\eta(t)$ and $e_{c,\eta}$ are the solutions to \eqref{eq:qa-det-equiv-eq}, with $\Lambda+tA$ replaced by $D_\eta$. Thus, by taking $z=-1$ in \eqref{eq:r_rbar_conv} we get
\begin{equation} \label{eq:mc-H-eta-convergence}
 a^\top(\Qhat_\eta(t)-\bar Q_\eta(t))a  \asconv0.
\end{equation}
We move onto removing $\eta$ from the previous identity. To this end, note that every positivity, existence, uniqueness, and comparison argument in Step 3 of the proof of Theorem \ref{thm:main-det-eq} remains valid with $\Lambda$ replaced by $\Lambda+tA$. This is  because $tA\succeq0$. In particular, the bounds \eqref{eq:proof-S-gap}-\eqref{eq:proof-r-eta-bound} hold uniformly for $0\leq t\leq h_0$. Thus, arguing as in the derivations of \eqref{eq:proof-feta-f-bound} and \eqref{eq:proof-r-eta-bound}, we get
\begin{align}
 \sup_{0\leq t\leq h_0} \norm{\Qhat_\eta(t)-\Qhat(t)}_{2}&\leq K\eta, \label{eq:mc-Q-eta-random}\\
 \sup_{0\leq t\leq h_0}\max_c \abs{e_{c,\eta}(t)-e_c(t)}&\leq K\eta. \label{eq:mc-e-eta}
\end{align}
On the other hand, subtracting the deterministic inverse matrices gives
\[
 \bar Q_\eta(t)^{-1}-\bar Q(t)^{-1} = \eta I+\sum_{c=1}^C\pi_cw_cS_c \rbr{\frac1{1+e_{c,\eta}(t)}- \frac1{1+e_c(t)}}.
\]
Thus, using \eqref{eq:mc-e-eta}, and uniform boundedness of $\pi_c$, $w_c$ and $S_c$, it holds
\[
     \sup_{0\leq t\leq h_0}\norm{\bar Q_\eta(t)^{-1}-\bar Q(t)^{-1}}_2 \leq K \eta. 
\]
Another inverse matrix identity and the bounds on $\norm{\bar Q_\eta(t)}_2$ and $\norm{\bar Q(t)}_2$, derived in the same way as \eqref{eq:proof-Q-bar-bound} and \eqref{eq:qbar_bound}, give
\begin{equation} \label{eq:mc-Q-eta-deterministic}
 \sup_{0\leq t\leq h_0} \norm{\bar Q_\eta(t)-\bar Q(t)}_{2} \leq K\eta.
\end{equation}
Then, it holds
\begin{align*}
    \abs{H(t) - \cH(t)} &= \abs{a^\top(\Qhat(t)-\bar Q(t))a}  \\
                &\leq\abs{a^\top(\Qhat(t)-\Qhat_\eta(t))a} +\abs{a^\top(\Qhat_\eta(t)-\bar Q_\eta(t))a} +\abs{a^\top(\bar Q_\eta (t)-\bar Q(t))a}\\
                &\leq K\eta \norm{a}_2^2 + \abs{a^\top(\Qhat_\eta(t)-\bar Q_\eta(t))a},
\end{align*}
where we have used \eqref{eq:mc-Q-eta-random} and \eqref{eq:mc-Q-eta-deterministic} in the last inequality. We then apply \eqref{eq:mc-H-eta-convergence}, and take a countable sequence $\eta\downarrow0$ as done in Step 3 of the previous section, thus proving \eqref{eq:mc-H-convergence} for every fixed $t$.

We next turn to the derivative convergence. We can rely on Step 4 of the previous section, specialized to the case of $\Lambda(t)=\Lambda+tA$. The goal is to give bounds on second derivatives of $H_A(t)$ and $\cH_A(t)$, so that we can use Taylor's theorem to transfer the convergence to the derivatives.

Firstly, expanding the second derivative of the empirical term, we have
\[
 \frac{d^2}{dt^2}H_A(t)=2a^\top\Qhat_A(t)A\Qhat_A(t)A\Qhat_A(t)a.
\]
Using the bounds on $\Qhat_A(t)$ from previous section, we get eventually almost surely,
\[
 \sup_{0\leq t\leq h_0}\abs{\frac{d^2}{dt^2}H_A(t)} \leq2\norm{a}_2^2\norm{A}_{2}^2\kappa_{\rm E}^{-3} \leq K_8.
\]
We next aim to obtain a bound for the second derivative of $\cH_A(t)$. To this end, we use the same map $\Phi_c(\mathbf e,t)=w_c\Tr(S_cQ(\mathbf e,t))/n$ as in \eqref{eq:Qet_phiet_def} of Step 4 of the previous section, and all the accompanying derivations. Note that, in the case we are considering, we can calculate the partial derivatives $t$ explicitly as
\[
 \partial_t Q(\mathbf e,t)^{-1}=A, \qquad \partial_{tt}Q(\mathbf e,t)^{-1}=0.
\]
The Jacobian has exactly the form \eqref{eq:jacobian_entries} and moreover by \eqref{eq:jacobian_bound} it holds
\[
 (J(\mathbf{1}+\mathbf{e}))_c=e_c(t)-\frac{w_c}{n} \Tr\bigl(S_c\bar Q_A(t)(\Lambda+tA) \bar Q_A(t)\bigr) \leq e_c(t),
\]
where $\mathbf{1}$ denotes the all-ones vector. Since $A$ is
bounded, the partial derivative bounds of Step 4 and the two implicit function
identities \eqref{eq:proof-e-prime-equation} and \eqref{eq:proof-e-second-equation} still apply. Namely, through \eqref{eq:proof-e-prime-bound} and \eqref{eq:proof-e-secundum-bound}, they imply
\begin{equation}\label{eq:new_e_bound}
     \sup_{0\leq t\leq h_0} \rbr{\norm{\mathbf e'(t)}_2+ \norm{\mathbf e''(t)}_2}<K.
\end{equation}
For completeness, let $M(t)=\bar Q_A(t)^{-1}$. Then
\begin{align*}
 M'(t)&=A-\sum_c\frac{\pi_cw_ce_c'(t)}{(1+e_c(t))^2}S_c,\\
 M''(t)&=-\sum_c\pi_cw_c \sbr{\frac{e_c''(t)}{(1+e_c(t))^2} -\frac{2(e_c'(t))^2}{(1+e_c(t))^3}}S_c,\\
 \bar Q_A''(t) &=2\bar Q_A(t)M'(t)\bar Q_A(t)M'(t)\bar Q_A(t) -\bar Q_A(t)M''(t)\bar Q_A(t).
\end{align*}
By \eqref{eq:new_e_bound}, all these matrices are uniformly bounded in operator norm. Therefore
\begin{equation} \label{eq:mc-H-second-deterministic}
 \sup_{0\leq t\leq h_0}\abs{\frac{d^2}{dt^2}\cH_A(t)} \leq\norm{a}_2^2 \sup_{0\leq t\leq h_0}\norm{\bar Q_A''(t)}_{2} \leq K_9.
\end{equation}
Lastly, for every fixed $0<h<h_0$, Taylor's theorem gives precisely the bound used in the original Step~4
\begin{align*}
 \abs{\frac{d}{dt}H_A(t)\Big|_{t=0}-\frac{d}{dt}\cH_A(t)\Big|_{t=0}}\leq\frac{K_8+K_9}{2}h+\frac{|H_A(h)-\cH_A(h)|+|H_A(0)-\cH_A(0)|}{h}.
\end{align*}
Since $h$ is fixed, the already established deterministic equivalent
convergence of \eqref{eq:mc-H-convergence} gives
\[
    H(0)-\cH(0)\asconv0, \qquad H(h)-\cH(h)\asconv0.
\]
Therefore, for fixed $h$, it holds almost surely 
\[
    \limsup_{n\to\infty}  \abs{\frac{d}{dt}H_A(t)\Big|_{t=0}-\frac{d}{dt}\cH_A(t)\Big|_{t=0}} \leq \frac{K_8+K_9}{2}h.
\]

Finally, let us take a deterministic sequence $h_m\downarrow0$.  Since this sequence is countable, we may intersect the probability-one events corresponding to all parameter values $h_m$.  On this common event, the preceding inequality holds for every $m$.  Sending $m\to\infty$ then  proves \eqref{eq:mc-H-derivative-convergence}.
\end{proof}



\subsection{Concentration of the realized risk} \label{app:realized-risk}

We strengthen the convergence statement of Theorem \ref{thm:main-det-eq}. To this end, let us define 
\begin{equation}\label{eq:hatR}
\hat{R}(\hat\beta;\beta) \coloneqq \norm{\hat\theta-\theta}_{S_1}^2.
\end{equation}
We will prove the slightly stronger statement that $\hat{R}(\hat\beta;\beta)$ almost surely concentrates around $B(\thetahat;\theta)+ V(\thetahat;\theta)$, and thus around $\cB+ \cV$.

By Assumption~(\ref{ass:priv_budg_large}),
clipping is almost surely inactive for all sufficiently large $n$,
and \eqref{eq:thetahat_theta} therefore gives
\begin{equation}\label{eq:aux-realized-risk-expansion}
\norm{\hat\theta-\theta}_{S_1}^2 = \theta^\top\Lambda\Qhat S_1\Qhat\Lambda\theta -2\theta^\top\Lambda\Qhat S_1\Qhat \hat{u} + \hat{u}^\top\Qhat S_1\Qhat \hat{u},
\end{equation}
where the first term is precisely $B(\thetahat;\theta)$. We next show that the second term vanishes almost surely and that the third concentrates around the variance expression in \eqref{eq:biasvarclosed}. Let
\[
 \tilde{X}\coloneqq \matrix{ X_1L_1\\ w_2X_2L_2\\ \vdots\\ w_CX_CL_C}, \qquad \tilde{\varepsilon}\coloneqq
 \matrix{\tilde{\varepsilon}_1\\\tilde{\varepsilon}_2\\\vdots\\\tilde{\varepsilon}_C},
\]

so that $\hat{u}=n^{-1}\tilde{X}^\top \tilde\varepsilon$. Note that under the spectral bounds established
in the proof of Theorem~\ref{thm:main-det-eq},
\[
 \norm{\Qhat}_{2}=O(1),\qquad
 \norm{S_1}_{2}=O(1),\qquad
 \left\|\frac{1}{n}\tilde{X}^\top\tilde{X}\right\|_{2}=O(1)
\]

almost surely for all sufficiently large $n$. Moreover, by Assumption \eqref{ass:z-moment-bounds} it holds that $\norm{\theta}_2$ is uniformly bounded almost surely.

Consider now the first the cross term in
\eqref{eq:aux-realized-risk-expansion} and define $a_n\coloneqq \frac{2}{n}\tilde{X}\Qhat S_1\Qhat\Lambda\theta$, so that
\[
 2\theta^\top\Lambda\Qhat S_1\Qhat \hat{u}=\tilde{\varepsilon}^\top a_n.
\]
The preceding spectral bounds imply, for some constant $K$,
\[
 \norm{a_n}_2^2 \leq \frac{K}{n^2} \theta^\top\Lambda\Qhat S_1\Qhat \tilde{X}^\top\tilde{X} \Qhat S_1\Qhat\Lambda\theta = O(n^{-1})
\]
almost surely. Since the entries of $\tilde{\varepsilon}$ are independent, centered, and have uniformly bounded $4$-th moments, the standard fourth-moment bound for linear forms gives
\[
 \bbE\sbr{|\tilde{\varepsilon}^\top a_n|^4 \,\middle|\,X,\theta} \leq K\|a_n\|_2^4 = O(n^{-2}).
\]
Hence, by Markov's inequality and the Borel-Cantelli lemma,
\begin{equation}\label{eq:aux-cross-noise-concentration}
\theta^\top\Lambda\Qhat S_1\Qhat \hat{u} \asconv0.
\end{equation}

We now consider the quadratic noise term. Namely we express it as 
\[
 \hat{u}^\top\Qhat S_1\Qhat \hat{u}=\tilde{\varepsilon}^\top M\tilde{\varepsilon},
\]
for the matrix term $M\coloneqq \frac1{n^2}\tilde{X}\Qhat S_1\Qhat\tilde{X}^\top$. Moreover, let $\Omega\coloneqq \operatorname{diag} \rbr{ \sigma_1^2I_{n_1}, \sigma_{t,2}^2I_{n_2}, \ldots, \sigma_{t,C}^2I_{n_C} }$ denote the covariance of $\tilde{\varepsilon}$. Then, the deterministic centering term associated with this quadratic form is
\[
\Tr(M\Omega) =  V(\thetahat;\theta).
\]
Furthermore,
\[
 \norm{M}_{2} \leq \frac{\norm{\tilde{X}}_{2}^2}{n^2}
 \norm{\Qhat S_1\Qhat}_{2} = O(n^{-1}),
\]
and, since $\operatorname{rank}(M)\leq Cd=O(n)$,
\[
 \norm{M}_{F}^2 \leq \operatorname{rank}(M)\norm{M}_{2}^2 = O(n^{-1}).
\]
For independent, centered entries with uniformly bounded $8$-th moments, the fourth-moment bound for quadratic forms yields
\[
 \bbE\sbr{ \abs{\tilde\varepsilon^\top M\tilde\varepsilon-\Tr(M\Omega)}^4 \,\middle|\,X,\theta} \leq K\norm{M}_{F}^4 = O(n^{-2}).
\]
Another application of Markov's inequality and Borel-Cantelli therefore gives
\begin{equation} \label{eq:aux-quadratic-noise-concentration}
\hat{u}^\top\Qhat S_1\Qhat \hat{u}- V(\thetahat;\theta) \asconv0.
\end{equation}
Combining \eqref{eq:aux-realized-risk-expansion}, \eqref{eq:aux-cross-noise-concentration}, and \eqref{eq:aux-quadratic-noise-concentration}, we conclude that
\begin{equation}\label{eq:aux-realized-risk-concentration}
\norm{\hat\theta-\theta}_{S_1}^2
-\rbr{B(\thetahat;\theta)+V(\thetahat;\theta)}
\asconv 0.
\end{equation}
Finally, using Theorem \ref{thm:main-det-eq} we get
\[
    \abs{\hat{R}(\betahat,\beta) - (\cB+\cV)}\asconv 0.
\]


\subsection{Proof of Proposition~\ref{prop:bayes-opt-param}}
\label{app:proof-bayes-opt-param}

We prove that the choice of weights and ridge parameters in
\eqref{eq:Bayespar} makes the estimator in \eqref{eq:delta-estimator}
equal to the posterior mean, and that its target component minimizes the prediction risk. Throughout the proof, we condition on the design matrices $X_1,\ \ldots,\ X_C$, which are therefore treated as fixed.

Recall that, under label privatization and Gaussian observation noise assumption, the target observations satisfy
\[
    y_1=X_1L_1\theta+\varepsilon_1, \qquad \varepsilon_1\sim\cN(0,\sigma_1^2I_{n_1}),
\]

while, for each $c\in\cA$,
\[
    \tilde y_c = X_cL_c\theta+\tilde\varepsilon_c, \qquad \tilde\varepsilon_c \coloneqq \varepsilon_c+\xi_c \sim \cN(0,\stc^2I_{n_c}),
\]
For simplicity let us introduce $\sigma_{t,1} \coloneqq \sigma_1$ and $\tilde{y}_1 = y_1$. Thus, conditional on $\theta$ and the designs, we have
\[
\tilde{y}_c\mid\theta,X_c \sim \cN\rbr{X_cL_c\theta,\,\stc^2I_{n_c}}.
\]
Since the observation and privacy noises are independent across clients, the responses are conditionally independent given $\theta$ and the designs. Hence the likelihood factorizes as
\begin{equation}\label{eq:bayes-likelihood}
    \cL(\theta) \coloneqq p\rbr{\tilde y_1,\ldots,\tilde y_C \middle| \theta,X_1,\ldots,X_C} = \prod_{c\in\cC} \frac{1}{(2\pi\stc^2)^{n_c/2}} \exp\rbr{-\frac{1}{2\stc^2} \norm{\tilde y_c-X_cL_c\theta}_2^2}.
\end{equation}
Suppressing factors that do not depend on $\theta$, we get
\[
    \cL(\theta) \propto \exp\rbr{- \sum_{c\in\cC} \frac{1}{2\stc^2} \norm{\tilde y_c-X_cL_c\theta}_2^2 }.
\]
Thus, up to an additive constant independent of $\theta$, the negative log-likelihood is
\begin{equation}
\label{eq:bayes-negative-log-likelihood}
-\log\cL(\theta) =  \sum_{c\in\cC} \frac{1}{2\stc^2} \norm{\tilde y_c-X_cL_c\theta}_2^2 + \mathrm{const}.
\end{equation}
On the other hand, by Assumption \eqref{eq:ass_gaus_prop}, the parameter $\theta$ has the Gaussian prior
\[
     \theta\sim\cN(0,\Gamma),\text{ where }  \quad \Gamma = \frac{r_\beta^2}{d}P_\beta + \sum_{c\in\cA} \frac{r_{\delta_c}^2}{d}P_{\delta_c}.
\]
Its precision matrix is therefore
\begin{equation}
\label{eq:bayes-prior-precision}
\Gamma^{-1} = \frac{d}{r_\beta^2}P_\beta + \sum_{c\in\cA} \frac{d}{r_{\delta_c}^2}P_{\delta_c}.
\end{equation}

Let $u = \matrix{b^\top & h_2^\top & \cdots & h_C^\top}^\top$ denote a generic value of the stacked parameter. By Bayes' rule, the posterior density of $u$, conditional on the designs and responses, is
proportional to the product of its prior density and the likelihood. Thus, up to an additive constant independent of \(u\), the negative
log-posterior is
\begin{equation}\label{eq:bayes-negative-log-posterior}
-\log p\rbr{u \mid X_1,\ldots,X_C, \tilde y_1,\ldots,\tilde y_C}
= \sum_{c\in\cC} \frac{1}{2\stc^2} \norm{\tilde y_c-X_cL_cu}_2^2 + \frac{1}{2}u^\top\Gamma^{-1}u + \mathrm{const}.
\end{equation}
Using \eqref{eq:bayes-prior-precision}, the prior term can equivalently
be written as
\[
    \frac{1}{2}u^\top\Gamma^{-1}u = \frac{d}{2r_\beta^2}\norm{b}_2^2 + \sum_{c\in\cA} \frac{d}{2r_{\delta_c}^2}\norm{h_c}_2^2.
\]
We next identify the posterior distribution explicitly. Expanding the
quadratic terms in \eqref{eq:bayes-negative-log-posterior}, and
discarding terms that do not depend on $u$, gives
\[
    -\log p(u \mid X_1,\ldots,X_C, \tilde y_1,\ldots,\tilde y_C) = \frac12u^\top\Omega u-u^\top q+\mathrm{const},
\]
where
\begin{align}
    \Omega &= \Gamma^{-1} + \sum_{c\in\cC} \frac{1}{\stc^2} L_c^\top X_c^\top X_cL_c,\label{eq:bayes-posterior-precision} \\
    q &= \sum_{c\in\cC} \frac{1}{\stc^2} L_c^\top X_c^\top\tilde y_c. \label{eq:bayes-posterior-linear}
\end{align}
Since \(\Gamma^{-1}\) is positive definite, $\Omega$ is also positive
definite and hence invertible. Setting $m\coloneqq\Omega^{-1}q$, we complete the square:
\[
\frac{1}{2}u^\top\Omega u-u^\top q = \frac{1}{2}(u-m)^\top\Omega(u-m) - \frac{1}{2}q^\top\Omega^{-1}q.
\]
Thus it holds that
\[
    p\rbr{u \mid X_1,\ldots,X_C, \tilde y_1,\ldots,\tilde y_C} \propto \exp\rbr{- \frac{1}{2}(u-m)^\top\Omega(u-m)},
\]
from which it directly follows that
\[
    \theta \,\mid\, X_1,\ldots,X_C,\tilde y_1,\ldots,\tilde y_C \sim \cN(m,\Omega^{-1}),
\]
and therefore
\begin{equation}
\label{eq:bayes-posterior-mean-general}
\bbE\sbr{\theta \middle|  X_1,\ldots,X_C, \tilde y_1,\ldots,\tilde y_C} = m = \Omega^{-1}q.
\end{equation}
We now show that this posterior mean is precisely the estimator
\eqref{eq:delta-estimator} evaluated at the parameters in
\eqref{eq:Bayespar}. Recall that
\[
    \Shat_c = \frac{1}{n}L_c^\top X_c^\top X_cL_c, \qquad \Uhat_c = \frac{1}{n}L_c^\top X_c^\top\tilde y_c.
\]
Let us introduce the ridge matrix as
\[
    \Lambda^\circ \coloneqq \lambda_\beta^\circ P_\beta + \sum_{c\in\cA} \lambda_{\delta_c}^\circ P_{\delta_c},
\]
with $\lambda_\beta^\circ$, $\lambda_{\delta_c}^\circ$ defined in \eqref{eq:Bayespar}. From the definition of the ridge parameters, it holds
\begin{equation}\label{eq:bayes-prior-ridge-match}
\frac{n}{\sigma_1^2}\Lambda^\circ = \frac{d}{r_\beta^2}P_\beta + \sum_{c\in\cA}
\frac{d}{r_{\delta_c}^2}P_{\delta_c} =\Gamma^{-1}.
\end{equation}
Similarly, using $w_c^\circ$ defined in \eqref{eq:Bayespar}, we get
\[
\frac{n}{\sigma_1^2} \rbr{\Shat_1 + \sum_{c\in\cA}w_c^\circ\Shat_c} = \frac{1}{\sigma_1^2} L_1^\top X_1^\top X_1L_1 + \sum_{c\in\cA} \frac{1}{\stc^2} L_c^\top X_c^\top X_cL_c. \]
Combining this identity with \eqref{eq:bayes-posterior-precision} and \eqref{eq:bayes-prior-ridge-match} yields
\begin{equation}
\label{eq:bayes-Omega-ridge}
\Omega = \frac{n}{\sigma_1^2} \rbr{\Shat_1 + \sum_{c\in\cA}w_c^\circ\Shat_c + \Lambda^\circ}.
\end{equation}
Likewise, from \eqref{eq:bayes-posterior-linear},
\begin{equation}
\label{eq:bayes-q-ridge}
q = \frac{n}{\sigma_1^2} \rbr{\Uhat_1 + \sum_{c\in\cA}w_c^\circ\Uhat_c}.
\end{equation}
Substituting \eqref{eq:bayes-Omega-ridge} and \eqref{eq:bayes-q-ridge} into \eqref{eq:bayes-posterior-mean-general} gives
\[
\bbE\sbr{\theta \mid X_1,\ldots,X_C, y_1,\tilde y_2,\ldots,\tilde y_C} =
\rbr{\Shat_1 + \sum_{c\in\cA}w_c^\circ\Shat_c + \Lambda^\circ}^{-1}
\rbr{\Uhat_1 + \sum_{c\in\cA}w_c^\circ\Uhat_c}.
\]
The right-hand side is exactly the closed-form expression for the
estimator in \eqref{eq:delta-estimator}, as expressed in \eqref{eq:theta_closed_form} with parameters $(\mathbf w^\circ,\lambda_\beta^\circ, \boldsymbol\lambda_\delta^\circ)$. Hence if we denote this estimator by $\hat\theta^\circ$, we have
\[
\hat\theta^\circ = \bbE\sbr{\theta\mid X_1,\ldots,X_C,\tilde y_1,\ldots,\tilde y_C}.
\]
Taking the first $d$-dimensional block gives
\[
\hat\beta^\circ = \bbE\sbr{\beta \mid X_1,\ldots,X_C, \tilde y_1,\ldots,\tilde y_C}.
\]
It remains to prove that \(\hat\beta^\circ\) minimizes $\mathbb E_\theta[R(\hat\beta;\beta)]$. Let us introduce the sigma algebra $\cD \coloneqq \sigma\rbr{X_1,\ldots,X_C, \tilde y_1,\ldots,\tilde y_C}$, generated by the observed data, and let
\[
    m_\beta \coloneqq \bbE[\beta\mid\cD] = \hat\beta^\circ.
\]
Consider any other $\cD$-measurable estimator $\tilde{\beta}$. We have the decomposition
\[
    \tilde{\beta}-\beta = (\tilde\beta-m_\beta) + (m_\beta-\beta).
\]
Therefore,
\begin{equation} \label{eq:bayes-risk-decomposition-pre}
    \norm{\tilde\beta-\beta}_{\Sigma_1}^2 = \norm{\tilde\beta-m_\beta}_{\Sigma_1}^2 + \norm{m_\beta-\beta}_{\Sigma_1}^2 + 2(\tilde\beta-m_\beta)^\top \Sigma_1(m_\beta-\beta).
\end{equation}
Taking the conditional expectation given $\cD$, the first term is
unchanged because both $\tilde{\beta}$ and $m_\beta$ are
$\cD$-measurable. For the cross term,
\[
    \bbE[m_\beta-\beta\mid\cD] = m_\beta-\bbE[\beta\mid\cD] = 0.
\]
Hence \eqref{eq:bayes-risk-decomposition-pre} gives
\begin{equation}\label{eq:bayes-posterior-risk-decomposition}
\bbE\sbr{\norm{\tilde\beta-\beta}_{\Sigma_1}^2 \mid \cD}= 
\norm{\tilde\beta-\hat\beta^\circ}_{\Sigma_1}^2 +
\bbE\sbr{\norm{\hat\beta^\circ-\beta}_{\Sigma_1}^2 \mid \cD}.
\end{equation}
Since $\Sigma_1\succ0 $, the first term on the right-hand side is
nonnegative, and it is equal to zero if and only if
\(\tilde\beta=\hat\beta^\circ\). Therefore, for every realization of
the observed data,
\[
    \hat\beta^\circ = \bbE[\beta\mid\cD]
\]
uniquely minimizes the posterior expected target prediction loss.

Finally, recall that
\[
   \mathbb E_\theta [R(\tilde\beta;\beta)] = \bbE\sbr{\norm{\tilde\beta-\beta}_{\Sigma_1}^2 \mid X_1,\ldots,X_C}.
\]
Applying the tower property to
\eqref{eq:bayes-posterior-risk-decomposition} gives
\[
\mathbb E_\theta[R(\tilde\beta;\beta)] = \mathbb E_\theta[R(\hat\beta^\circ;\beta)] + \bbE\sbr{\norm{\tilde\beta-\hat\beta^\circ}_{\Sigma_1}^2\mid X_1,\ldots,X_C}.
\]
The second term is nonnegative, which proves that $\hat\beta^\circ$ is the desired minimizer and concludes the argument.
\qed

\section{Experimental details and additional numerical results}\label{app:experiments}

\subsection{Experimental details} \label{app:experimental-details} 

\paragraph{Dataset preprocessing.} For \emph{Student Performance}, we use $649$ observations from the two schools and $38$ encoded features after removing the intermediate grades G1 and G2 and constant columns. For \emph{LSAC}, removing observations with missing outcomes, client identifiers, or selected predictors leaves $20{,}970$ observations across $6$ law-school tiers and $10$ encoded features. For \emph{Lianjia}, we use $300{,}284$ observations across $9$ Beijing districts and the $6$ selected features. Missing or invalid numerical values are imputed with medians.


\paragraph{Estimation of distributional quantities.}

The deterministic equivalents \eqref{eq:C-client-bias-variance-DE} and \eqref{eq:bias_simpler} depend on the unknown distributional quantities $\{\Sigma_c,\sigma_c\}_{c=1}^C$, together with either the relevant inner products between $\{\beta_c\}_{c=1}^C$ or the magnitudes $\tau_{\beta}$ and $\{\tau_{\delta_c}\}_{c=2}^C$, respectively. In both experiments of Figure \ref{fig:exact_fit} and Figure \ref{fig:source_selection}, using designated non-private data, we estimate the relevant regression coefficients and noise variances by five-fold cross-fitting. To stabilize covariance estimation, we apply $10\%$ isotropic shrinkage to the covariance estimate, i.e. we mix the regular empirical covariance estimate with an equal variance isotropic covariance matrix, outputting $0.9\Sigmahat_c+0.1(\Tr(\Sigmahat_c)/d)I$, where $\Sigmahat_c$ is the standard empirical covariance matrix as in \eqref{eq:emp_cov}. The two experiments differ, however, in which non-private observations are available for estimating these quantities. For the experiment on the prediction of the risk change in Figure \ref{fig:exact_fit}, our goal is to test how accurately the deterministic equivalent predicts empirical changes in target risk with data from real datasets, when its distributional quantities are estimated as accurately as possible. We therefore estimate the distributional quantities in each repetition using all de-privatized training and validation observations available for each client. We then evaluate the simplified deterministic equivalent \eqref{eq:bias_simpler}, which averages over signal and model shift directions. On the other hand, for the dataset selection experiment in Figure~\ref{fig:source_selection}, we instead simulate a setting in which expected dataset benefit must be predicted before accessing the observations used for model fitting or evaluation. We therefore split each of the datasets into two roughly same-sized partitions, one designated for estimating distributional quantities, and the other for the actual experiment. The distributional quantities are thus estimated once from this fixed non-private reference partition that is disjoint from all training, validation, and test observations. These estimates are held fixed across different privacy noise levels and experimental repetitions. Client selection uses the general, direction-aware deterministic equivalent \eqref{eq:C-client-bias-variance-DE}.  In both experiments, the used training sample sizes and the auxiliary privacy noise variances of all clients are assumed to be known when evaluating the deterministic equivalent.

\paragraph{Hyperparameter tuning.}
The experiments in Figures~\ref{fig:exact_fit} and~\ref{fig:source_selection} also use different hyperparameter-selection procedures. For the experiment on the prediction of the risk change in  Figure~\ref{fig:exact_fit}, in order to select hyperparameters, we first do a pilot experiment with $100$ repetitions. Each pilot repetition uses a randomized target training-validation split and all available auxiliary observations, with the corresponding privacy noise added to the auxiliary responses. For each client pair and privacy level, we jointly select the ridge penalties and auxiliary weight that minimize the average target-validation MSE across these pilot repetitions. The selected parameters are then fixed across all auxiliary sample sizes and all main experiment $100$ evaluation repetitions, with identical values used by the empirical estimator and the deterministic-equivalent prediction. This allows us to isolate how accurately the deterministic equivalent predicts empirical changes in target risk with data from real datasets. On the other hand, for the dataset selection experiment in Figure \ref{fig:source_selection}, the empirical ridge penalties and auxiliary weights are chosen using target-validation MSE in each evaluation repetition, privacy condition and candidate client subset. Hyperparameter tuning uses that repetition's sampled training observations, including privatized auxiliary responses. All selection methods share the same hyperparameters for a given client subset and repetition. To rank candidate clients, the deterministic equivalent used by \emph{Ours} instead uses the population-based hyperparameter formulas from Proposition~\ref{prop:bayes-opt-param}, evaluated using estimates from the independent non-private reference partition. These selection hyperparameters remain fixed across repetitions for each privacy level and may differ from those used by the empirically evaluated estimators. This reflects a setting in which the expected benefit of auxiliary data must be predicted before accessing the observations used for model fitting or evaluation.

\subsection{Semi-synthetic experiment}\label{app:semi_synth}

\begin{figure}[t]
    \centering
     \includegraphics[width=0.32\textwidth]{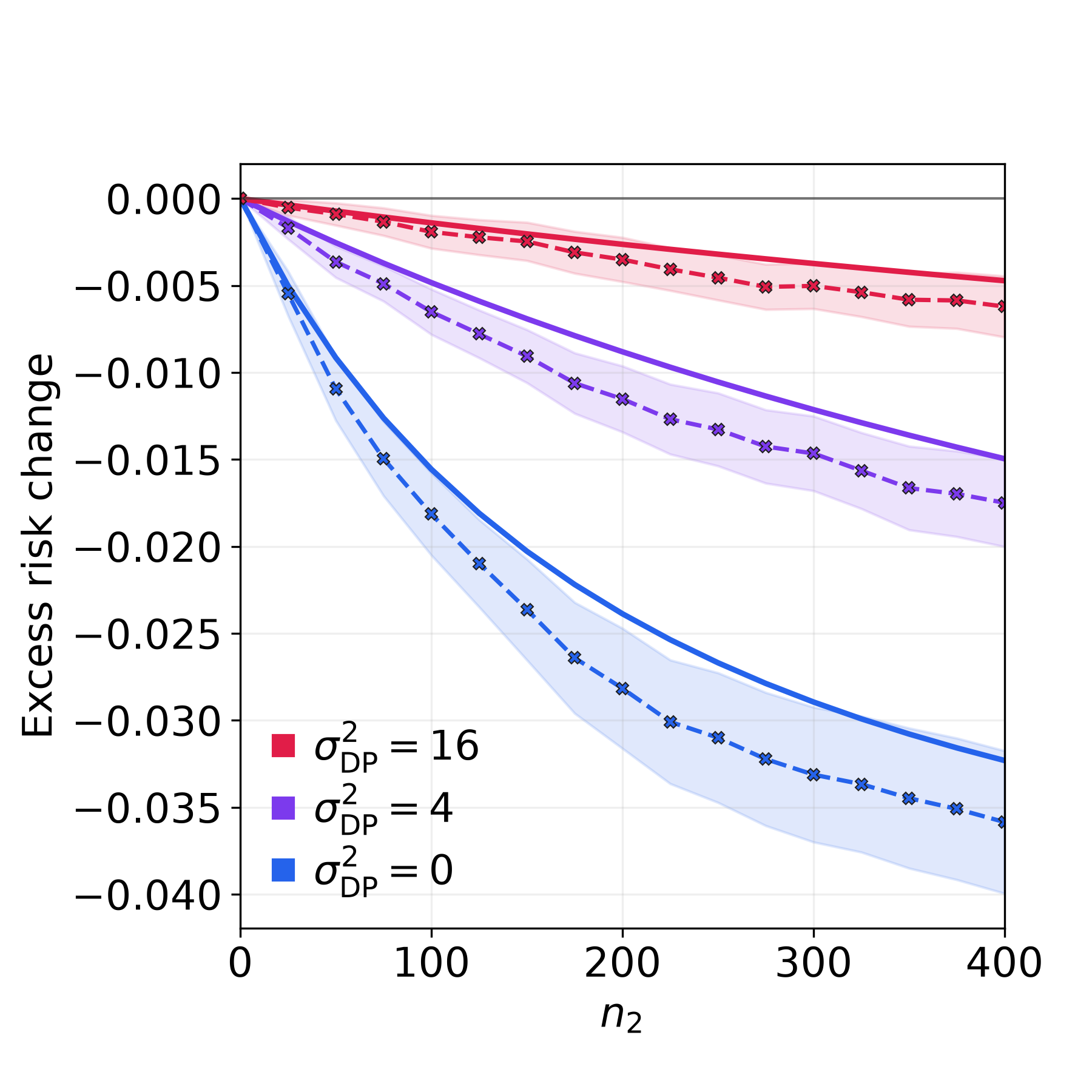}
    \hfill
    \includegraphics[width=0.32\textwidth]{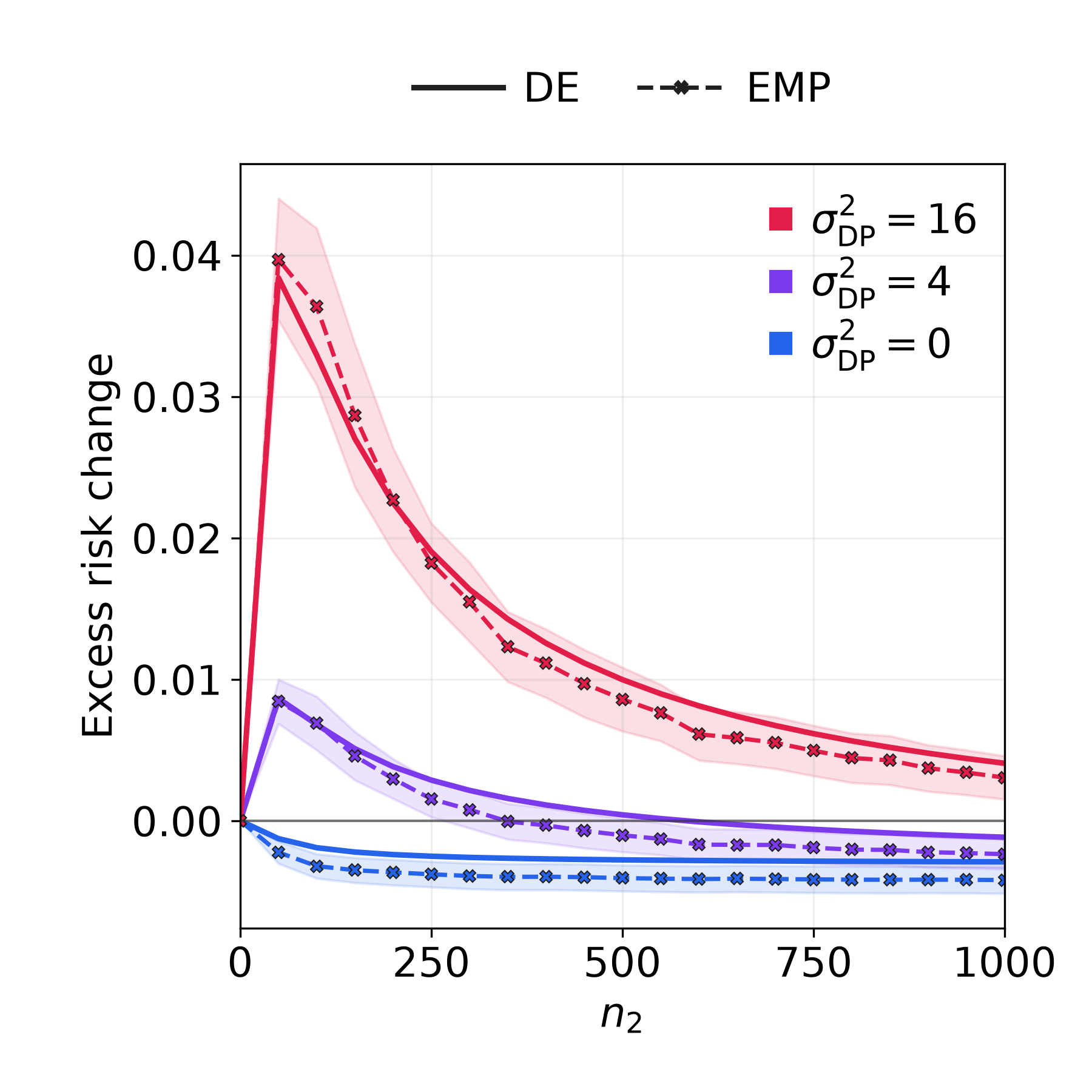}
    \hfill
    \includegraphics[width=0.32\textwidth]{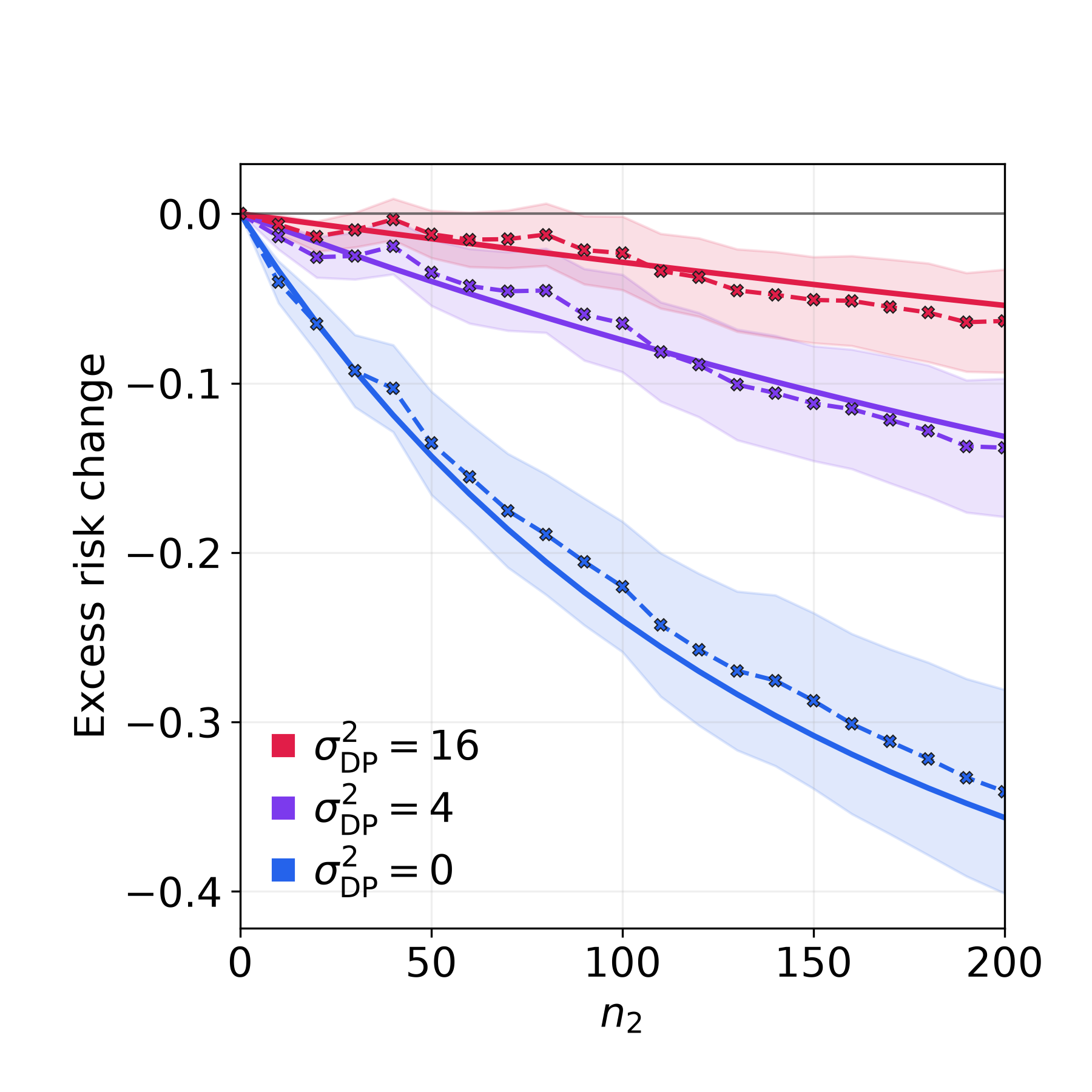}
    \makebox[\textwidth]{%
        \parbox{0.333\textwidth}{\centering\textbf{(a)} LSAC}%
        \parbox{0.333\textwidth}{\centering\textbf{(b)} Lianjia}%
        \parbox{0.333\textwidth}{\centering\textbf{(c)} Student Performance}%
        }
    \caption{Change in target excess risk as auxiliary data are added in two-client semi-synthetic regression experiments. Lower values correspond to a better risk. Original covariates are retained, while responses are generated from planted linear models calibrated to each client.  \textbf{(a) LSAC:}  Tier 3 as the target client, and Tier 4 as the auxiliary client. \textbf{(b) Lianjia:} District 8 as the target client, and District 7 as the auxiliary client. \textbf{(c) Student Performance:} Mousinho da Silveira (MS) as the target school, and Gabriel Pereira (GP) as the auxiliary school. Solid lines show direction-aware deterministic equivalent predictions (DE) using the known generating parameters; dashed lines with crosses show empirical results (EMP). }
\label{fig:semi_synthetic_direction_aware}
\end{figure}    

To isolate the accuracy of the deterministic equivalent from the error in estimating population-level quantities, we construct a semi-synthetic version of the experiment in Figure \ref{fig:exact_fit}. We use the original covariates and client definitions, but replace the responses with data generated from planted linear models. Namely, using all data, we first estimate all hidden signals $\beta_c$ and noise variances $\sigma_c^2$ by five-fold cross-fitting. We then use these estimates, denoted by $\beta_c^*$ and ${\sigma_c^*}^2$, to generate responses
\[
y_c^*=X_c\beta_c^*+\varepsilon_c^*, \qquad \varepsilon_c^*\sim\mathcal N(0,{\sigma_c^*}^2 I).
\]
The covariates and planted signals remain fixed, while observation noise, data partitions, and privacy noise are resampled in each repetition.

As in the original experiment from Figure \ref{fig:exact_fit}, we fix the target training size at $n_1=100$ and vary the auxiliary sample size up to $200$, $400$, and $1000$ for Student Performance, LSAC, and Lianjia, respectively. We then add to the auxiliary responses independent Gaussian noise with variance $\sigma_{\mathrm{DP}}^2\in\{0,4,16\}$. We then repeat the hyperparameter tuning setup of the experiment in Figure \ref{fig:exact_fit}. Namely, we run a pilot experiment in which the ridge penalties and auxiliary weights are jointly selected by minimizing the average target-validation MSE over $100$ repetitions using all available auxiliary observations. The selected hyperparameters are then fixed across auxiliary sample sizes and the $100$ evaluation repetitions, and are used identically by the empirical estimator and the deterministic equivalent.

 The deterministic equivalent is evaluated using the known planted signals, noise variances, and empirical covariance computed with $10\%$ isotropic shrinkage, as in the original experiment. As we have exact access to planted signals, we use the direction-aware deterministic equivalent expression of \eqref{eq:C-client-bias-variance-DE}. Figure \ref{fig:semi_synthetic_direction_aware} reports the change in target risk relative to the target-only estimator. Dashed lines with crosses show empirical means and pointwise $90\%$ confidence intervals, while solid lines show the deterministic-equivalent predictions. 

Removing the error in estimating population-level quantities and imposing a correctly specified linear response model improves the accuracy of theoretical predictions for the datasets Student Performance and Lianjia, as can be seen by comparing plots \textbf{(b)}-\textbf{(c)} in Figure \ref{fig:exact_fit} and Figure \ref{fig:semi_synthetic_direction_aware}. For the LSAC dataset, the discrepancy between empirical behaviour and theoretical predictions changes less between the two figures, as it is driven primarily by the variance term, rather than by estimation error or model misspecification. This is consistent with LSAC covariates having large fourth moments, which cause stronger finite sample fluctuations in the empirical covariance matrices than are captured by the asymptotic approximation via the deterministic equivalent.

\subsection{Dataset selection varying privacy levels}\label{app:dataset_selection}

In Tables \ref{tab:moderate_largest_selection_percentage_q_0}-\ref{tab:moderate_largest_selection_percentage_q_16}, we present results of dataset selection experiments for different privacy noise levels. As before, the values shown are changes in the target excess risk, expressed as a percentage of each repetition's target-only test risk, for varying auxiliary privacy noise variance $\sigma_{\mathrm{DP}}^2$. Each entry is the mean $\pm$ $90\%$ confidence interval computed over $100$ repetitions, in percentage points. Lower is better. Bold marks the lowest mean in each column and dataset. 

\begin{table}[htbp]
\centering
\small
\begin{tabular}{llccc}
\toprule
Dataset & Method & 1 Client & 2 Clients & 3 Clients \\
\midrule
 & Ours & $\mathbf{-2.41 \pm 0.32}$ & $\mathbf{-3.05 \pm 0.33}$ & $\mathbf{-3.05 \pm 0.33}$ \\
 & Random & $-1.35 \pm 0.18$ & $-2.20 \pm 0.26$ & $-2.75 \pm 0.29$ \\
LSAC & Largest sample & $-1.32 \pm 0.26$ & $-2.81 \pm 0.32$ & $-2.88 \pm 0.32$ \\
 & Best local risk & $-0.94 \pm 0.20$ & $-1.93 \pm 0.25$ & $-2.72 \pm 0.30$ \\
\midrule
 & Lower bound (oracle) & $-2.94 \pm 0.31$ & $-3.39 \pm 0.32$ & $-3.52 \pm 0.33$ \\
\midrule
\midrule
 & Ours & $\mathbf{-2.15 \pm 0.70}$ & $\mathbf{-3.97 \pm 0.83}$ & $\mathbf{-4.40 \pm 0.85}$ \\
 & Random & $-1.69 \pm 0.46$ & $-2.64 \pm 0.64$ & $-3.29 \pm 0.74$ \\
Lianjia & Largest sample & $-1.70 \pm 0.57$ & $-2.85 \pm 0.76$ & $-3.11 \pm 0.77$ \\
 & Best local risk & $-1.65 \pm 0.54$ & $-2.51 \pm 0.66$ & $-3.17 \pm 0.72$ \\
\midrule
 & Lower bound (oracle) & $-3.42 \pm 0.69$ & $-4.22 \pm 0.83$ & $-4.51 \pm 0.84$ \\
\bottomrule
\end{tabular}
\caption{Change in target excess risk for auxiliary privacy noise
variance $\sigma_{\mathrm{DP}}^2=0$.}
\label{tab:moderate_largest_selection_percentage_q_0}
\end{table}



\begin{table}[htbp]
\centering
\small
\begin{tabular}{llccc}
\toprule
Dataset & Method & 1 Client & 2 Clients & 3 Clients \\
\midrule
 & Ours & $\mathbf{-1.56 \pm 0.27}$ & $\mathbf{-2.52 \pm 0.32}$ & $\mathbf{-2.84 \pm 0.33}$ \\
 & Random & $-1.04 \pm 0.15$ & $-1.79 \pm 0.23$ & $-2.36 \pm 0.27$ \\
LSAC & Largest sample & $-0.95 \pm 0.20$ & $-2.07 \pm 0.27$ & $-2.25 \pm 0.28$ \\
 & Best local risk & $-0.80 \pm 0.19$ & $-1.67 \pm 0.26$ & $-2.35 \pm 0.30$ \\
\midrule
 & Lower bound (oracle) & $-2.53 \pm 0.29$ & $-3.02 \pm 0.32$ & $-3.19 \pm 0.33$ \\
\midrule
\midrule
 & Ours & $\mathbf{-1.95 \pm 0.63}$ & $\mathbf{-3.54 \pm 0.81}$ & $\mathbf{-4.08 \pm 0.82}$ \\
 & Random & $-1.60 \pm 0.44$ & $-2.51 \pm 0.61$ & $-3.14 \pm 0.71$ \\
Lianjia & Largest sample & $-1.63 \pm 0.56$ & $-2.85 \pm 0.76$ & $-3.11 \pm 0.76$ \\
 & Best local risk & $-1.72 \pm 0.59$ & $-2.60 \pm 0.67$ & $-3.09 \pm 0.74$ \\
\midrule
 & Lower bound (oracle) & $-3.33 \pm 0.69$ & $-4.08 \pm 0.80$ & $-4.36 \pm 0.83$ \\
\bottomrule
\end{tabular}
\caption{Change in target excess risk for auxiliary privacy noise
variance $\sigma_{\mathrm{DP}}^2=1$.}
\label{tab:moderate_largest_selection_percentage_q_1}
\end{table}


\begin{table}[htbp]
\centering
\small
\begin{tabular}{llccc}
\toprule
Dataset & Method & 1 Client & 2 Clients & 3 Clients \\
\midrule
 & Ours & $\mathbf{-1.02 \pm 0.20}$ & $\mathbf{-1.93 \pm 0.29}$ & $\mathbf{-2.31 \pm 0.29}$ \\
 & Random & $-0.78 \pm 0.12$ & $-1.42 \pm 0.18$ & $-1.94 \pm 0.23$ \\
LSAC & Largest sample & $-0.69 \pm 0.17$ & $-1.45 \pm 0.22$ & $-1.66 \pm 0.23$ \\
 & Best local risk & $-0.69 \pm 0.18$ & $-1.50 \pm 0.26$ & $-2.10 \pm 0.26$ \\
\midrule
 & Lower bound (oracle) & $-2.12 \pm 0.24$ & $-2.59 \pm 0.28$ & $-2.77 \pm 0.30$ \\
\midrule
\midrule
 & Ours & $\mathbf{-1.87 \pm 0.55}$ & $\mathbf{-3.14 \pm 0.77}$ & $\mathbf{-3.67 \pm 0.78}$ \\
 & Random & $-1.37 \pm 0.37$ & $-2.28 \pm 0.56$ & $-2.92 \pm 0.67$ \\
Lianjia & Largest sample & $-1.52 \pm 0.53$ & $-2.69 \pm 0.75$ & $-3.01 \pm 0.75$ \\
 & Best local risk & $-1.63 \pm 0.57$ & $-2.48 \pm 0.67$ & $-3.06 \pm 0.72$ \\
\midrule
 & Lower bound (oracle) & $-3.19 \pm 0.67$ & $-3.99 \pm 0.77$ & $-4.21 \pm 0.81$ \\
\bottomrule
\end{tabular}
\caption{Change in target excess risk for auxiliary privacy noise
variance $\sigma_{\mathrm{DP}}^2=4$.}
\label{tab:moderate_largest_selection_percentage_q_4}
\end{table}


\begin{table}[htbp]
\centering
\small
\begin{tabular}{llccc}
\toprule
Dataset & Method & 1 Client & 2 Clients & 3 Clients \\
\midrule
 & Ours & $-0.60 \pm 0.16$ & $\mathbf{-1.49 \pm 0.27}$ & $-1.90 \pm 0.28$ \\
 & Random & $-0.62 \pm 0.10$ & $-1.17 \pm 0.17$ & $-1.66 \pm 0.22$ \\
LSAC & Largest sample & $-0.52 \pm 0.16$ & $-1.05 \pm 0.20$ & $-1.32 \pm 0.21$ \\
 & Best local risk & $\mathbf{-0.66 \pm 0.19}$ & $-1.28 \pm 0.25$ & $\mathbf{-1.93 \pm 0.28}$ \\
\midrule
 & Lower bound (oracle) & $-1.87 \pm 0.24$ & $-2.30 \pm 0.27$ & $-2.48 \pm 0.29$ \\
\midrule
\midrule
 & Ours & $\mathbf{-1.40 \pm 0.54}$ & $-2.43 \pm 0.67$ & $\mathbf{-3.17 \pm 0.68}$ \\
 & Random & $-1.06 \pm 0.28$ & $-1.92 \pm 0.47$ & $-2.57 \pm 0.60$ \\
Lianjia & Largest sample & $-1.28 \pm 0.46$ & $-2.27 \pm 0.66$ & $-2.74 \pm 0.70$ \\
 & Best local risk & $-1.36 \pm 0.45$ & $\mathbf{-2.46 \pm 0.70}$ & $-2.97 \pm 0.75$ \\
\midrule
 & Lower bound (oracle) & $-3.09 \pm 0.67$ & $-3.83 \pm 0.77$ & $-4.08 \pm 0.80$ \\
\bottomrule
\end{tabular}
\caption{Change in target excess risk for auxiliary privacy noise
variance $\sigma_{\mathrm{DP}}^2=16$.}
\label{tab:moderate_largest_selection_percentage_q_16}
\end{table}

\end{document}